\documentclass[10pt]{article}
\newif\ifsubmission
\submissionfalse
\newif\ifshowack
\showackfalse
\ifsubmission
  \usepackage{tmlr}
\else
  \usepackage[preprint]{tmlr}
\fi
\usepackage{hyperref}
\usepackage{url}

\usepackage[T1]{fontenc}
\usepackage{microtype}
\usepackage{graphicx}
\usepackage{subcaption}
\usepackage{booktabs,multirow,makecell}
\usepackage{amsmath,amssymb,amsthm,mathtools,bm}
\usepackage{xcolor}
\usepackage{enumitem}
\usepackage{adjustbox}
\usepackage{algorithm}
\makeatletter\let\AND\@undefined\makeatother   %
\usepackage{algorithmic}
\usepackage{aliascnt}
\usepackage[capitalise,nameinlink]{cleveref}
\definecolor{sfgblue}{HTML}{2A78D6}
\definecolor{aoiorange}{HTML}{EB6834}
\definecolor{graytext}{HTML}{52514E}

\newtheorem{theorem}{Theorem}
\newaliascnt{proposition}{theorem}\newtheorem{proposition}[proposition]{Proposition}\aliascntresetthe{proposition}
\newaliascnt{lemma}{theorem}\aliascntresetthe{lemma}
\newaliascnt{corollary}{theorem}\newtheorem{corollary}[corollary]{Corollary}\aliascntresetthe{corollary}
\crefname{proposition}{Proposition}{Propositions}
\crefname{lemma}{Lemma}{Lemmas}
\crefname{corollary}{Corollary}{Corollaries}
\theoremstyle{definition}
\newtheorem{assumption}{Assumption}
\newtheorem{remark}{Remark}
\newtheorem{principle}{Principle}
\newtheorem{definition}{Definition}
\usepackage{tcolorbox}
\tcolorboxenvironment{principle}{colback=sfgblue!5,colframe=sfgblue!60,boxrule=0.5pt,arc=1mm,left=4pt,right=4pt,top=2pt,bottom=2pt}
\crefname{definition}{Definition}{Definitions}
\crefname{assumption}{Assumption}{Assumptions}
\crefname{principle}{Principle}{Principles}
\newcommand{\E}{\mathbb E}
\newcommand{\sfg}{SFG}
\newcommand{\sfgl}{SFG-L}
\newcommand{\sfgcut}{SFG-Cut}

\title{Discount Yourself: Self-inclusive Freshness-weighted Gossip for Decentralised Learning under Client Outages}

\author{\name Qiyuan Wang \email Qiyuan.Wang@glasgow.ac.uk \\
        \name Ke Xiao \email Ke.Xiao@glasgow.ac.uk \\
        \name Christos Anagnostopoulos \email Christos.Anagnostopoulos@glasgow.ac.uk \\
        \addr University of Glasgow}
\ifsubmission\def\pdfauthors{Anonymous}\else\def\pdfauthors{Qiyuan Wang, Ke Xiao, Christos Anagnostopoulos}\fi
\hypersetup{
  hidelinks,
  pdftitle={Discount Yourself: Self-inclusive Freshness-weighted Gossip for Decentralised Learning under Client Outages},
  pdfauthor={\pdfauthors},
  pdfsubject={Decentralised federated learning under client outages},
  pdfkeywords={decentralised learning, federated learning, client outages, staleness, aggregation, gossip learning}
}

\begin{document}
\maketitle

\begin{abstract}
Decentralised federated learning (DFL) trains personalised models on devices that average with their neighbours rather than
with a central server. Real devices, however, are often offline for hours or days, and a client returning from an outage can
recover only by aggregating the models its neighbourhood holds. We observe that the stalest of these models is usually the
client's \emph{own}, a case that existing staleness-aware rules, which discount only the models a client receives, do not
address. We formalise aggregation after an outage as estimating the state of a fully available network and derive three
principles. First, the receiver must be discounted like any neighbour. Every rule that weights it at least as much as each
neighbour has an error that grows without bound with the outage, whereas discounting all candidates by relative staleness keeps
the error bounded. Second, staleness must be measured as inherited lag rather than age, since local training resets a model's
age but not the lag it absorbed from stale neighbours. We introduce the \emph{effective lag}, which tracks this quantity at the
cost of one scalar per message. Third, the discount must be steep, because absorbed lag persists in every later round. These
principles define Self-inclusive Freshness-weighted Gossip (SFG), a family of drop-in aggregation rules with the communication
and computation of decentralised federated averaging, whose guarantees hold for the stale receiver itself. On three image
benchmarks and eight outage processes, including real smartphone availability traces, the lag-based members SFG-L and SFG-Cut
consistently outperform decentralised federated averaging and recent age-of-information rules, and ablations show that each
principle is necessary. The analysis also explains why tuned limited merging from gossip learning is effective.
\end{abstract}

\section{Introduction}
\label{sec:intro}

Federated learning (FL) trains models on data that never leaves the devices that hold it, such as phones, vehicles and edge
nodes \citep{bonawitz2019towards,kairouz2021advances}. These devices are available only part of the time. In the public FLASH
smartphone trace \citep{yang2021characterizing}, 30\% of devices never satisfy the usual ``screen off, charging and on WiFi''
participation rule. Devices that satisfy it at least 5\% of the time do so only 18\% of the time on average, and their outages
last 7.6 hours (Appendix~\ref{app:flash}). For on-device learning, outages are the norm rather than the exception.

Server-based FL has a mature theory of such partial participation
\citep{gu2021mifa,jhunjhunwala2022fedvarp,wang2024fedau,xiang2024fedawe}. Its remedies share one convenience. A server always
holds a fresh global model, and a returning client can simply download it.

\emph{Decentralised} federated learning (DFL) gives up this convenience. Clients average models with their neighbours on a
communication graph and personalise them by local training \citep{lian2017dpsgd,koloskova2020unified,sun2023dfedavgm,dai2022dispfl}.
There is no coordinator and no single point of failure. A client that returns after $\tau$ rounds offline, however, has nothing
fresh to download. It can only recover by averaging with its neighbours, who may be stale themselves. In the same round, it shares its own outdated model with them. The convergence of gossip learning is known to degrade with
the number of such inaccessible nodes and the duration of their absence \citep{liu2024gossipinaccessibility}.

\textbf{The stalest model in the room is your own.} Who, then, is stale when a client returns? Its neighbours have mostly kept
training while it was away. The stalest candidate in its aggregation is therefore usually the returning client itself. On
three real smartphone traces, this holds in 85--93\% of returns (Appendix~\ref{app:stalest}).

Existing staleness-aware rules were not built for this case (\cref{fig:teaser}). Age-of-information (AoI) aggregation, such as
DFL-AA (Decentralized Federated Learning with Adaptive AoI-weighted Aggregation), discounts the models a client \emph{receives} and keeps the receiver's own weight fixed \citep{hewa2026dflaa}. Gossip
learning weights models by update counts inherited on merging \citep{hegedus2019gossip}. Limited merging discards any model,
possibly the receiver's own, whose count lags by more than a threshold \citep{danner2023limited}. These rules are heuristics.
None of them says whether the receiver should be discounted, which measure of staleness to use, or how steeply to discount.
Without those answers, one cannot predict which rule will work, or why.

\begin{figure}[t]
\centering
\includegraphics[width=\textwidth]{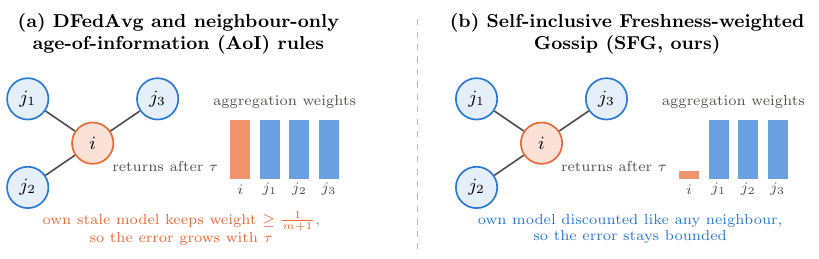}
\caption{\textbf{The stalest model in the room is your own.} A client $i$ returns from an outage of $\tau$ rounds to neighbours
that kept training. (a) Decentralised federated averaging (DFedAvg) and neighbour-only age-of-information (AoI) rules keep the
receiver's own weight at least as large as each neighbour's. Its stale model therefore always enters the aggregate, and the
error grows without bound in $\tau$ (\cref{thm:stale}). (b) Self-inclusive Freshness-weighted Gossip (\sfg), the rule family we
propose, treats the receiver as one more candidate and discounts it by its staleness relative to the freshest candidate. This
keeps every receiver's error bounded (\cref{thm:sfg}).}
\label{fig:teaser}
\end{figure}

\textbf{This paper.} We view aggregation after an outage as an estimation problem. From candidate models of different
staleness, the receiver should recover the collaborative state that a fully available network would have reached. This view
turns the three open questions into three principles, each backed by a formal result.
\begin{enumerate}[leftmargin=*,itemsep=1pt,topsep=2pt,label=(\roman*)]
\item \emph{Whom to discount?} Everyone, the receiver included, by staleness \emph{relative} to the freshest candidate
(\emph{discount yourself}).
\item \emph{By what measure?} By the lag a model has \emph{inherited} through averaging, which local training does not remove.
Age is the wrong measure, because local training resets it (\emph{measure inherited staleness}).
\item \emph{How steeply?} More steeply than a single aggregation suggests. The receiver carries what it absorbs into later
rounds (\emph{discount steeply}).
\end{enumerate}

\textbf{Our method.} From these principles we derive \textbf{Self-inclusive Freshness-weighted Gossip (\sfg)}, a family of
aggregation rules defined formally in \cref{def:sfg}. Every member weights each candidate, the receiver included, by a kernel of
its staleness relative to the freshest candidate. \textbf{\sfg{}} applies a geometric discount to model ages. \textbf{\sfgl{}}
applies it to effective lags. \textbf{\sfgcut{}} keeps only the candidates within a small lag margin of the freshest. \sfgl{} and
\sfgcut{} satisfy all three principles, and they are the rules we propose for practice. The family is a drop-in replacement for
the averaging weights of DFedAvg. It adds one scalar to each message, which is about one millionth of the CNN we train, and
needs no extra communication round and no extra local computation.

\textbf{Contributions and evidence.} Each claim below is tied to the result that supports it.
\begin{itemize}[leftmargin=*,itemsep=2pt,topsep=2pt]
\item \textbf{Self-inclusion separates bounded from unbounded error} (\cref{sec:selfdiscount}). Under a simple model of
staleness, the minimum-error weights are self-inclusive and depend only on relative staleness (\cref{prop:blue}). Every rule
that keeps the receiver's own weight at least as large as each neighbour's has an error that grows without bound in the
receiver's outage (\cref{thm:stale}). This class includes DFedAvg and every neighbour-only AoI rule. \sfg{}, in contrast, keeps
every receiver's error within a constant of the freshest candidate's (\cref{thm:sfg}).
\item \textbf{A new staleness statistic, the effective lag} (\cref{sec:lags}). We identify \emph{age laundering}. A model's age
is reset whenever its client trains, but the lag inherited from stale neighbours is not. The effective lag is computable
locally from one scalar per message and bounds the persistent error. It also provably separates cases that any age-based rule
must confuse (\cref{prop:lag}).
\item \textbf{A horizon analysis that explains steep discounting} (\cref{sec:horizon}). Persistent error survives local
training, so the error that matters is summed over future rounds. The optimal weights then steepen and, under nested staleness,
form a cut (\cref{prop:horizon}). Measured on real traces, the effective rate is 1.5--4 times the one-step rate.
\item \textbf{The \sfg{} family} (\cref{sec:method}). Three drop-in rules with the communication and computation of DFedAvg,
all covered by our error bounds. To our knowledge, they are the first staleness-aware aggregation rules for decentralised learning with a
guarantee that holds for the stale receiver itself, and they compose with a link-reliability correction and with catch-up
training.
\item \textbf{A large, paired evaluation} (\cref{sec:experiments}). We test on CIFAR-10, CIFAR-100 and FEMNIST (Federated
Extended MNIST) under five synthetic outage processes and three real smartphone traces, with 120 paired runs per rule.
\sfgcut{} improves mean personalised accuracy over DFedAvg by $+4.0$ percentage points (pp) in 119 of 120 runs. It closes
56--60\% of the gap to outage-free training, while the best AoI rule gains $+1.8$\,pp. With a link correction, the gain is
$+4.4$\,pp in 120 of 120 runs. With catch-up training, it reaches $+5.1$\,pp. Without any extra computation, \sfgcut{} gains 2.3
times as much as catch-up training does at $1.71\times$ the computation (\cref{fig:headline}a). Removing any single principle
costs 1.5--2.1\,pp, and a random cut of the same size gains nothing (\cref{fig:headline}c). In all ten settings of
\cref{fig:robust}, which include a six-day trace, learning-rate re-tuning, link failures, three alternative backbones and a
pretrained ResNet-18, the best \sfg{} rule gains 2.2--4.6 times as much as the strongest AoI baseline (\cref{fig:robust}).
\end{itemize}

Beyond the rules themselves, the analysis gives a unified account of prior work. Tuned limited merging from gossip learning is
strong because it satisfies all three principles. AoI rules are weaker because they measure staleness by age and discount the
receiver at best implicitly (\cref{tab:positioning}). All appendices referred to in the text follow the references.

\section{Related work}
\label{sec:related}

\textbf{Unavailability in server-based FL.} MIFA reuses stale updates of unavailable devices \citep{gu2021mifa}. FedVARP
reduces the variance of partial participation \citep{jhunjhunwala2022fedvarp}. FedAU estimates participation statistics online \citep{wang2024fedau}, building on a unified analysis of arbitrary client
participation \citep{wang2022unified}, and FedAWE amplifies the updates of returning clients \citep{xiang2024fedawe}.
Correlated participation has been modelled as a Markov chain \citep{sun2025debiasing,rodio2023cafed}. Asynchronous FL discounts
stale updates \citep{xie2019fedasync,nguyen2022fedbuff}, and age-based scheduling decides which clients upload
\citep{yang2020agebased}. All of these methods keep a fresh model at the server, and the server is never stale. The question of
how a stale receiver should weight \emph{itself} therefore never arises. We transplant two server-FL ideas to DFL as baselines:
model memory, the analogue of MIFA, and update amplification, as in FedAWE.

\textbf{Decentralised and personalised learning.} Decentralised stochastic gradient descent (SGD) converges at rates governed by
the mixing of the gossip matrices \citep{lian2017dpsgd,koloskova2020unified,kong2021consensus,vogels2022beyond}, also over
random links \citep{boyd2006randomized,kar2009distributed}. Asynchronous and sporadic variants tolerate bounded staleness or intermittent computation and communication
\citep{lian2018adpsgd,assran2019sgp,bornstein2023swift,liu2024aedfl,zehtabi2025dspodfl}. Decentralised and partial-model
personalisation methods \citep{sun2023dfedavgm,dai2022dispfl,shi2023dfedsam,liu2024dfedpgp,long2025dadpfl,
arivazhagan2019fedper,collins2021fedrep,li2021ditto} do not address outages. \sfg{} is complementary to all of them, because it
changes only the averaging weights. We evaluate it with local momentum, Ditto-style personal models and personal heads.

\textbf{Staleness-aware aggregation.} DFL-AA discounts received models by their age of information relative to the newest
neighbour \citep{hewa2026dflaa}. It corrects for lossy links by inverse-probability weighting (IPW) and keeps the receiver's
weight fixed. Gossip learning merges models in proportion to their update counts, and a merged model inherits the larger count
\citep{hegedus2019gossip,hegedus2021decentralized}. Limited merging discards the less-updated model, possibly the local one, when
counts differ by more than a threshold \citep{danner2023limited}. These are the closest prior rules.

Our analysis connects them to \sfg{}. Through the identity $n_i=t-\ell_i$ (\cref{sec:lags}), \sfgl{} is a smooth, self-inclusive
relative of limited merging. Its counts are inherited by averaging rather than by the maximum. \sfgcut{} applies limited
merging's hard margin to these mean-inherited counts. This change of inheritance is what places \sfgcut{} under our error bounds,
because a deficit in maximum-inherited counts does not bound a lag difference (\cref{prop:counts}). Effective lag is the analogue, for averaged
models, of the version age of information in gossip networks \citep{yates2021gossip,buyukates2022version}.

\textbf{Inverse-variance fusion.} Inverse-variance weights that include the receiver are classical in diffusion adaptation
and consensus \citep{tu2011optimal,zhao2012diffusion,sayed2014adaptation,kamal2013information,julier1997nondivergent}, where
variance comes from observation noise. We compare against the relative-variance combiner of \citet{zhao2012diffusion} and find
that, in personalised learning, its distance-based estimates mix staleness with data heterogeneity.

\textbf{Positioning.} \Cref{tab:positioning} summarises which principles each rule satisfies. Relative to gossip learning, our
contribution is not a better heuristic but an explanation. We show \emph{why} self-inclusion, inherited staleness and steepness
are needed. We derive a staleness statistic for which the resulting rules carry error guarantees. We then verify each principle
by ablation.

\begin{table}[t]
\centering
\caption{\textbf{Which rules satisfy which principle.} Self-inclusive: the receiver's own model is discounted by its
staleness. Inherited: staleness survives local training. Steep: a geometric discount with a small $\gamma$, or a tight margin.
Bound: the aggregation error of a returning receiver is bounded uniformly in its outage under our staleness models.
$^\ast$IPW raises the weight of rarely live links, which acts as an implicit self-discount for rarely available receivers
(\cref{sec:mechanism}). $^\dagger$Steep only with a tight threshold.}
\label{tab:positioning}
\small
\begin{tabular}{lcccc}
\toprule
Rule & Self-inclusive & Inherited & Steep & Bound \\
\midrule
DFedAvg \citep{sun2023dfedavgm} & -- & -- & -- & -- \\
Neighbour-only AoI & -- & -- & \checkmark & -- \\
DFL-AA \citep{hewa2026dflaa} & implicit$^\ast$ & -- & \checkmark & -- \\
Gossip-learning model age \citep{hegedus2019gossip} & \checkmark & max & -- & -- \\
Limited merging \citep{danner2023limited} & \checkmark & max & \checkmark$^\dagger$ & -- \\
\midrule
\sfg{} (ours) & \checkmark & -- & \checkmark & \checkmark\ (\cref{thm:sfg}) \\
\sfgl{} (ours) & \checkmark & mean & \checkmark & \checkmark\ (\cref{prop:lag}) \\
\sfgcut{} (ours) & \checkmark & mean & \checkmark & \checkmark\ (\cref{prop:horizon}) \\
\bottomrule
\end{tabular}
\end{table}

\section{Problem setting}
\label{sec:setup}

\textbf{Clients and outages.} $N$ clients sit on an undirected communication graph $\mathcal G=(V,E)$ and run synchronous
rounds $t=0,1,\dots,T-1$. Client $i$ holds a personal model $x_i^t\in\mathbb R^d$ and private data $\mathcal D_i$. In round $t$,
it is available with indicator $a_i^t\in\{0,1\}$, drawn from an exogenous outage process. We summarise a process by its
unavailability $u$, the long-run fraction of client-rounds spent offline. An unavailable client neither trains nor
communicates. Its model is frozen and continues to serve inference, and we evaluate this frozen model.

\textbf{Aggregation.} An available client, the \emph{receiver}, forms the candidate set of itself and its available neighbours,
\begin{equation}\label{eq:cands}
S_i^t=\{i\}\cup\{j\in\mathcal N(i):a_j^t=1\}.
\end{equation}
It aggregates
\begin{equation}\label{eq:agg}
x_i^{t+\frac12}=\sum_{j\in S_i^t}w_{ij}^t\,x_j^t,\qquad \sum_{j\in S_i^t}w_{ij}^t=1,\quad w_{ij}^t\ge0.
\end{equation}
It then runs $K$ local SGD steps on $\mathcal D_i$ from $x_i^{t+\frac12}$, which personalises by fine-tuning. Decentralised
federated averaging (DFedAvg, \citealp{sun2023dfedavgm}) uses uniform weights, $w_{ij}^t=1/|S_i^t|$. The only design choice we
study is the weights $w_{ij}^t$. Everything else is held fixed, including what is communicated and how much is computed.

\textbf{Age.} The age $\tau_j^t\in\mathbb N$ of model $x_j^t$ is the number of rounds since it was last trained. A client that
trained in the previous round has age $0$. A client returning from an outage of $\tau$ rounds has age $\tau$. Ages are a
deterministic function of the outage trace. Each client can piggyback its age on its model as an integer counter.

\textbf{Aggregation rules.} We compare rules of the form $w_{ij}^t\propto h_{ij}^t$ that use only information available at the
receiver. \emph{Neighbour-only} AoI rules set $h_{ii}=1$ and $h_{ij}=h(\tau_j)\le1$ for neighbours, for example
$h_{ij}=\gamma^{\tau_j}$. The other baselines are described in \cref{sec:related} and Appendix~\ref{app:details}.

\textbf{Metrics.} We report the mean and the 10th percentile of \emph{personalised} test accuracy over all $N$ clients at the
end of training. Each client is evaluated on a test set drawn from its own data distribution. The 10th percentile tracks the
clients that benefit least. We also report communication and local computation.

\section{What aggregation needs under outages}
\label{sec:theory}

We now derive the three principles. Throughout, we fix an available receiver $i$ and write $w_j$ for $w_{ij}^t$. We measure the
quality of an aggregate by its squared distance to the \emph{collaborative state}, the model that a fully available network
would hold. This estimation view gives optimal weights in closed form, against which any rule can be compared.

The analysis proceeds in three steps. \Cref{sec:selfdiscount} characterises the error of a single aggregation and isolates what
an outage changes. \Cref{sec:lags} replaces ages by a dynamic notion of staleness. \Cref{sec:horizon} extends the objective
across rounds. All proofs are in Appendix~\ref{app:proofs}, which also lists the statements we checked numerically on random
instances.

\subsection{A model of staleness}

\begin{assumption}[Staleness]\label{ass:staleness}
There is a reference trajectory $(c_r)_{r\ge0}$, the collaborative state a fully available network would follow. Every
candidate at round $t$ satisfies
\begin{equation*}
x_j^t=c_{t-\tau_j}+\varepsilon_j .
\end{equation*}
The dispersions $\varepsilon_j$ are zero-mean, mutually uncorrelated and uncorrelated with $(c_r)$, with
$\E\|\varepsilon_j\|^2=\sigma^2$. The state drifts with lag covariance
\begin{equation*}
\E\langle c_t-c_{t-a},\,c_t-c_{t-b}\rangle=\delta^2\kappa(a,b),
\end{equation*}
where $\kappa\ge0$, and $\varphi(a):=\kappa(a,a)$ is non-decreasing and unbounded with $\varphi(0)=0$.
\end{assumption}

The assumption separates the two forces that make a candidate imperfect. The dispersion $\sigma^2$ is the spread of fresh
models around the collaborative state, caused by data heterogeneity and SGD noise. The drift $\delta^2$ is how far the state
moves per round. Their ratio
\begin{equation}\label{eq:rho}
\rho=\delta^2/\sigma^2
\end{equation}
measures how costly one round of staleness is, relative to that spread. Two canonical cases are \emph{random-walk} drift (RW),
with mutually uncorrelated increments, $\kappa(a,b)=\min(a,b)$ and $\varphi(a)=a$, and \emph{persistent} drift, with
$\kappa(a,b)=ab$ and $\varphi(a)=a^2$.

Write $e_j=x_j^t-c_t$ for the error of candidate $j$. The candidate errors have covariance
\begin{equation}\label{eq:cov}
\Sigma_{jk}=\E\langle e_j,e_k\rangle=\sigma^2\big(\mathbf 1[j=k]+\rho\,\kappa(\tau_j,\tau_k)\big),
\end{equation}
so a candidate's mean squared error grows with its age:
\begin{equation}\label{eq:vtau}
v(\tau_j)=\sigma^2\big(1+\rho\,\varphi(\tau_j)\big).
\end{equation}
The model is deliberately simple. \Cref{sec:lags} makes staleness dynamic, and \cref{sec:experiments} tests the predictions.

\subsection{Principle 1: discount yourself}
\label{sec:selfdiscount}

\begin{proposition}[Minimum-error aggregation]\label{prop:blue}
The weights that minimise $\E\|\sum_{j\in S}w_jx_j^t-c_t\|^2$ subject to $\mathbf 1^\top w=1$ are
\begin{equation}\label{eq:blue}
w^\star=\frac{\Sigma^{-1}\mathbf 1}{\mathbf 1^\top\Sigma^{-1}\mathbf 1}.
\end{equation}
Under RW drift, moreover, the following hold.
\begin{enumerate}[nosep,leftmargin=*]
\item[(i)] \emph{Relative staleness.} $w^\star$ depends on the ages only through $\Delta_j=\tau_j-\tau_{\min}$, where
$\tau_{\min}=\min_{k\in S}\tau_k$. The minimum error is
\begin{equation*}
\delta^2\tau_{\min}+\frac{1}{\mathbf 1^\top\Sigma'^{-1}\mathbf 1},\qquad
\Sigma'_{jk}=\sigma^2\big(\mathbf 1[j=k]+\rho\min(\Delta_j,\Delta_k)\big).
\end{equation*}
\item[(ii)] \emph{Self-inclusion.} If at most one candidate is older than the youngest, then for every $j\in S$,
\emph{including the receiver $j=i$},
\begin{equation*}
w^\star_j\propto\frac{1}{1+\rho\Delta_j}.
\end{equation*}
\item[(iii)] \emph{No outages.} If all candidates have the same age, $w^\star$ is uniform, which is DFedAvg.
\end{enumerate}
\end{proposition}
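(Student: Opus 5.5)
The general formula is a standard equality-constrained quadratic program. Since $\sum_j w_j=1$, the aggregate error is $\sum_j w_j x_j^t-c_t=\sum_j w_j e_j$, so the objective equals $w^\top\Sigma w$. The matrix $\Sigma$ is positive definite: it is $\sigma^2 I$ plus $\sigma^2\rho K$, where $K_{jk}=\kappa(\tau_j,\tau_k)$ is a Gram matrix of the drift increments and hence positive semidefinite. The objective is therefore strictly convex. Setting the gradient of the Lagrangian $w^\top\Sigma w-2\lambda(\mathbf 1^\top w-1)$ to zero gives $\Sigma w=\lambda\mathbf 1$, and normalising yields \eqref{eq:blue}. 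The minimum value is $1/(\mathbf 1^\top\Sigma^{-1}\mathbf 1)$.

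For (i), I would use the RW identity $\min(\tau_j,\tau_k)=\tau_{\min}+\min(\Delta_j,\Delta_k)$. This gives $\Sigma=\Sigma'+\delta^2\tau_{\min}\mathbf 1\mathbf 1^\top$. For any feasible $w$ we have $w^\top\mathbf 1\mathbf 1^\top w=1$, so the objective becomes $w^\top\Sigma' w+\delta^2\tau_{\min}$. The constant does not affect the minimiser. Hence $w^\star=\Sigma'^{-1}\mathbf 1/(\mathbf 1^\top\Sigma'^{-1}\mathbf 1)$, which depends only on the $\Delta_j$, and the minimum error is $\delta^2\tau_{\min}+1/(\mathbf 1^\top\Sigma'^{-1}\mathbf 1)$. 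This rank-one cancellation is the key structural step. It is what makes ``relative staleness'' exact rather than approximate.

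For (ii), suppose at most one candidate $m$ has $\Delta_m>0$. Then $\min(\Delta_j,\Delta_k)=0$ unless $j=k=m$, so $\Sigma'$ is diagonal with entries $\sigma^2(1+\rho\Delta_j)$. Its inverse applied to $\mathbf 1$ gives $w^\star_j\propto 1/(1+\rho\Delta_j)$ for every $j$. Nothing in this argument singles out the receiver, so the formula holds for $j=i$ as well. In the case of interest, the stale candidate is $m=i$ itself.

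For (iii), all $\Delta_j=0$ gives $\Sigma'=\sigma^2 I$, so $w^\star$ is uniform. The only step needing care is the positive-definiteness of $\Sigma$. This rests on $\kappa$ being a genuine lag covariance, i.e.\ a Gram kernel, which I would state explicitly from \cref{ass:staleness}. Everything else is direct.
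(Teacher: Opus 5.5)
Your proposal is correct and follows the paper's proof essentially step for step: the Lagrange condition $\Sigma w=\lambda\mathbf 1$, the rank-one split $\Sigma=\delta^2\tau_{\min}\mathbf 1\mathbf 1^\top+\Sigma'$ that reduces to a constant on the affine set, and the diagonality of $\Sigma'$ in (ii) and (iii). The only differences are cosmetic. You obtain the split algebraically from $\min(\tau_j,\tau_k)=\tau_{\min}+\min(\Delta_j,\Delta_k)$, whereas the paper splits each drift $c_t-c_{t-\tau_j}$ at $\tau_{\min}$ into a common part and an uncorrelated remainder. Your remark on positive definiteness (the drift covariance is automatically a Gram matrix under the assumption) makes explicit what the paper leaves implicit.
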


Part (i) says that the drift since the freshest candidate was trained is shared by all candidates. No weighting can remove it,
so only relative staleness matters. Part (ii) says that the receiver's own model is just another candidate, discounted by its
staleness like any neighbour. Part (iii) says that optimal aggregation reduces to DFedAvg when nothing is stale, so any gain must
come from outages.
Parts (i) and (ii) both fail for AoI rules, which keep the receiver's weight fixed. This matters in practice. On real traces, a
returning client is strictly the stalest candidate in 85--93\% of returns (Appendix~\ref{app:stalest}). This is precisely the
case that neighbour-only discounting cannot handle.

\begin{theorem}[Stale receivers]\label{thm:stale}
Let receiver $i$ have age $\tau_i$ and $m\ge1$ live neighbours, and let \cref{ass:staleness} hold. Consider any non-negative rule
whose self-weight is at least each neighbour weight, $w_{ii}\ge w_{ij}$. Its aggregation error satisfies
\begin{equation}\label{eq:stalelb}
\E\big\|x_i^{t+\frac12}-c_t\big\|^2\;\ge\;\frac{v(\tau_i)}{(m+1)^2},
\end{equation}
which grows without bound in the outage length. This class includes DFedAvg and every neighbour-only discount
$w_{ij}\propto h(\tau_j)\le h_{ii}=1$. With uncorrelated errors and equally aged, fresher neighbours, such a discount is even
worse than uniform averaging (Appendix~\ref{app:proofs}).
\end{theorem}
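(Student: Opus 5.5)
The plan is to expand the aggregation error through the covariance \eqref{eq:cov} and then use only two facts: every entry of $\Sigma$ is non-negative, and the self-weight is forced to be at least $1/(m+1)$. The lower bound then comes from discarding every term except the receiver's own diagonal term.

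\textbf{Step 1 (expand the error).} Since $\sum_j w_j=1$, the aggregate error is $x_i^{t+\frac12}-c_t=\sum_{j\in S}w_j e_j$, where $e_j=x_j^t-c_t$. Hence
\[
\E\Big\|\sum_{j} w_j e_j\Big\|^2=\sum_{j,k\in S} w_j w_k\,\Sigma_{jk}.
\]
By \eqref{eq:cov}, $\Sigma_{jk}=\sigma^2\big(\mathbf 1[j=k]+\rho\,\kappa(\tau_j,\tau_k)\big)$. \cref{ass:staleness} gives $\kappa\ge 0$, and $\rho\ge0$, so every entry of $\Sigma$ is non-negative. Because the rule is non-negative, each product $w_jw_k\ge0$, so every summand is non-negative. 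Dropping all terms except $(j,k)=(i,i)$ gives
\[
\E\Big\|\sum_{j} w_j e_j\Big\|^2\ \ge\ w_i^2\,\Sigma_{ii}=w_i^2\,v(\tau_i),
\]
using \eqref{eq:vtau}.

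\textbf{Step 2 (bound the self-weight).} The receiver has $m$ live neighbours, so $|S|=m+1$. The rule has $w_{ii}\ge w_{ij}$ for every neighbour $j$, and the weights sum to one. Therefore
\[
1=\sum_j w_j\le (m+1)\,w_{ii},\qquad\text{so}\qquad w_{ii}\ge \frac{1}{m+1}.
\]
Substituting into Step 1 yields \eqref{eq:stalelb}. Since $v(\tau_i)=\sigma^2\big(1+\rho\,\varphi(\tau_i)\big)$ and $\varphi$ is unbounded and non-decreasing, the bound grows without bound in $\tau_i$ for fixed $m$.

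\textbf{Step 3 (membership of the class).} For DFedAvg all weights equal $1/(m+1)$, so $w_{ii}\ge w_{ij}$ holds with equality. A neighbour-only rule has $h_{ii}=1\ge h(\tau_j)=h_{ij}$, and normalising by the common factor preserves this ordering.

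\textbf{Step 4 (worse than uniform).} I read ``uncorrelated errors'' as $\Sigma$ diagonal. Let the neighbours share an age $\tau_0<\tau_i$ with variance $v_0:=v(\tau_0)$. Assume also that $v_0<v(\tau_i)$, which holds whenever $\varphi$ is strictly larger at $\tau_i$ than at $\tau_0$; monotonicity of $\varphi$ alone only gives $v_0\le v(\tau_i)$. A neighbour-only rule treats the equally aged neighbours symmetrically, so its weights are $(a,\tfrac{1-a}{m},\dots,\tfrac{1-a}{m})$. The error is then the convex quadratic
\[
f(a)=a^2 v(\tau_i)+(1-a)^2\,\frac{v_0}{m}.
\]
Its minimiser is
\[
a^\ast=\frac{v_0/m}{v(\tau_i)+v_0/m},
\]
and $a^\ast<\frac1{m+1}$ exactly when $v_0<v(\tau_i)$. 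Since $h\le1$, any such discount has $a\ge \tfrac1{m+1}$, which lies on the increasing branch of $f$. Hence $f(a)\ge f\big(\tfrac1{m+1}\big)$, the uniform error, with strict inequality whenever the discount is non-trivial, that is, whenever $h(\tau_0)<1$.

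The only delicate point is Step 1. Dropping the cross terms is legitimate only because the weights are non-negative and $\kappa\ge0$; I would state both explicitly, since the argument fails for signed weights. The remaining steps are routine.
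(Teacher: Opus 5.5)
Your proof is correct and follows the paper's: you discard the non-negative cross terms of $w^\top\Sigma w$ to get $w_{ii}^2v(\tau_i)$ and then use $w_{ii}\ge 1/(m+1)$, and your Step 4 is the paper's derivative argument in the unnormalised neighbour weight $g$, rewritten in the normalised self-weight $a=1/(1+mg)$. The extra assumption $v_0<v(\tau_i)$ is not needed: $v_0\le v(\tau_i)$ already gives $a^\ast\le 1/(m+1)$, and strict convexity of $f$ then makes every $a>1/(m+1)$ strictly worse than uniform.
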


The theorem is a negative result about an entire class of rules, not about one rule. No choice of neighbour discount can
repair a rule that refuses to discount the receiver. Inverse-probability weighting, as in DFL-AA, escapes the theorem. Raising
the weight of rarely live links acts as an implicit self-discount for a rarely available receiver (\cref{sec:mechanism}).

\paragraph{A self-inclusive rule.} \Cref{prop:blue} suggests weighting every candidate, the receiver included, by a decreasing
function of its \emph{relative} staleness $s_j$. Self-inclusive Freshness-weighted Gossip (\sfg) uses a geometric discount:
\begin{equation}\label{eq:sfg}
w_{ij}^t=\frac{\gamma^{\,s_j-s_{\min}}}{\sum_{k\in S_i^t}\gamma^{\,s_k-s_{\min}}},\qquad j\in S_i^t,
\end{equation}
with $s_{\min}=\min_{k\in S_i^t}s_k$ and $\gamma\in[0,1)$, using the convention $0^0=1$. The staleness $s_j$ is a scalar sent
with the model. It is the age $\tau_j$ for \sfg{} and the effective lag $\ell_j$ of \cref{sec:lags} for \sfgl{}. The geometric
form is robust. It keeps the error bounded for every sub-exponential drift law, whereas hyperbolic weights can fail under fast
drift (\cref{prop:robust} in Appendix~\ref{app:proofs}).

\begin{theorem}[SFG bounds the error of every receiver]\label{thm:sfg}
Let \cref{ass:staleness} hold with RW drift, and let $s=\tau$. For every candidate set and every configuration of ages,
\begin{align}
\E\big\|x_i^{t+\frac12}-c_t\big\|^2&\le\sigma^2\textstyle\sum_jw_j^2+\delta^2\big(\tau_{\min}+\sum_jw_j\Delta_j\big)\nonumber\\
&\le\sigma^2\textstyle\sum_jw_j^2+\delta^2\Big(\tau_{\min}+\dfrac{m}{e\ln(1/\gamma)}\Big).\label{eq:sfgbound}
\end{align}
The receiver's own age therefore enters only through $\tau_{\min}\le\tau_i$. \sfg{} coincides with neighbour-only AoI
discounting whenever the receiver is fresh ($\tau_i=0$). It coincides with DFedAvg whenever all candidates have the same age,
in particular without outages.
\end{theorem}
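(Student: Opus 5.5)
The plan is to expand the aggregate error with the covariance \eqref{eq:cov} and then bound each piece. Since $\sum_j w_j=1$, the aggregation error is $\sum_j w_j e_j$. Its mean squared norm is the quadratic form $w^\top\Sigma w$:
\begin{equation*}
\E\Big\|\sum_j w_j e_j\Big\|^2=\sigma^2\sum_j w_j^2+\delta^2\sum_{j,k}w_jw_k\min(\tau_j,\tau_k).
\end{equation*}
Here the dispersion term uses that the $\varepsilon_j$ are mutually uncorrelated and uncorrelated with $(c_r)$. The drift term uses RW drift, $\kappa(a,b)=\min(a,b)$.

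For the first inequality I use $\min(\tau_j,\tau_k)\le\tau_j$ (or the symmetric average) together with $\sum_k w_k=1$. This gives
\begin{equation*}
\sum_{j,k}w_jw_k\min(\tau_j,\tau_k)\le\sum_j w_j\tau_j=\tau_{\min}+\sum_j w_j\Delta_j .
\end{equation*}
This step needs no property of the weights beyond non-negativity and normalisation, so it holds for every candidate set and age configuration.

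For the second inequality I bound the weighted relative staleness of the geometric rule \eqref{eq:sfg} with $s=\tau$. Write $q=\gamma$. The normaliser satisfies $Z=\sum_k q^{\Delta_k}\ge1$, because the freshest candidate has $\Delta=0$ and $q^0=1$. Hence
\begin{equation*}
\sum_j w_j\Delta_j=\frac{1}{Z}\sum_j \Delta_j q^{\Delta_j}\le\sum_{j:\Delta_j>0}\Delta_j q^{\Delta_j}.
\end{equation*}
The candidate set has $m+1$ members, at least one of which attains $\Delta=0$. So at most $m$ terms in the last sum are nonzero. Each term is at most $\sup_{x\ge0}x q^{x}$. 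Setting the derivative to zero gives the maximiser $x=1/\ln(1/q)$ and the value $1/(e\ln(1/\gamma))$. Summing over at most $m$ terms yields \eqref{eq:sfgbound}. The case $\gamma=0$ needs separate handling: with the convention $0^0=1$, all weight sits on candidates with $\Delta=0$, so $\sum_j w_j\Delta_j=0$. This is consistent with the bound read as $1/\ln(1/\gamma)\to0$.

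The remaining claims are direct checks. Because $\tau_{\min}\le\tau_i$ and the rest of the bound does not involve $\tau_i$, the receiver's age enters only through $\tau_{\min}$. If $\tau_i=0$, then $\tau_{\min}=0$ and the receiver's unnormalised weight is $\gamma^0=1$, while each neighbour's is $\gamma^{\tau_j}$. This is exactly neighbour-only AoI discounting with $h(\tau)=\gamma^\tau$. If all ages are equal, every $\Delta_j=0$, so the weights are uniform, which is DFedAvg.

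The only mildly delicate step is the count of nonzero terms in the $\Delta$ sum. One must observe that the minimiser contributes $0$ even when it is the receiver, so at most $m$ terms remain and the factor is $m$ rather than $m+1$.
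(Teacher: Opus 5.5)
Your proof is correct and follows the paper's argument. The paper first splits off the common drift, $\mathcal E(w)=\delta^2\tau_{\min}+w^\top\Sigma'w$, and bounds $\min(\Delta_j,\Delta_k)\le\Delta_j$, which is equivalent to your direct bound $\min(\tau_j,\tau_k)\le\tau_j$ because $\min(\tau_j,\tau_k)=\tau_{\min}+\min(\Delta_j,\Delta_k)$; the second inequality (via $Z\ge1$, at most $m$ stale candidates, and $\max_x x\gamma^x=1/(e\ln(1/\gamma))$) and the special cases are handled exactly as in the paper.
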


\begin{corollary}[Exact errors for a returning client]\label{cor:exact}
Under RW drift, consider a receiver of age $\tau$ with $m$ fresh neighbours. In units of $\sigma^2$, its aggregation error is
\begin{equation}\label{eq:exact}
\underbrace{\frac{m+1+\rho\tau}{(m+1)^2}}_{\text{DFedAvg and AoI}},\qquad
\underbrace{\frac{m+\gamma^{2\tau}(1+\rho\tau)}{(m+\gamma^\tau)^2}}_{\text{\sfg}},\qquad
\underbrace{\Big(m+\frac1{1+\rho\tau}\Big)^{-1}}_{\text{optimum}}.
\end{equation}
The first grows linearly in $\tau$. The second tends to the optimum.
\end{corollary}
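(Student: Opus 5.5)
The plan is to compute the three errors directly from the covariance \eqref{eq:cov} specialised to this configuration. Take candidates $S=\{i\}\cup\{1,\dots,m\}$, with the receiver at age $\tau$ and the neighbours at age $0$. Under RW drift, $\kappa(a,b)=\min(a,b)$, so every off-diagonal entry involving a fresh neighbour vanishes: $\kappa(\tau,0)=\kappa(0,0)=0$. The covariance is therefore diagonal, $\Sigma=\sigma^2\,\mathrm{diag}(1+\rho\tau,1,\dots,1)$. Any weight vector $w$ then has error $\sigma^2\big(w_i^2(1+\rho\tau)+\sum_{j\ne i}w_j^2\big)$. Each of the three formulas is this quadratic form evaluated at a particular $w$, so the proof reduces to identifying the weights.

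\emph{Step 1 (DFedAvg and AoI).} DFedAvg uses $w_j=1/(m+1)$ for all $j$, which gives $\big(m+1+\rho\tau\big)/(m+1)^2$. A neighbour-only AoI rule sets $h_{ii}=1$ and $h_{ij}=h(0)$ for fresh neighbours. To recover the same weights I will assume the normalisation $h(0)=1$, which holds for instance for $\gamma^{\tau_j}$ at $\tau_j=0$. The weights are then uniform and the error is identical.

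\emph{Step 2 (\sfg).} Here $s_{\min}=0$, the neighbours receive weight $\gamma^0=1$, and the receiver receives $\gamma^\tau$. The normalised weights are $w_i=\gamma^\tau/(m+\gamma^\tau)$ and $w_j=1/(m+\gamma^\tau)$. Substituting gives $\big(m+\gamma^{2\tau}(1+\rho\tau)\big)/(m+\gamma^\tau)^2$.

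\emph{Step 3 (optimum).} By \cref{prop:blue}, $w^\star\propto\Sigma^{-1}\mathbf 1$ and the minimum error is $1/(\mathbf 1^\top\Sigma^{-1}\mathbf 1)$. For our diagonal $\Sigma$ this is $\sigma^2\big(m+1/(1+\rho\tau)\big)^{-1}$. Equivalently, part (ii) of \cref{prop:blue} applies, since only one candidate is older than the youngest.

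\emph{Limits.} The DFedAvg/AoI expression is affine in $\tau$ with slope $\rho/(m+1)^2>0$. For \sfg, with $\gamma<1$ we have $\gamma^{2\tau}(1+\rho\tau)\to0$ and $\gamma^\tau\to0$, so the error tends to $1/m$. The optimum also tends to $1/m$, because $1/(1+\rho\tau)\to0$, so their difference vanishes.

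The only delicate point is Step 1's claim that AoI coincides with DFedAvg. It needs the convention $h(0)=h_{ii}=1$. For a general $h$ with $h(0)<1$, the AoI rule would give the receiver even more relative weight, and the error would be at least the DFedAvg value (the remark after \cref{thm:stale}). So I would either state the normalisation explicitly or phrase the first expression as the error for DFedAvg and for any normalised AoI rule. Everything else is direct substitution, so I expect no real obstacle beyond checking that the RW off-diagonals vanish.
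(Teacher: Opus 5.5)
Your proposal is correct and follows the paper's proof. Both arguments reduce to the diagonal covariance $\sigma^2\mathrm{diag}(1+\rho\tau,1,\dots,1)$ and evaluate the quadratic form at three sets of weights: uniform weights, weights proportional to $(\gamma^\tau,1,\dots,1)$, and the optimum from \cref{prop:blue}. Your checks of the normalisation $h(0)=1$ for AoI and of the two limit claims go beyond the paper's proof, which leaves both implicit.
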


\Cref{thm:stale,thm:sfg} together make the case for self-inclusion. It is what separates unbounded from bounded error. It also
costs nothing in the benign cases. When the receiver is fresh, \sfg{} reduces to AoI. When no one is stale, it reduces to
DFedAvg. For long outages, \sfg{} approaches the optimum (\cref{fig:mech}a).

The first inequality of \eqref{eq:sfgbound} holds for any non-negative weights. Every rule confined to candidates within an
age margin $L$ of the least stale, such as a hard cut, therefore obeys the bound with $L$ in place of $m/(e\ln(1/\gamma))$.
\Cref{prop:lag}(iv) is the analogue on lags. Cuts on maximum-inherited counts are not covered, because a count deficit does not
bound a lag difference (\cref{prop:counts}).

\begin{principle}[Discount yourself]\label{pr:self}
Weight every candidate, the receiver included, by a decreasing function of its staleness relative to the freshest candidate.
\end{principle}

\subsection{Principle 2: measure inherited staleness, not age}
\label{sec:lags}

Which staleness should the weights use? A model's age is reset whenever its client trains. The lag the model \emph{inherited}
by averaging with stale models is not reset. Local training moves the model forward at the pace of the collaborative state,
but it cannot recover progress that never reached the model.

Consider a fresh client that averages with a neighbour returning from a long outage. After one round of training, its age is
zero. Yet part of the neighbour's missing progress now lives in its model and will be passed on. We call this \emph{age
laundering}. To quantify it, we let the state evolve with the network.

\begin{assumption}[Dynamic staleness]\label{ass:dynamic}
The collaborative state is the counterfactual full-availability trajectory,
\begin{equation*}
c_{t+1}=c_t+\xi_t .
\end{equation*}
The increments are mutually uncorrelated and uncorrelated with the dispersions, with $\E\|\xi_t\|^2=\delta^2$. An available
client's model after aggregation and local training is
\begin{equation*}
x_i^{t+1}=c_{t+1}+p_i^{t+1}+\varepsilon_i^{t+1},\qquad p_i^{t+1}=\sum_jw_{ij}^tp_j^t .
\end{equation*}
The persistent error $p_i$ is inherited: training keeps pace with the state but does not remove inherited error. The dispersion
$\varepsilon_i^{t+1}$ is regenerated by local training. It has variance $\sigma^2$ and is independent of the past and of the other
clients' dispersions. An unavailable client is frozen, so $p_i^{t+1}=p_i^t-\xi_t$.
\end{assumption}

Under this assumption the persistent error is a weighted sum of missed increments,
\begin{equation}\label{eq:missed}
p_i^t=-\sum_{r<t}A_{ir}^t\,\xi_r,\qquad A^t_{ir}\in[0,1],
\end{equation}
where $A^t_{ir}$ is the fraction of increment $r$ that client $i$ has missed. The matrix $A$ evolves linearly. It is multiplied
by the aggregation matrix $\widetilde W^t$, which holds the weights of available clients and identity rows otherwise, and then
gains a new column for the round:
\begin{equation}\label{eq:Arec}
A^{t+1}=\widetilde W^tA^t,\qquad A_{\cdot t}^{t+1}=\mathbf 1[a^t=0].
\end{equation}

\begin{definition}[Effective lag]\label{def:lag}
The \emph{effective lag} of client $i$ is $\ell_i^t=\sum_rA_{ir}^t$, the number of rounds of collaborative progress its model is
missing.
\end{definition}

\begin{proposition}[Age laundering]\label{prop:lag}
Under \cref{ass:dynamic}, the following hold.
\begin{enumerate}[nosep,leftmargin=*]
\item[(i)] \emph{Lags are computable.} The lags obey
\begin{equation}\label{eq:lagrec}
\ell^{t+1}=\widetilde W^t\ell^t+\mathbf 1[a^t=0],
\end{equation}
so every client tracks its lag from its candidates' lags with one scalar per message. Moreover,
$\E\|p_i^t\|^2=\delta^2\sum_r(A_{ir}^t)^2\le\delta^2\ell_i^t$, with equality when the client's missed increments are not mixed.
\item[(ii)] \emph{Ages under-report lag, and age-blind rules pay for it.} Always $\tau_i^t\le\ell_i^t$. A client that absorbed
lag by averaging is reported as fresh after it trains. Moreover, take any $D>0$ and any graph with an edge $a$--$b$ such that $a$
has a neighbour not adjacent to $b$. Then $a$ and $b$ can have the same ages and availability histories but lags $vD$ and $D$,
where $v<1$ is the weight the rule gives stale candidates at their return. Consider any rule whose weights depend only on the
candidates' roles, ages and availability histories. In one of the two aggregations of $a$ and $b$ with each other, it incurs
persistent error at least $\delta^2(1-v)D/4$ more than keeping only the less-lagged candidate.
\item[(iii)] \emph{Averaging does not remove lag.} If the aggregation weights are doubly stochastic, then
\begin{equation*}
\textstyle\sum_i\ell_i^{t+1}=\sum_i\ell_i^t+|\{i:a_i^t=0\}|.
\end{equation*}
Doubly stochastic averaging redistributes the lag injected by outages. It does not remove it.
\item[(iv)] \emph{SFG on lags.} With $s=\ell$ in \eqref{eq:sfg}, every available receiver's aggregation error is at most
\begin{equation*}
\sigma^2\textstyle\sum_jw_j^2+\delta^2\Big(\ell_{\min}+\dfrac{m}{e\ln(1/\gamma)}\Big),
\end{equation*}
uniformly in its own lag.
\item[(v)] \emph{Only the persistent error carries over.} The aggregation error splits into two terms. The dispersion term
$\sigma^2\sum_jw_j^2$ is regenerated by local training in the next round. The persistent error is inherited by every later
round and satisfies
\begin{equation}\label{eq:pers}
\E\Big\|\sum_jw_jp_j\Big\|^2=\delta^2\,w^\top Gw,\qquad G=AA^\top .
\end{equation}
Always $0\le G_{jk}\le \Lambda_{jk}:=\min(\ell_j,\ell_k)$. Equality holds when the missed increments are nested, that is, when
the rows $A_j$ lie in $\{0,1\}$ with supports ordered by inclusion.
\end{enumerate}
\end{proposition}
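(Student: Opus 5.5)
Most parts follow from the linear recursion \eqref{eq:Arec} for the missed-increment matrix $A^t$, so I would first establish \eqref{eq:missed} and $A^t_{ir}\in[0,1]$ by induction on $t$. Under \cref{ass:dynamic}, an available client has $p_i^{t+1}=\sum_j w_{ij}^tp_j^t$, and a frozen client has $p_i^{t+1}=p_i^t-\xi_t$. Hence $A^{t+1}=\widetilde W^tA^t$ with the new column $\mathbf 1[a^t=0]$. Entries stay in $[0,1]$ because each row of $\widetilde W^t$ is a probability vector. For (i), summing over $r$ and using $\widetilde W^t\mathbf 1=\mathbf 1$ on the old columns gives \eqref{eq:lagrec}. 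The receiver needs only its candidates' $\ell_j^t$ and its own weights, so one scalar per message suffices. Since the increments are uncorrelated with second moment $\delta^2$, we get $\E\|p_i^t\|^2=\delta^2\sum_r(A^t_{ir})^2\le\delta^2\sum_rA^t_{ir}$. Equality holds exactly when every entry is $0$ or $1$, i.e.\ when the missed increments are not mixed. For (v), write the aggregate error as $\sum_jw_jp_j+\sum_jw_j\varepsilon_j$. The cross term vanishes by independence, the dispersion term is $\sigma^2\sum_jw_j^2$, and the persistent term is $\delta^2\sum_r(\sum_jw_jA_{jr})^2=\delta^2w^\top AA^\top w$. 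Then $G_{jk}=\sum_rA_{jr}A_{kr}\ge0$, and $G_{jk}\le\sum_r\min(A_{jr},A_{kr})\le\min(\ell_j,\ell_k)$ since all entries lie in $[0,1]$. Equality requires $A_{jr}A_{kr}=\min(A_{jr},A_{kr})$ for every $r$ (forcing $0/1$ entries wherever both are nonzero) together with nested supports, which is the stated condition. Part (iii) is immediate: summing \eqref{eq:lagrec} over $i$, column sums of $\widetilde W^t$ equal $1$, and frozen clients' identity rows keep the matrix doubly stochastic.

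For (iv), I would repeat the first inequality of \cref{thm:sfg} with lags in place of ages. From (v) and $G\le\Lambda$ entrywise with $w\ge0$, we have $w^\top Gw\le\sum_{j,k}w_jw_k\min(\ell_j,\ell_k)\le\sum_{j,k}w_jw_k\ell_j=\sum_jw_j\ell_j=\ell_{\min}+\sum_jw_j\Delta_j$, where $\Delta_j=\ell_j-\ell_{\min}$. Next, the geometric weights give $\sum_jw_j\Delta_j\le\sum_{j\ne j_{\min}}\Delta_j\gamma^{\Delta_j}\le m\max_{x\ge0}x\gamma^x=m/(e\ln(1/\gamma))$, because the denominator is at least $1$. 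This bound does not involve the receiver's own lag.

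For (ii), note first that $\tau_i^t\le\ell_i^t$. A client offline for the last $\tau$ rounds has $A_{ir}=1$ for each of those rounds: the new column is $1$, and frozen rows are identity. The main work is the construction. Let $c$ be a neighbour of $a$ that is not adjacent to $b$, and let $c$ suffer an outage of length $D'$. Then $c$ returns and aggregates with $a$ alone, $b$ and the others being offline for that round. This leaves $a$ with lag $vD'$, where $v$ is the weight the rule gives $c$. In a second scenario, with identical ages and availability histories for $a$ and $b$, the lag is routed to $b$ instead. Alternatively, I would tune $D'$ in each scenario so that one of $a,b$ has lag $vD$ and the other $D$. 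Matching the histories of $a$ and $b$ exactly is the delicate step; I would first try placing symmetric outages on other neighbours, fixing the ages by a common last-training round. The rule then sees the same data in both scenarios, so it assigns the same weight $w$ to $a$ in both. The persistent error with nested missed increments is $\delta^2 w^\top\Lambda w$. Compared with keeping only the less-lagged candidate, the excess is at least $\delta^2(1-w)\,\cdot$(lag gap) in one scenario and at least $\delta^2 w\,\cdot$(lag gap) in the other, with the gap $(1-v)D$. The larger of the two is at least half of their sum, and after accounting for the quadratic form this yields the $\delta^2(1-v)D/4$ constant.
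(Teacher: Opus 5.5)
Your arguments for (i), (iii), (iv), (v) and for $\tau_i^t\le\ell_i^t$ match the paper's and are fine. There is one harmless slip in (v): equality in $G_{jk}\le\min(\ell_j,\ell_k)$ does not force $0/1$ entries (e.g.\ $A_j=(\tfrac12,0)$, $A_k=(1,1)$), but only sufficiency of nesting is claimed.

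The genuine gap is the second claim of (ii). You leave the decisive step, matching the histories of $a$ and $b$, open, and the construction you actually describe does not supply it.
\begin{itemize}
\item If $b$ is offline while $c$ returns to $a$, the availability histories of $a$ and $b$ differ. A history-dependent rule can then tell them apart.
\item Nor can $c$'s lag be routed to $b$ instead. Since $c$ is not adjacent to $b$, its lag reaches $b$ only through $a$, and the hypothesis gives $b$ no private neighbour (think of the path $c$--$a$--$b$).
\item Keeping $b$ available does equalise the histories, but then $c$'s outage yields lags $uD'$ and $0$, with $u$ the weight $a$ gives $c$. That proves a different bound, $\delta^2u^2D'/4$. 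It vanishes for steep rules ($u\to0$), which are exactly the rules the statement's gap $(1-v)D$ targets.
\end{itemize}

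The missing idea is to make $a$ and $b$ the stale ones and use $a$'s private neighbour as a \emph{fresh} source.
\begin{itemize}
\item Let $a$ and $b$ start at lag $0$, be offline in the same $D$ rounds, and return together.
\item In the return round, keep every other neighbour of $b$ (common ones included) offline, and keep $a$'s private neighbours live at lag $0$. Then $b$ averages only with the equally stale $a$ and keeps the row $\mathbf 1_D$. Client $a$ gives total weight $v<1$ to the two stale candidates and gets the row $v\mathbf 1_D$.
\item After training, both have age $0$ and identical histories, with lags $vD$ and $D$. In the next round, isolate the pair.
\end{itemize}
No second scenario is needed. A role-based rule must give the same weights $(w,1-w)$ to (receiver, neighbour) in both aggregations of this one configuration.

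Your error step also needs repair.
\begin{itemize}
\item The rows are not $0/1$, so $G\ne\Lambda$: $G_{aa}=v^2D$ but $\Lambda_{aa}=vD$. Since $G\le\Lambda$ is only an upper bound, $w^\top\Lambda w$ does not lower-bound the excess.
\item Your claimed excesses, linear in $w$ and $1-w$, are false in general. As $v\to0$ the true excesses are $\delta^2(1-w)^2D$ and $\delta^2w^2D$.
\end{itemize}
Compute exactly instead. Receiver $a$'s aggregate row is $(v+(1-w)(1-v))\mathbf 1_D$, and receiver $b$'s is $(v+w(1-v))\mathbf 1_D$. The two coefficients sum to $1+v$, so one is at least $(1+v)/2$. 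Keeping only $a$ costs $\delta^2v^2D$, so the excess is at least $\delta^2D\bigl[(\tfrac{1+v}2)^2-v^2\bigr]=\delta^2D(1-v)(1+3v)/4\ge\delta^2(1-v)D/4$.
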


Part (ii) is the key separation. Some configurations cannot be distinguished by any age-based rule, yet call for different
weights. The cost of confusing them grows linearly with the outage. Part (iii) explains why plain averaging cannot ``wash out''
outages. It only spreads the injected lag across the network.

Age laundering has two consequences. First, models reported as fresh carry hidden lag, so a rate fitted to ages is too small
(\cref{sec:rate}). Second, rules should run on inherited staleness. The lag also has a simple count interpretation. Define
\begin{equation}\label{eq:counts}
n_i^t:=t-\ell_i^t .
\end{equation}
Then $n_i\leftarrow\sum_jw_{ij}n_j$ at aggregation, and $n_i\leftarrow n_i+1$ per training round. So $n_i$ counts the
training rounds a model has absorbed, and \sfgl{} weights candidates by $\gamma^{\,n_{\max}-n_j}$.

This links \sfgl{} to gossip learning. Gossip learning's model age is also an inherited count, but it is merged by the maximum
\citep{hegedus2019gossip}. Limited merging drops candidates whose count lags by more than a threshold \citep{danner2023limited}.
Inherited counts are why those rules outperform AoI rules. The maximum, however, discards exactly the information that matters:
a model that absorbed a stale neighbour inherits the larger count and looks fresh.

\begin{proposition}[Maximum-inherited counts do not certify lag]\label{prop:counts}
Let $n_j$ be update counts inherited by the maximum on merging. For every $D>0$ and every $v\in(0,1]$, there are two
availability histories with the following properties. At some round, a receiver $i$ has the same candidates, with the same
max-inherited counts and the same ages, in both histories. One candidate $j$ has the largest count in both. Its effective lag is
$0$ in the first history and $vD$ in the second, where $v$ is the weight $j$ gave its own model when it returned from an outage of
$D$ rounds. Consequently, any rule whose weights depend on candidates only through their roles, ages and max-inherited counts
gives $j$ the same weight $w_j$ in both histories. If $w_j>0$, its persistent error in the second history grows without bound in
$D$:
\begin{equation}\label{eq:countsbound}
\E\Big\|\sum_kw_kp_k\Big\|^2\;\ge\;\delta^2\,w_j^2\,v^2D .
\end{equation}
This includes limited merging with any margin $L$ and \sfg{} weights on counts. \sfgcut{} drops $j$ in the second history as
soon as $vD>\ell_{\min}+L$, and its persistent error obeys \cref{prop:horizon}(iii) in both.
\end{proposition}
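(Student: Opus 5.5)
The plan is an explicit construction on a small gadget: a receiver $i$, a candidate $j$, and a neighbour $k$ of $j$ that can be stale. The two histories differ only in whether $j$ ever absorbed a stale model, so max-inheritance hides the difference. Concretely, take $j$ adjacent to $i$ and let $j$ itself go offline for $D$ rounds in the second history. In the first it stays online. Max-inherited counts are chosen so that $j$'s count is the maximum in both histories. When $j$ returns in the second history, it merges with fresh neighbours whose counts are larger, so it inherits the larger count. It then trains, which resets its age to $0$ and adds one to the count. Under \cref{ass:dynamic} and the recursion \eqref{eq:lagrec}, its effective lag after return is $v\cdot D+(1-v)\cdot 0=vD$, where $v$ is the weight $j$ put on its own model. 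In the first history $j$ never went offline, so its lag is $0$.

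The next step is to make the receiver's view identical in both histories. This means the same candidate set, the same ages (all $0$ after $j$ trains for one round), and the same max-inherited counts. Pad the first history so the fresh neighbours' counts match, for instance by letting the counts be driven by the always-available neighbours, whose counts are $t$ in both histories. Since every max-inherited count equals the max over merged candidates plus training rounds, $j$'s count equals the neighbours' count in both histories. I would check that $j$ still has the largest count, possibly by letting $j$ train one extra round before the receiver's round, with the others equal or one less. The receiver's own lag and the other candidates' lags should be $0$ in both histories, so $\ell_{\min}=0$.

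Any rule that is a function of roles, ages and counts then assigns the same $w_j$ in both histories. For the bound \eqref{eq:countsbound}, I would use \cref{prop:lag}(v), $\E\|\sum_kw_kp_k\|^2=\delta^2w^\top Gw$ with $G=AA^\top\ge0$ entrywise. Drop all terms except $w_j^2G_{jj}$, using non-negativity of the weights and of $G$. This gives $w_j^2\sum_r(A^t_{jr})^2$. Row $j$ of $A$ equals $v$ on the $D$ missed columns, because only $j$ carried those increments and its freshly trained neighbours contribute $0$. Hence $G_{jj}=v^2D$, which yields $\delta^2w_j^2v^2D$, unbounded in $D$.

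For the consequences, limited merging with margin $L$ and \sfg{} on counts are rules of this type. They give the maximum-count candidate positive weight, since that candidate is never dropped and $\gamma^0=1$, so the bound applies. \sfgcut{} uses the mean-inherited lags $\ell$ instead. These differ across histories, and $j$ is dropped once $vD>\ell_{\min}+L$. Its persistent error then falls under \cref{prop:horizon}(iii), which I would invoke as stated.

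The main obstacle is the bookkeeping that makes the counts and ages exactly equal across the two histories while keeping $j$ the strict or weak maximum. I would fix this first by writing out a minimal timeline with $t=D+2$ rounds and checking each count.
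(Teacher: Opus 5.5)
Your construction is essentially the paper's. In both, $j$ is offline for $D$ rounds only in the second history, returns and merges with always-available lag-$0$ neighbours so that max-inheritance restores the full count, trains to age $0$, and then enters $i$'s candidate set; the bound follows from $G\ge0$, $w\ge0$ and $G_{jj}=v^2D$, exactly as in the paper.

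One aside is wrong but harmless. You cannot in general have $\ell_i=\ell_{\min}=0$ in the second history. Any client that is available and adjacent to $j$ at its return round absorbs part of $j$'s lag unless it gives $j$ zero weight; this covers the receiver, and also the $k$'s if they are neighbours of $i$. A receiver kept offline at that round instead has $\ell_i\ge1$. The statement never needs $\ell_{\min}=0$, however. The paper settles the bookkeeping by keeping $i$ offline at round $t_0$, taking $j$'s merge partners non-adjacent to $i$, and keeping $i$'s other neighbours offline at round $t_0+1$. Then $S_i^{t_0+1}=\{i,j\}$ and $j$'s count is strictly the largest.
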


\Cref{prop:counts} makes the difference between counts and lags precise. In a network where \emph{every} client runs limited
merging with margin $L$, a returning client keeps its own model only if $D\le L$, so the lag hidden behind a count is at most
$vL$. This explains why tuned limited merging performs well in our experiments (\cref{sec:main}). Its protection, however, is a
property of the whole network's history, which the counts cannot reveal. It can fail as soon as some clients merge differently,
for instance during a partial deployment, or when some clients run an older or a different aggregation rule. The lag-based bounds of \cref{prop:lag}(iv) and
\cref{prop:horizon}(iii) hold for every weight history, because the effective lag certifies itself. This is the substance of our
claim that \sfgcut{} carries a guarantee that limited merging does not.

\begin{principle}[Measure inherited staleness]\label{pr:lag}
Measure a candidate's staleness by the lag it has inherited through averaging, not by the time since it last trained.
\end{principle}

\subsection{Principle 3: discount steeply}
\label{sec:horizon}

\Cref{prop:lag}(v) shows that the single-aggregation objective of \cref{prop:blue} is myopic. Local training regenerates the
dispersion of an aggregate. Its persistent error, however, stays in every later model of the receiver.

To account for this, fix all other aggregations. Let $\Phi^s$ be the product of the aggregation matrices of rounds
$t+1,\dots,s-1$, with identity rows for clients that do not aggregate. Let $\psi_s=\Phi^s_{ii}$ be the weight that the receiver's
model at round $s$ puts on its own aggregate. We score the dispersion once, as in \cref{prop:blue}, and the persistent error at
every remaining round:
\begin{equation}\label{eq:J}
J(w)=\sigma^2\|w\|^2+\sum_{s=t+1}^{T}\E\|p_i^s\|^2 .
\end{equation}

\begin{proposition}[Horizon]\label{prop:horizon}
Under \cref{ass:dynamic},
\begin{equation}\label{eq:Jform}
J(w)=\sigma^2\Big[w^\top\big(I+H\rho\,G\big)w+2\rho\,g^\top w\Big]+\text{const}.
\end{equation}
Here $H=\sum_{s=t+1}^T\psi_s^2\ge1$ is the receiver's \emph{self-return}. The import term is $g_j=\langle A_j,B\rangle\ge0$, where
$B=\sum_s\psi_s\sum_{k\ne i}\Phi^s_{ik}A_k^{t+1}$ collects the missed fractions the receiver will import from others.
\begin{enumerate}[nosep,leftmargin=*]
\item[(i)] Without imported lag ($B=0$), $J$ is minimised over $\mathbf 1^\top w=1$ by the generalised least-squares (GLS)
weights for $I+H\rho\,G$. Under nesting, these are the weights of \cref{prop:blue} with $(\rho,\tau)$ replaced by $(H\rho,\ell)$.
They tend to uniform weights on the least-lagged candidates as $H\to\infty$.
\item[(ii)] Under nesting, the minimiser over the simplex is a cut. Its weights are non-increasing in lag. It keeps every
candidate below some lag, including the least-lagged candidate $j_0$. Every kept candidate has $\rho(g_j-g_{j_0})<w_{j_0}\le1$.
If imports are uniform on the candidates' missed increments ($B_r=\beta$), then $g_j-g_{j_0}=\beta\Delta_j$, and the margin is
below $1/(\rho\beta)$.
\item[(iii)] For $w\ge0$, with $\Lambda_{jk}=\min(\ell_j,\ell_k)$,
\begin{equation*}
w^\top Gw\le w^\top\Lambda w\le\textstyle\sum_jw_j\ell_j .
\end{equation*}
A rule confined to a lag margin $L$ therefore has $w^\top Gw\le\ell_{\min}+L$.
\end{enumerate}
\end{proposition}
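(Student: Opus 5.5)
The plan is to write every persistent error $p_i^s$, $s>t$, as a linear function of the current aggregation weights $w$, expand $J$ as a quadratic in $w$, and then treat the three parts separately. By \eqref{eq:missed} and \eqref{eq:Arec}, the receiver's missed-fraction row after aggregation is $A_i^{t+1}=\sum_j w_jA_j^t$ (with the new column equal to $0$, since $i$ is available). For $s>t$, with all other aggregations fixed,
\begin{equation*}
A_i^s=\psi_s\textstyle\sum_jw_jA_j+\sum_{k\ne i}\Phi^s_{ik}A_k^{t+1}+(\text{columns } r\ge t+1),
\end{equation*}
where the last columns do not depend on $w$. Using $\E\|p_i^s\|^2=\delta^2\|A_i^s\|^2$ from \cref{prop:lag}(i), since increments are uncorrelated, we expand the square. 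The $w$-quadratic term is $\delta^2\psi_s^2\,w^\top AA^\top w$. The cross term is $2\delta^2\psi_s\,w^\top A\,(\sum_{k\ne i}\Phi^s_{ik}A_k^{t+1})$. One point needs care: the rows $A_k^{t+1}$ for $k\ne i$ do not depend on $w$, because only $i$'s row changes in round $t$. Everything else is constant in $w$.

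Summing over $s$, adding $\sigma^2\|w\|^2$ and dividing by $\sigma^2$ gives \eqref{eq:Jform}, with $H=\sum_s\psi_s^2$ and $g_j=\langle A_j,B\rangle\ge0$ (all entries are non-negative). The bound $H\ge1$ comes from the $s=t+1$ term, since $\Phi^{t+1}=I$ is an empty product.

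\textbf{Part (i).} If $B=0$, then $J$ is $\sigma^2w^\top(I+H\rho G)w$ plus a constant. Minimising it over $\mathbf 1^\top w=1$ is the GLS problem of \cref{prop:blue}, with solution $(I+H\rho G)^{-1}\mathbf 1$ normalised. Under nesting, \cref{prop:lag}(v) gives $G=\Lambda=\min(\ell_j,\ell_k)$, so the matrix is exactly $\Sigma/\sigma^2$ from \eqref{eq:cov} under RW drift with $(\rho,\tau)\to(H\rho,\ell)$. The $H\to\infty$ limit follows from \cref{prop:blue}(i), which reduces to $\min(\Delta_j,\Delta_k)$. In that reduced matrix, candidates with $\Delta_j>0$ have entries scaling like $H$, so their weight vanishes. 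The remaining weights converge to the uniform GLS weights on the identity block.

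\textbf{Part (ii).} I would write the KKT conditions on the simplex. For each $j$, $(w+H\rho Gw)_j+\rho g_j=\lambda$ if $w_j>0$, and $\ge\lambda$ otherwise. Under nesting, sort candidates by lag, and show the weights are non-increasing by an exchange argument: swapping mass from a more-lagged to a less-lagged candidate does not increase $w^\top\Lambda w$ and does not increase $g^\top w$. Nested supports give $g$ non-decreasing in lag. From this I would derive the cut structure. Write $(\Lambda w)_j=\sum_k w_k\min(\ell_j,\ell_k)$. It is concave and non-decreasing in $\ell_j$, so the KKT slack, as a function of lag, crosses zero once.

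For the margin, compare the stationarity conditions for a kept $j$ and for $j_0$. Under nesting, $(Gw)_j\ge(Gw)_{j_0}=\ell_{j_0}$, which gives $w_j+\rho(g_j-g_{j_0})\le w_{j_0}$. Hence $\rho(g_j-g_{j_0})<w_{j_0}\le1$, with strict inequality since $w_j>0$. If $B_r=\beta$, then $g_j=\beta\ell_j$ for $0/1$ nested rows, which gives the stated margin. The main obstacle is making the single-crossing argument rigorous when $H\rho G$ and $g$ interact. If the exchange argument stalls, I would fall back on Lagrangian monotonicity applied directly to the nested ordering.

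\textbf{Part (iii).} This is elementwise. By \cref{prop:lag}(v), $0\le G_{jk}\le\Lambda_{jk}$, and with $w\ge0$ this gives $w^\top Gw\le w^\top\Lambda w$. Next, $\min(\ell_j,\ell_k)\le\ell_j$ gives $\sum_{jk}w_jw_k\Lambda_{jk}\le\sum_jw_j\ell_j\sum_kw_k=\sum_jw_j\ell_j$. Finally, a rule whose support lies within $\ell_j\le\ell_{\min}+L$ gives at most $\ell_{\min}+L$.
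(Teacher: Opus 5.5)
Your proposal is correct. For the expansion of $J$, for the margin bound in (ii) (comparing stationarity at a kept $j$ with $j_0$, using $(\Lambda w)_{j_0}=\ell_{\min}$), and for part (iii), it is the paper's argument. Two steps take a different route, and each deserves a remark.

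\emph{The limit in (i).} You use that $I+H\rho\Lambda'$ is block diagonal, with the identity on the candidates with $\Delta_j=0$, and let the inverse of the positive block annihilate $\mathbf 1$. The paper instead writes $w^\top\Lambda'w=\int_0^\infty\big(\sum_{j:\Delta_j>s}w_j\big)^2ds$ and takes the limit of a quadratic-penalty problem. Its limit is the minimum-norm point of $\{\mathbf 1^\top w=1,\ w^\top\Lambda'w=0\}$, where the second condition forces each group of equal positive $\Delta$ to have zero total weight.

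Your route gives the limit explicitly, but ``entries scaling like $H$'' is not enough on its own. Under nesting, equal lags mean identical rows, so the positive block $\Lambda'_{++}$ is singular when positive lags tie. You need that $\mathbf 1_+$, the all-ones vector on the positive-$\Delta$ candidates, is orthogonal to $\ker\Lambda'_{++}$. This holds because, by the same integral identity, the kernel consists of vectors summing to zero on each tie group. Only then does $(I+H\rho\Lambda'_{++})^{-1}\mathbf 1_+\to0$.

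\emph{Monotonicity in (ii).} Your exchange argument works if, for $\ell_k<\ell_j$ and $w_k<w_j$, you swap the two values. The swap leaves $\|w\|^2$ unchanged. It shrinks the non-negative tail sums on $[\ell_k,\ell_j)$ and hence lowers $w^\top\Lambda w$. It does not raise $g^\top w$. Together these contradict uniqueness of the strictly convex minimiser.

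The paper's route is shorter and removes the obstacle you flag. The KKT conditions give $w_j=\max\{0,(\nu-s_j)/2\}$ with $s_j=2H\rho(\Lambda w)_j+2\rho g_j$. Here $s_j$ is non-decreasing in lag, because $\Lambda_{kl}\le\Lambda_{jl}$ whenever $\ell_k\le\ell_j$ and $g$ is non-decreasing under nesting. The support $\{j:s_j<\nu\}$ is therefore a lower set in lag, and the cut follows at once. No single-crossing argument is needed, and concavity of $(\Lambda w)_j$ in $\ell_j$ plays no role; only its monotonicity does.
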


The intuition is that persistent error \emph{returns}. The receiver keeps its aggregate through its next outages and gets part
of it back from its neighbours. The horizon therefore counts persistent error $H$ times, while dispersion is counted once. Three
consequences follow.
First, the effective rate exceeds the one-step rate $\rho$, which steepens the optimal discount (part i). Second, when lag
absorbed now would add to lag imported later, the optimum truncates into a cut (part ii). Third, every rule confined to a margin
inherits an error bound (part iii).

\textbf{How steep, in practice?} Dispersion is not fully regenerated by local training either. If it is counted $H_\varepsilon$
times, the effective rate becomes, to first order,
\begin{equation}\label{eq:effrate}
\rho_{\mathrm{eff}}\approx\rho\,\lambda_G\,\frac{H}{H_\varepsilon},
\end{equation}
where $\lambda_G\approx1$ converts persistent differences into lag differences. Measured along \sfgl{} trajectories, this rate is
0.8--9, against a one-step estimate of 0.4--1.9 (Appendix~\ref{app:ext-theory}). The theory therefore establishes \emph{that} the
discount should be steep. A cut additionally needs the nesting and import conditions of part (ii), which fail on the long-outage
traces, so we select its margin empirically.

\begin{principle}[Discount steeply]\label{pr:steep}
Discount more steeply than a single aggregation suggests, down to a hard cut. The receiver carries the lag it absorbs into later
rounds.
\end{principle}

\section{The SFG family of aggregation rules}
\label{sec:method}

We now turn the principles into the method we propose. \Cref{def:sfg} states it formally, \cref{alg:sfg} gives one round,
and \cref{tab:family} lists its three members.

\begin{definition}[Self-inclusive Freshness-weighted Gossip, \sfg]\label{def:sfg}
Let every model carry a scalar staleness $s_j\ge0$, let $h:[0,\infty)\to[0,1]$ be a non-increasing kernel with $h(0)=1$, and let
$\kappa_j>0$ be optional count weights. An available receiver $i$ aggregates over its candidate set $S_i^t$, \emph{itself
included}, with
\begin{equation}\label{eq:sfgfamily}
w_{ij}^t=\frac{\kappa_j\,h\big(s_j-s_{\min}\big)}{\sum_{k\in S_i^t}\kappa_k\,h\big(s_k-s_{\min}\big)},
\qquad s_{\min}=\min_{k\in S_i^t}s_k,\qquad j\in S_i^t .
\end{equation}
The three members are
\begin{equation*}
\underbrace{s=\tau,\ h(\Delta)=\gamma^\Delta}_{\text{\sfg{}}},\qquad
\underbrace{s=\ell,\ h(\Delta)=\gamma^\Delta}_{\text{\sfgl{}}},\qquad
\underbrace{s=\ell,\ h(\Delta)=\mathbf 1[\Delta\le L],\ \kappa_j=1+t-\ell_j}_{\text{\sfgcut{}}},
\end{equation*}
with $\kappa\equiv1$ unless stated. Lags are inherited by \eqref{eq:lagupdate}.
\end{definition}

Two features separate \sfg{} from prior staleness-aware rules. The receiver's own weight is computed by the same kernel as every
neighbour's, so the receiver is discounted whenever it is stale (\cref{pr:self}). The kernel acts on staleness \emph{relative} to
the freshest candidate, so a uniformly stale neighbourhood is averaged as DFedAvg would average it, and only differences in
staleness change the weights. The members differ in the staleness they use (\cref{pr:lag}) and in how steeply they discount
(\cref{pr:steep}).

\begin{algorithm}[t]
\caption{Self-inclusive Freshness-weighted Gossip (one round, client $i$). \sfg{} uses ages ($s=\tau$). \sfgl{} and \sfgcut{}
use effective lags ($s=\ell$). Within the margin, the count weights $1+t-s_j$ of \sfgcut{} are uniform up to $O(L/t)$.}
\label{alg:sfg}
\begin{algorithmic}[1]
\REQUIRE model $x_i$, staleness $s_i$, discount $\gamma\in[0,1)$ ($0^0=1$) or margin $L$, local steps $K$
\IF{$i$ is available}
  \STATE send $(x_i,s_i)$ to, and receive $(x_j,s_j)$ from, live neighbours $j$
  \STATE $S\gets\{i\}\cup\{\text{live neighbours}\}$, \quad $s_{\min}\gets\min_{j\in S}s_j$
  \STATE $h_j\gets\gamma^{s_j-s_{\min}}$ (\sfg, \sfgl) or $h_j\gets(1{+}t{-}s_j)\,\mathbf 1[s_j-s_{\min}\le L]$ (\sfgcut), $j\in S$
  \STATE $w_j\gets h_j/\sum_{k\in S}h_k$, \quad $x_i\gets\sum_{j\in S}w_jx_j$
  \STATE \textbf{if} lags: $s_i\gets\sum_{j\in S}w_js_j$ \hfill\COMMENT{lag is inherited}
  \STATE run $K$ local SGD steps on $\mathcal D_i$ from $x_i$, \quad \textbf{if} \sfg: $s_i\gets0$
\ELSE
  \STATE $s_i\gets s_i+1$ \hfill\COMMENT{model frozen}
\ENDIF
\end{algorithmic}
\end{algorithm}

\begin{table}[t]
\centering
\caption{\textbf{The \sfg{} family.} All members discount the receiver like any neighbour and use staleness relative to the
freshest candidate. Principles refer to \cref{pr:self,pr:lag,pr:steep}.}
\label{tab:family}
\small
\begin{tabular}{lllll}
\toprule
Rule & Staleness $s_j$ & Unnormalised weight $h_j$ & Principles & Guarantee \\
\midrule
\sfg{} & age $\tau_j$ & $\gamma^{\,s_j-s_{\min}}$ & 1, 3 & \cref{thm:sfg} \\
\sfgl{} & effective lag $\ell_j$ & $\gamma^{\,s_j-s_{\min}}$ & 1, 2, 3 & \cref{prop:lag}(iv) \\
\sfgcut{} & effective lag $\ell_j$ & $(1+t-s_j)\,\mathbf 1[s_j-s_{\min}\le L]$ & 1, 2, 3 & \cref{prop:horizon}(iii) \\
\bottomrule
\end{tabular}
\end{table}

\textbf{\sfg{}} weights candidates geometrically by relative age. It satisfies \cref{pr:self} and is the simplest member. It
needs only an integer counter.

\textbf{\sfgl{}} replaces ages by effective lags. Each client keeps a real-valued lag $\ell_i$, initially 0, and sends it with
its model. After aggregating, it inherits the weighted mean of its candidates' lags. Every round offline adds 1:
\begin{equation}\label{eq:lagupdate}
\ell_i\leftarrow\sum_{j\in S_i^t}w_{ij}^t\,\ell_j\quad\text{(available)},\qquad
\ell_i\leftarrow\ell_i+1\quad\text{(offline)}.
\end{equation}

\textbf{\sfgcut{}} keeps only the candidates whose lag is within a margin $L$ of the smallest, the receiver included. It weights
them by their mean-inherited counts:
\begin{equation}\label{eq:cut}
w_{ij}^t\propto(1+t-\ell_j)\,\mathbf 1\big[\ell_j-\ell_{\min}\le L\big],\qquad j\in S_i^t .
\end{equation}
Within the margin, these weights are nearly uniform. \sfgcut{} is limited merging's hard threshold \citep{danner2023limited}
applied to effective lags instead of maximum-inherited counts. This substitution is what brings the cut under the bounds of
\cref{prop:lag,prop:horizon}.

\textbf{Efficiency.} \sfg{} is as cheap as DFedAvg in every resource that matters. Each message carries one extra scalar next to
a model of $d$ parameters, a relative overhead of $1/d$, which is about $10^{-6}$ for our CNN ($d=0.88$M) and below $10^{-7}$ for
ResNet-18 ($d=11.2$M). Computing the weights takes $O(|S_i^t|)$ operations, against the $O(|S_i^t|\,d)$ operations of averaging
the models themselves. \sfg{} needs no extra communication round, no global information, no stored copies of past models and no
extra local computation. It coincides with DFedAvg whenever no candidate is stale.

Because it changes only the weights in \eqref{eq:agg}, it plugs into any decentralised method that averages models. This
includes local momentum (DFedAvgM), Ditto-style personal models and personal heads.

\textbf{Extensions.} Two orthogonal additions compose with every member. The first, \emph{+\,IPW}, divides each neighbour's weight by an online estimate of the probability that its link is live, as
in DFL-AA \citep{hewa2026dflaa}. The receiver's own model has estimate 1. The second, \emph{+\,catch-up}, lets a returning client
run $K(1+\min(\tau_i,3))$ local steps. It trades extra computation for accuracy.

\textbf{Which rule to use.} Our default is \sfgcut{} with $L\in\{1,2\}$. It has the largest average gain of the deployable rules
we tested, and $L=2$ has a better worst case (\cref{sec:main}). \sfgl{} with $\gamma\in[0.2,0.5]$ is the smooth alternative and
is preferable with local momentum. Under heterogeneous availability, we recommend adding IPW. With it, \sfgl{} lifts every
availability quartile (\cref{sec:mechanism}). The discount $\gamma=0.5$ and the margin were fixed on a pilot seed. Neither was
tuned per dataset or per outage process.

\section{Experiments}
\label{sec:experiments}

Our experiments answer five questions. \textbf{Q1 (effectiveness):} do the \sfg{} rules improve personalised accuracy under outages, and by how much relative to AoI and
gossip-learning rules? \textbf{Q2 (necessity):} is each principle needed? \textbf{Q3 (mechanism):} why does self-inclusive
discounting work, and which clients benefit? \textbf{Q4 (theory meets practice):} does the theory predict the right staleness
measure and discount rate? \textbf{Q5 (robustness):} do the gains survive changes of protocol, model and network?

\subsection{Experimental setup}
\label{sec:expsetup}

\textbf{Training and data.} $N=100$ clients on a 6-nearest-neighbour geometric graph run 300 rounds of $K=10$ local SGD steps
(batch 32, learning rate 0.05). The main model is a convolutional neural network (CNN) with 0.88M parameters, and a pretrained
ResNet-18 with GroupNorm (ResNet-18-GN) serves as a larger-model check. We use CIFAR-10 and CIFAR-100 with Dirichlet label skew
($\alpha=0.3$), and FEMNIST with its natural writer split from the LEAF benchmark \citep{caldas2018leaf}. Each client is evaluated
on a test set drawn from its own distribution.

\textbf{Outages.} We use eight outage processes. Five are synthetic, with unavailability calibrated to $\pm0.005$:
independent and identically distributed (i.i.d.) outages at $u=0.6$, bursty two-state Gilbert--Elliott chains with mean outage
length 10 at $u\in\{0.6,0.8\}$, and heterogeneous per-client rates at $u\in\{0.6,0.77\}$.
Three come from real smartphone availability, in 10-minute rounds. FLASH \citep{yang2021characterizing} gives a relaxed regime
($u\approx0.62$) and a strict one ($u\approx0.82$, mean outage 7.6\,h). STUNner \citep{szabo2019stunner} gives a ``plugged in
and on WiFi'' regime ($u\approx0.82$, mean outage 8.6\,h). The real traces contain the long, correlated, diurnal outages that
synthetic processes miss (\cref{fig:traces}).

\textbf{Baselines and protocol.} We compare with DFedAvg, local training, server-FL remedies transplanted to DFL (model memory, the
analogue of MIFA, and FedAWE-style update amplification), the relative-variance combiner of diffusion adaptation, the AoI rules
(neighbour-only AoI, AoI\,+\,IPW and DFL-AA) and the gossip-learning rules (model age and limited merging). Random controls and
oracles with global information test specific hypotheses (Appendix~\ref{app:details}). All rules share data partitions, graph,
initialisation, outage trace and minibatch stream per seed, so every comparison is paired over seeds 1--5. Each rule is evaluated
on 24 dataset--outage cells and 120 paired runs, with $\gamma=0.5$ fixed on a pilot seed. Gains are in percentage points (pp) of
mean personalised accuracy over DFedAvg. Intervals are 95\% stratified $t$-intervals over the 15 dataset--seed units, and tests
are exact sign-flip permutation tests, Holm-corrected over 70 comparisons. When the two disagree, we call the difference
unresolved.

\textbf{What do outages cost?}\label{sec:anatomy} Undiscounted stale parameters are a first-order cost. A client that cannot train but keeps sharing
its frozen model costs DFedAvg \emph{more} than a full outage (10.3--12.6\,pp against 7.1--9.6\,pp at $u=0.8$), and \sfg{} removes
most of that cost (1.2--2.2\,pp). Outage structure matters little at a matched rate, with one exception that limits every
staleness-based rule. When neighbours fail and return together, a returning client finds no fresher candidate, and \sfg{}'s gain
falls from $+3.8$ to $+0.7$\,pp (Appendix~\ref{app:ext-anatomy}).

\subsection{Q1: the SFG rules deliver the largest gains}
\label{sec:main}

\Cref{fig:headline} summarises the answer and \cref{tab:main} gives the numbers. Every rule, the controls and the per-cell results
are in \cref{tab:mainfull,tab:cells,fig:main}.

\begin{figure*}[t]
\centering
\includegraphics[width=\textwidth]{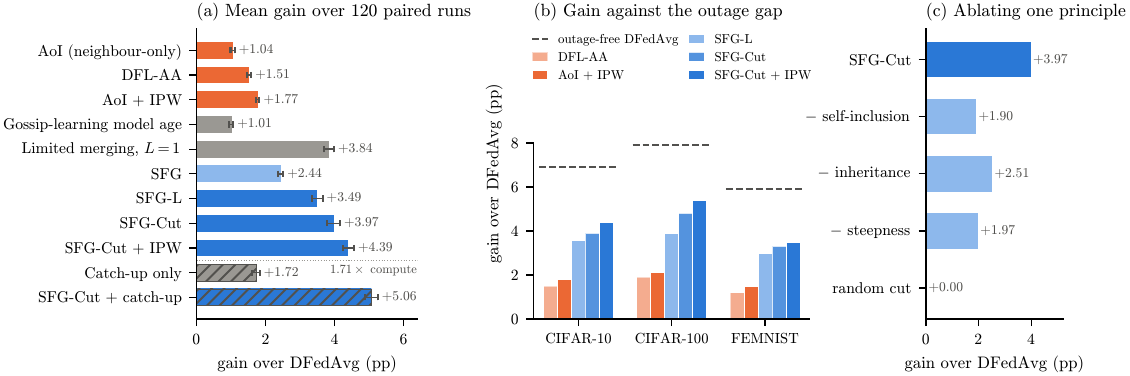}
\caption{\textbf{Effectiveness, efficiency and necessity.} (a) Gain in mean personalised accuracy over DFedAvg, with 95\%
stratified $t$-intervals over 120 paired runs (\cref{tab:main}). Orange: age-of-information rules. Grey: gossip learning and
catch-up training. Blue: the \sfg{} family. Hatched rules use $1.71\times$ the local computation of DFedAvg, and all others use
exactly as much. (b) Gain per dataset, averaged over the eight outage processes. The dashed line marks outage-free DFedAvg, so the
height of each bar relative to it is the share of the outage gap that the rule recovers. (c) Removing one principle from
\sfgcut{} ($L=1$): never dropping the receiver, cutting on ages instead of lags, or a loose margin ($L=10$). A random cut that
keeps the same number of candidates gains nothing (\cref{tab:inherit,tab:selfincl}).}
\label{fig:headline}
\end{figure*}

\textbf{Headline.} \sfgcut{} ($L=1$) improves mean personalised accuracy over DFedAvg by $+3.97$\,pp ($[3.79,4.15]$). It wins
119 of 120 paired runs and all 24 cells, and closes 56--60\% of the gap between DFedAvg with and without outages. The weakest
clients gain most, $+4.75$\,pp at the 10th percentile. \sfgl{} gains $+3.49$\,pp and even \sfg{}, on plain ages, gains
$+2.44$\,pp in all 120 runs. With the link correction of IPW, \sfgcut{} gains $+4.39$\,pp in 120 of 120 runs and closes 59--69\% of
the gap. The ranking is the same on every dataset (\cref{fig:headline}b).

\textbf{Accuracy for free.} These gains come at the communication and computation of DFedAvg, plus one scalar per message.
Spending computation is a much weaker remedy. Catch-up training, which gives returning clients up to four times the local steps,
gains $+1.72$\,pp at $1.71\times$ the computation. \sfgcut{} gains 2.3 times as much at no extra cost, and the two remedies combine
to $+5.06$\,pp.

\textbf{Baselines.} The AoI rules recover a quarter of the gap or less: neighbour-only AoI gains $+1.04$\,pp, DFL-AA $+1.51$\,pp and
AoI\,+\,IPW $+1.77$\,pp. The lag-based \sfg{} rules roughly double the best of them. Reusing stale information without discounting
it is harmful. Model memory costs $-15.6$\,pp and update amplification $-1.6$\,pp (Appendix~\ref{app:ext-main}).

\textbf{Gossip learning, explained.} Tuned limited merging on gossip learning's maximum-inherited counts gains $+3.84$\,pp at
$L=1$. Our principles explain why. It is self-inclusive, it uses inherited staleness, and with a tight threshold it is steep.
Without steepness, the gain shrinks to $+2.55$\,pp ($L=10$), and gossip learning's plain count weights gain only $+1.01$\,pp.
\sfgcut{} is 0.13\,pp above limited merging, an unresolved difference. What separates them is the guarantee. The effective lag
certifies itself, whereas the protection of limited merging rests on every client running it (\cref{prop:counts}).

\begin{table*}[t]
\centering
\caption{\textbf{Main results} on 3 datasets, 8 outage processes (five synthetic, three real) and 5 seeds. Accuracy is averaged
over the eight processes. Gain: paired difference to DFedAvg in percentage points, mean over 120 runs, in mean and
10th-percentile (p10) client accuracy, with a 95\% stratified $t$-interval over dataset--seed units. Wins: paired runs with a
positive gain. Cells: dataset--process cells with a positive mean gain. Local steps: computation relative to DFedAvg.
$^\dagger$Gossip-learning update counts, inherited by the maximum on merging. All 25 rules, including the controls and the
oracle, are in \cref{tab:mainfull}. Bold: best value in each column, separately for rules without and with catch-up training (extra computation). The oracle is excluded.}
\label{tab:main}
\small
\setlength{\tabcolsep}{3.9pt}
\adjustbox{max width=\linewidth}{%
\begin{tabular}{l ccc cc ccc}
\toprule
 & \multicolumn{3}{c}{Mean accuracy (\%)} & \multicolumn{2}{c}{Gain vs DFedAvg (pp)} & Wins & Cells & Local \\
\cmidrule(lr){2-4}\cmidrule(lr){5-6}
Method & CIFAR-10 & CIFAR-100 & FEMNIST & mean [95\% CI] & p10 & /120 & $>0$ & steps \\
\midrule
DFedAvg & 66.8 & 25.2 & 80.8 & -- & -- & -- & -- & 1.00 \\
\midrule
\multicolumn{9}{l}{\emph{Baselines}} \\
AoI (neighbour-only, ages) & 68.0 & 26.2 & 81.8 & +1.04 [0.98, 1.10] & +1.19 & 113 & 22 & 1.00 \\
DFL-AA (relative AoI + IPW) & 68.3 & 27.1 & 82.0 & +1.51 [1.45, 1.58] & +1.71 & 115 & 24 & 1.00 \\
AoI + IPW (absolute AoI) & 68.6 & 27.3 & 82.3 & +1.77 [1.72, 1.82] & +2.06 & 120 & 24 & 1.00 \\
Limited merging$^\dagger$, $L{=}1$ & 70.6 & 29.7 & 84.1 & +3.84 [3.70, 3.98] & +4.53 & 119 & 24 & 1.00 \\
\midrule
\multicolumn{9}{l}{\emph{Proposed: the \sfg{} family}} \\
\sfg{} (ages) & 69.4 & 28.0 & 82.9 & +2.44 [2.37, 2.51] & +2.77 & 120 & 24 & 1.00 \\
\sfgl{} (lags) & 70.4 & 29.1 & 83.8 & +3.49 [3.33, 3.65] & +4.09 & 118 & 24 & 1.00 \\
\sfgl{} + IPW (lags) & 71.0 & 30.2 & \textbf{84.3} & +4.21 [4.08, 4.33] & +4.91 & 120 & 24 & 1.00 \\
\sfgcut{}, $L{=}1$ & 70.7 & 30.0 & 84.1 & +3.97 [3.79, 4.15] & +4.75 & 119 & 24 & 1.00 \\
\sfgcut{}, $L{=}1$, + IPW & \textbf{71.2} & \textbf{30.6} & \textbf{84.3} & \textbf{+4.39} [4.23, 4.56] & \textbf{+5.17} & 120 & 24 & 1.00 \\
\midrule
\multicolumn{9}{l}{\emph{With catch-up training (extra computation)}} \\
Catch-up local steps & 68.8 & 26.9 & 82.4 & +1.72 [1.61, 1.83] & +2.11 & 117 & 24 & 1.71 \\
\sfgcut{} + catch-up & \textbf{72.0} & \textbf{31.0} & \textbf{85.1} & \textbf{+5.06} [4.88, 5.25] & \textbf{+6.18} & 120 & 24 & 1.71 \\
\bottomrule
\end{tabular}
}
\end{table*}

\subsection{Q2: each principle is necessary}
\label{sec:ablation}

\textbf{Removing any principle costs 1.5--2.1\,pp.} We start from \sfgcut{} and remove one principle at a time
(\cref{fig:headline}c, all $p_{\text{Holm}}<0.01$). Never dropping the receiver costs $-2.07$\,pp, cutting on ages instead of lags
$-1.46$\,pp, and a loose margin ($L=10$) $-2.00$\,pp. A placebo cut that keeps the same number of candidates at random gains
$+0.00$\,pp, so the benefit lies in \emph{which} candidates are kept, not in sparsity. Geometric weights confirm the first
principle. Applied to neighbours only, they gain 1.5--2.5\,pp less on ages, lags and counts alike (\cref{tab:selfincl}).

\textbf{Both halves of self-inclusion, and lags where staleness is widespread.} Discounting only stale neighbours or only the
receiver's own stale model each gives about 40\% of \sfg{}'s gain. Doing both gives more than the sum (\cref{fig:mech}c). Lags add
$+1.05$\,pp over ages ($p_{\text{Holm}}=0.004$), concentrated where many neighbours are stale at once. Under bursty outages at
$u=0.8$, \sfgl{} gains $+6.8$\,pp against \sfg{}'s $+3.8$\,pp (Appendix~\ref{app:ext-ablation}).

\subsection{Q3: why self-inclusive discounting works, and who benefits}
\label{sec:mechanism}
\begin{figure*}[t]
\centering
\includegraphics[width=\textwidth]{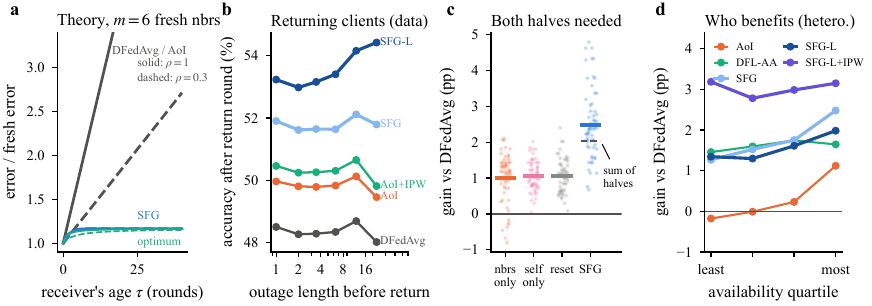}
\caption{\textbf{Why self-inclusive discounting works.} (a) Aggregation error of a returning client relative to a fresh one
(\cref{cor:exact}, six fresh neighbours). (b) Accuracy of returning clients after the local training of their return round,
against their outage length (27 runs per rule). (c) Decomposition on bursty $u{=}0.8$, heterogeneous $u{\approx}0.77$, and FLASH
relaxed and strict (60 paired runs). Discounting neighbours or oneself alone gives 40--42\% of \sfg{}'s gain. (d) Gain by
availability quartile under heterogeneous rates (30 paired runs, s.e.\ 0.12--0.32\,pp, \cref{tab:quartiles}).}
\label{fig:mech}
\end{figure*}

\textbf{Returning clients recover faster.} Right after aggregation, \sfg{}'s returning clients fit their own data \emph{worse} than
DFedAvg's, because discounting the stale model gives up its personalised layers. After the $K$ local steps of the same round,
however, their models are the best at every outage length (\cref{fig:mech}b, Appendix~\ref{app:returns}). Local training quickly
restores the personal part of a model but cannot restore stale shared features, and self-inclusive discounting exploits exactly
this asymmetry.

\textbf{The least available clients benefit too.} Their own model is usually the stalest. Under heterogeneous rates,
neighbour-only AoI does not help the least-available quartile ($-0.18$\,pp), as \cref{thm:stale} predicts, while \sfgl{}\,+\,IPW
gains $+2.8$ to $+3.2$\,pp in every quartile (\cref{fig:mech}d). On the real traces, \sfgl{}'s gain is flat across quartiles
(Appendix~\ref{app:ext-mechanism}).

\subsection{Q4: ages versus lags, and the discount rate}
\label{sec:rate}

\textbf{The weights derived from ages are too mild.} \Cref{prop:blue} also prescribes a \emph{rate}, through weights
$1/(1+\rho\Delta)$ with $\rho$ identifiable from pairwise distances. We implement this as precision-weighted gossip (PWG), which
estimates $\rho$ online from ages. It gains only $+1.48$\,pp, against $+2.44$\,pp for \sfg{} (\cref{tab:extra}). Two effects
predicted by our analysis explain the gap, and each accounts for about half of it (Appendix~\ref{app:rate}).

\textbf{Age laundering explains the first half.} Measured against effective lags instead of ages, distances between live models
grow 1.4--3.1$\times$ faster with staleness, in every rule--dataset combination we logged (\cref{tab:laglaw}). Feeding lags
instead of ages to the \emph{same} estimator (PWG-L) lifts the gain from $+1.48$ to $+2.44$\,pp in 102 of 120 paired runs, halfway
to \sfgl{} ($+3.49$\,pp).

\textbf{The horizon explains the second half.} The receiver carries its persistent error for $H\ge3$ rounds, and for 8--23 rounds
under long or frequent outages, while the dispersion it absorbs fades faster. The effective rate \eqref{eq:effrate} is
therefore 0.8--9 rather than the one-step estimate (\cref{tab:hmeasured,tab:carry}). Raising PWG-L's rate to $H\rho=3$, 10 and 30,
as \cref{prop:horizon} prescribes, lifts its gain to $+3.35$, $+3.56$ and $+3.59$\,pp (\cref{tab:horizon}).

\textbf{A deployable rule matches the oracle.} An oracle that knows the true persistent-error Gram matrix $G$ and minimises the
horizon objective gains $+3.87$\,pp, within 0.1\,pp of \sfgcut{}. A rule that uses one scalar per neighbour thus matches an
oracle with global information, which is, in our view, the strongest evidence that the three principles capture what matters.
The theory identifies the shape of good weights rather than a unique rule. In a pre-registered test on held-out seeds
(Appendix~\ref{app:prereg}), parameters measured per process do no better than constants and trail \sfgcut{}, which is why we
select the margin empirically.

\subsection{Q5: robustness and generality}
\label{sec:robust}

\begin{figure}[t]
\centering
\includegraphics[width=0.92\textwidth]{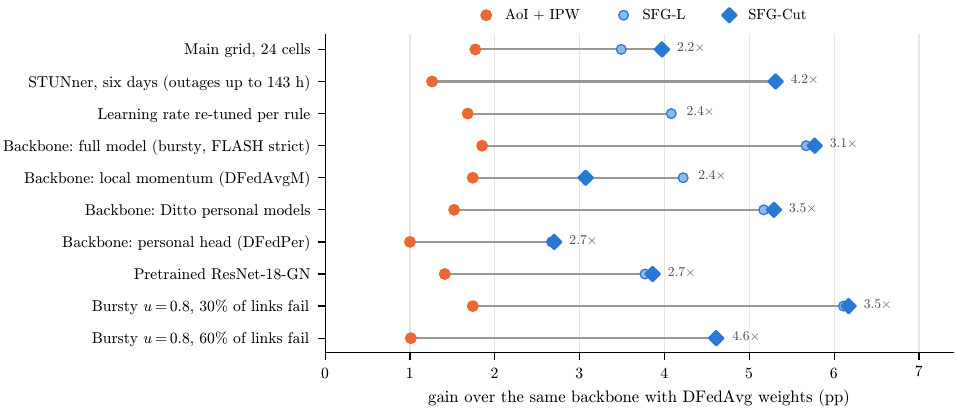}
\caption{\textbf{The advantage holds across protocols, backbones, models and networks.} Gain over the same backbone with DFedAvg
weights, for the strongest AoI baseline and the two lag-based \sfg{} rules. The label gives the ratio of the best \sfg{} gain to the
AoI\,+\,IPW gain. Backbone rows use bursty $u{=}0.8$ and FLASH strict. Link rows use bursty $u{=}0.8$. ResNet-18-GN is the mean
over the six cells of \cref{tab:resnet}. \sfgcut{} was not run in the learning-rate sweep. Sources: \cref{tab:main,tab:week,tab:lr,tab:backbones,tab:resnet,tab:links}.}
\label{fig:robust}
\end{figure}

\Cref{fig:robust} condenses this section. In each of ten settings, the best \sfg{} rule gains 2.2--4.6 times as much as the
strongest AoI baseline under the same protocol. The settings cover learning rates re-tuned per rule, a six-day real trace with
outages of up to 143 hours, three alternative backbones, a pretrained ResNet-18 and unreliable links. The gains also hold for
time-averaged accuracy, on Erd\H{o}s--R\'enyi and regular graphs, for $N\in\{50,300\}$ and for Dirichlet $\alpha\in\{0.1,1\}$
(Appendix~\ref{app:ext-robust}). They are smallest on a ring ($+1.6$\,pp), where each client has only two neighbours to choose
from.

\section{Discussion and limitations}
\label{sec:discussion}

\textbf{What we achieved.} We answered a question that decentralised learning on real devices cannot avoid: how a client that
has just returned from an outage should weight the models in front of it, its own included. The answer is three principles,
each backed by a formal result and confirmed by an ablation. They yield the \sfg{} family, whose lag-based members are the
strongest deployable rules in our benchmark at the cost of one scalar per message, and they explain why AoI rules help little and
why limited merging needs a tight threshold.

\textbf{When the rules help less.} The rules rely on fresher candidates being available. Under regional outages, neighbours fail and return together. A returning client then has no fresher candidate, and \sfg{}'s
gain falls to $+0.7$\,pp (\cref{sec:anatomy}). On a ring, the gain is $+1.6$\,pp. Under i.i.d.\ outages, most returns come after
one round, and the receiver is the stalest candidate in only 37\% of returns (Appendix~\ref{app:stalest}). Under heterogeneous
availability, \sfgcut{} helps the least-available quartile least among the self-inclusive rules. At $u=0.6$, \sfgl{} needs IPW to
match DFL-AA. We therefore recommend IPW whenever availability is heterogeneous.

\textbf{Limitations of the theory.} The analysis concerns the error of aggregation under stylised staleness models with
mean-inherited lags. It treats personal deviations as noise. It establishes that the discount must be steep. The cut structure,
however, requires nesting and import conditions that fail on the long-outage traces, so the margin $L$ is chosen empirically.
Convergence of training under staleness-dependent mixing remains outside the analysis, although the mixing matrices
are exogenous to the optimisation (\cref{rem:convergence}). For rules on maximum-inherited counts such as limited merging, we
show that the counts cannot certify a bound (\cref{prop:counts}), but we do not bound their error when every client runs them. Finally, the advantage of steep rules under i.i.d.\ outages persists against
a Ditto reference and remains unexplained.

\textbf{Limitations of the evaluation.}
The simulation is synchronous. The real availability traces are mapped at random onto a
synthetic geometric graph, whereas real deployments would couple availability with topology. The experiments cover image
classification with a small CNN, a pretrained ResNet-18 check, and up to 300 clients.

The pilot that documents the margin $L$ was run after the test-seed results for $L\in\{1,2\}$ were known. The margin was
therefore not chosen blind. We report both values throughout, and they differ by at most 0.16\,pp on average. Several close
comparisons also remain statistically unresolved, notably \sfgcut{} against tuned limited merging and against the oracle QP.

\textbf{Future work.} Several directions follow naturally. A convergence analysis of training under staleness-dependent mixing is the most important. Rules could also adapt the margin
online from the measured horizon. Asynchronous and event-driven protocols and availability-aware topologies are natural
extensions. Finally, the staleness scalar should be protected against manipulation by malicious clients.

\section{Conclusion}
\label{sec:conclusion}

This paper studied aggregation in decentralised learning when clients go offline, and identified the case that matters most:
after an outage, the stalest model a client aggregates is usually its own. Treating aggregation as estimation of the state of a
fully available network, we derived three principles that a good rule must satisfy. It must discount every candidate, the
receiver included, by staleness relative to the freshest candidate. It must measure staleness as the lag a model has inherited
through averaging, which local training does not remove, rather than as its age. It must discount steeply, because absorbed lag
persists in every later round. Each principle is supported by a formal result and confirmed by an ablation that removes it.

These principles define Self-inclusive Freshness-weighted Gossip (\sfg), a family of aggregation rules that changes only the
averaging weights of DFedAvg and adds one scalar per message. Its lag-based members, \sfgl{} and \sfgcut{}, are to our knowledge
the first staleness-aware rules for decentralised learning whose error guarantees hold for the stale receiver itself. Across
three datasets, eight outage processes and the robustness settings of \cref{fig:robust}, they recover about half or more of the
accuracy lost to outages, roughly twice as much as age-of-information rules, without additional communication rounds or local
computation. Combined with catch-up training, they recover up to three quarters.

The analysis also offers a unifying view of prior work. It explains why the inherited counts of gossip learning help and why
limited merging needs a tight threshold. The main open questions, discussed in \cref{sec:discussion}, are a convergence theory
for staleness-dependent mixing and a principled choice of the cut margin. More broadly, our results suggest that robustness to
intermittent availability depends less on reusing stale information than on knowing how stale each model is, including one's
own.

\section*{Broader impact statement}
This work makes decentralised learning more robust to the intermittent availability of real devices. This can broaden
participation in on-device learning. Freshness weighting can, however, benefit frequently available devices most. Availability
correlates with device quality, charging habits and connectivity. We therefore report per-availability-quartile effects.
\sfgcut{} helps the least-available clients least among the self-inclusive rules, while \sfgl{} with IPW lifts every quartile.
We recommend auditing these effects in deployment. Self-reported staleness metadata must also be protected against
manipulation.

\section*{Reproducibility statement}
All comparisons are paired over seeds 1--5, with data partitions, graph, initialisation, outage trace and minibatch stream
shared across rules (\cref{sec:experiments}). Every hyperparameter, baseline and trace-processing step is specified in
\cref{app:details}. Three files are provided as ancillary files with this submission: \texttt{comparisons.csv} lists every
comparison reported in the paper with its interval, win count and both tests; \texttt{horizon\_derive.py} is the
trace-only replay behind the horizon measurements of \cref{prop:horizon}; and \texttt{c26b\_prereg.md} is the
pre-registration of the predictive test. The FLASH trace is public \citep{yang2021characterizing}, and the STUNner trace is
available from its authors for non-profit use \citep{szabo2019stunner}.

\ifshowack
\subsubsection*{Acknowledgements}
This work was supported by the European Union's Horizon Europe research and innovation programme through the ELLIE project (grant agreement No.~101178099) and the TERRA project (grant agreement No.~101189962).
\fi

\bibliography{refs}
\bibliographystyle{tmlr}

\appendix

\section{Extended results for Sections 4 and 6}
\label{app:ext}

This appendix gives the full versions of analyses that the main paper summarises.

\subsection{The horizon in practice}
\label{app:ext-theory}
This subsection expands \cref{sec:horizon}.

\textbf{How steep, in practice?} Dispersion is not fully regenerated either. Suppose a fraction $\theta_\varepsilon^2$ survives
each round of local training. The dispersion is then counted $H_\varepsilon$ times instead of once, and $H_\varepsilon$ can be
below 1. To first order, the effective rate becomes
\begin{equation*}
\rho_{\mathrm{eff}}\approx\rho\,\lambda_G\,\frac{H}{H_\varepsilon},
\end{equation*}
where $\lambda_G\approx1$ converts persistent differences into lag differences (Appendix~\ref{app:proofs}). On our models,
$\theta_\varepsilon^2\approx0.8$, while nearly all persistent error survives. Along \sfgl{} trajectories, $H$ is 3.5--23 and
$H/H_\varepsilon$ is 1.5--4 (Appendix~\ref{app:carry}). The effective rate is thus 0.8--9, against a one-step estimate
$\hat\rho\approx0.4$--1.9. It is steeper than one-step weights, though far milder than a horizon that ignores the carry-over.

The import term of part (ii) is weaker. It truncates the weights only if lag absorbed now adds to the lag the receiver will
import. Its slope depends on the neighbours' rule and on the training length. It is negative on the long-outage traces, where
nesting fails. The theory therefore establishes \emph{that} the discount should be steep. A cut additionally requires the nesting
and import conditions of part (ii), so we select its margin empirically.

\subsection{Towards a convergence analysis}
\label{app:ext-convergence}
\begin{remark}[Towards a convergence analysis]\label{rem:convergence}
Our results bound the error of each aggregation, not the convergence of training. Two properties of \sfg{} make such an analysis
tractable. First, its weights depend only on the outage trace, the graph and the rule, never on the models (Appendix~\ref{app:hderive}).
The mixing matrices are therefore exogenous to the optimisation, as in analyses of decentralised SGD with time-varying gossip
\citep{koloskova2020unified}. Second, they coincide with DFedAvg's whenever no candidate is stale, so any analysis of DFedAvg
applies between outages. The remaining obstacle is that the matrices are row-stochastic but not doubly stochastic. Fresher models
receive more total weight than staler ones, which is the intended effect, but it means that the network average is not preserved.
Corrections for column sums, as in push-sum methods \citep{assran2019sgp}, are one route to a convergence guarantee. We leave this
analysis to future work.
\end{remark}

\subsection{What do outages cost?}
\label{app:ext-anatomy}
This subsection expands \cref{sec:anatomy}.

Before comparing rules, we ask where the damage comes from.

\textbf{The rate matters, the structure does not.} At matched unavailability, \emph{which} clients fail together barely changes
DFedAvg's loss. The difference is at most 0.78\,pp per dataset over 540 runs (Appendix~\ref{app:anatomy}).

\textbf{Sharing a stale model is worse than silence.} What an outage disables matters a great deal. A client that cannot train
but keeps sharing its frozen model costs DFedAvg \emph{more} than a full outage: 10.3--12.6\,pp against 7.1--9.6\,pp at $u=0.8$.
Under \sfg{}, whose receivers discount that model, the same client costs only 1.2--2.2\,pp. Undiscounted stale parameters are
therefore a first-order cost. An aggregation rule alone can remove it.

\textbf{A limit of any staleness-based rule.} When neighbours fail and return together, as in regional outages, a returning
client finds no fresher candidate. \sfg{}'s gain then falls from $+3.8$ to $+0.7$\,pp.

\subsection{Main results, all rules}
\label{app:ext-main}
This subsection expands \cref{sec:main}.

\Cref{tab:mainfull} lists every rule of the main grid, including the controls and the oracle.

\begin{table*}[h]
\centering
\caption{\textbf{Main results, all rules} on 3 datasets, 8 outage processes (five synthetic, three real) and 5 seeds. Accuracy is averaged
over the eight processes. Gain: paired difference to DFedAvg in percentage points, mean over 120 runs, in mean and
10th-percentile (p10) client accuracy, with a 95\% stratified $t$-interval over dataset--seed units. Wins: paired runs with a
positive gain. Cells: dataset--process cells with a positive mean gain. Local steps: computation relative to DFedAvg. All
rules send the same models as DFedAvg. The staleness-aware rules add one scalar, and model memory stores one model per
neighbour. $^\dagger$Gossip-learning update counts, inherited by the maximum on merging. $^\ast$Oracle consistency check that
requires global information unavailable to a receiver. Further variants are
in \cref{tab:extra,tab:inherit,tab:horizon}. Bold: best value in each column, separately for rules without and with catch-up training (extra computation). The oracle is excluded.}
\label{tab:mainfull}
\small
\setlength{\tabcolsep}{3.9pt}
\adjustbox{max width=\linewidth}{%
\begin{tabular}{l ccc cc ccc}
\toprule
 & \multicolumn{3}{c}{Mean accuracy (\%)} & \multicolumn{2}{c}{Gain vs DFedAvg (pp)} & Wins & Cells & Local \\
\cmidrule(lr){2-4}\cmidrule(lr){5-6}
Method & CIFAR-10 & CIFAR-100 & FEMNIST & mean [95\% CI] & p10 & /120 & $>0$ & steps \\
\midrule
\multicolumn{9}{l}{\emph{Baselines and controls}} \\
Local only & 63.6 & 23.3 & 73.3 & $-$4.23 [$-$4.52, $-$3.95] & $-$5.08 & 5 & 1 & 1.00 \\
DFedAvg & 66.8 & 25.2 & 80.8 & -- & -- & -- & -- & 1.00 \\
Model memory & 56.0 & 15.8 & 54.3 & $-$15.60 [$-$16.05, $-$15.14] & $-$17.43 & 0 & 0 & 1.00 \\
Update amplification ($\times2$) & 65.3 & 23.8 & 78.9 & $-$1.63 [$-$1.72, $-$1.54] & $-$3.31 & 9 & 0 & 1.00 \\
Relative-variance combiner & 65.6 & 23.7 & 77.5 & $-$2.03 [$-$2.17, $-$1.89] & $-$2.30 & 0 & 0 & 1.00 \\
Copy a random candidate & 65.5 & 25.5 & 79.1 & $-$0.97 [$-$1.17, $-$0.77] & $-$1.70 & 24 & 4 & 1.00 \\
Placebo cut (random, same size) & 66.8 & 25.9 & 80.3 & +0.00 [$-$0.13, 0.13] & $-$0.10 & 71 & 13 & 1.00 \\
\midrule
\multicolumn{9}{l}{\emph{Age-of-information rules}} \\
AoI (neighbour-only, ages) & 68.0 & 26.2 & 81.8 & +1.04 [0.98, 1.10] & +1.19 & 113 & 22 & 1.00 \\
DFL-AA (relative AoI + IPW) & 68.3 & 27.1 & 82.0 & +1.51 [1.45, 1.58] & +1.71 & 115 & 24 & 1.00 \\
AoI + IPW (absolute AoI) & 68.6 & 27.3 & 82.3 & +1.77 [1.72, 1.82] & +2.06 & 120 & 24 & 1.00 \\
\midrule
\multicolumn{9}{l}{\emph{Gossip learning}} \\
Gossip-learning model age$^\dagger$ & 67.8 & 26.1 & 82.0 & +1.01 [0.95, 1.06] & +1.12 & 116 & 24 & 1.00 \\
Limited merging$^\dagger$, $L{=}10$ & 69.4 & 27.8 & 83.4 & +2.55 [2.42, 2.68] & +3.06 & 119 & 24 & 1.00 \\
Limited merging$^\dagger$, $L{=}1$ & 70.6 & 29.7 & 84.1 & +3.84 [3.70, 3.98] & +4.53 & 119 & 24 & 1.00 \\
SFG weights on counts$^\dagger$ & 70.5 & 29.4 & 84.0 & +3.66 [3.51, 3.80] & +4.31 & 120 & 24 & 1.00 \\
SFG weights on counts$^\dagger$ + IPW & 71.1 & 30.2 & \textbf{84.4} & +4.27 [4.16, 4.38] & +5.10 & 120 & 24 & 1.00 \\
\midrule
\multicolumn{9}{l}{\emph{Proposed: the \sfg{} family}} \\
\sfg{} (ages) & 69.4 & 28.0 & 82.9 & +2.44 [2.37, 2.51] & +2.77 & 120 & 24 & 1.00 \\
\sfg{} + IPW (ages) & 69.8 & 28.8 & 83.2 & +2.97 [2.86, 3.07] & +3.32 & 120 & 24 & 1.00 \\
\sfgl{} (lags) & 70.4 & 29.1 & 83.8 & +3.49 [3.33, 3.65] & +4.09 & 118 & 24 & 1.00 \\
\sfgl{} + IPW (lags) & 71.0 & 30.2 & 84.3 & +4.21 [4.08, 4.33] & +4.91 & 120 & 24 & 1.00 \\
\sfgcut{}, $L{=}1$ & 70.7 & 30.0 & 84.1 & +3.97 [3.79, 4.15] & +4.75 & 119 & 24 & 1.00 \\
\sfgcut{}, $L{=}1$, + IPW & \textbf{71.2} & \textbf{30.6} & 84.3 & \textbf{+4.39} [4.23, 4.56] & \textbf{+5.17} & 120 & 24 & 1.00 \\
\midrule
\multicolumn{9}{l}{\emph{Oracle}} \\
Oracle horizon-GLS, $H\rho{=}3$$^\ast$ & 70.6 & 29.5 & 83.9 & +3.71 [3.57, 3.85] & +4.34 & 120 & 24 & 1.00 \\
\midrule
\multicolumn{9}{l}{\emph{With catch-up training (extra computation)}} \\
Catch-up local steps & 68.8 & 26.9 & 82.4 & +1.72 [1.61, 1.83] & +2.11 & 117 & 24 & 1.71 \\
\sfgl{} + catch-up & 71.7 & 30.3 & 84.9 & +4.65 [4.49, 4.81] & +5.59 & 120 & 24 & 1.71 \\
\sfgcut{} + catch-up & \textbf{72.0} & \textbf{31.0} & \textbf{85.1} & \textbf{+5.06} [4.88, 5.25] & \textbf{+6.18} & 120 & 24 & 1.71 \\
\bottomrule
\end{tabular}
}
\end{table*}

\textbf{AoI rules recover a quarter of the gap or less.} Neighbour-only AoI gains $+1.04$\,pp and DFL-AA $+1.51$\,pp. AoI\,+\,IPW on
absolute ages gains $+1.77$\,pp, and resetting a returning client's model $+1.0$\,pp. Every \sfg{} variant beats every AoI variant
on average. The lag-based \sfg{} rules roughly double the best AoI gain.

\textbf{Stale information must be discounted, not reused.} Reusing undiscounted stale models is harmful. Model memory costs
$-15.6$\,pp and update amplification $-1.6$\,pp. Discounting rescues model memory, turning $-15.6$\,pp into $+2.87$\,pp
(Appendix~\ref{app:anatomy}). The relative-variance combiner costs $-2.0$\,pp. In personalised learning, its distances mix
staleness with data heterogeneity.

\textbf{Gossip learning, explained.} Limited merging on gossip learning's maximum-inherited counts gains $+3.84$\,pp at $L=1$ and
$+3.70$\,pp at $L=2$. \sfg{} weights on the same counts gain $+3.66$\,pp. Our principles explain this. Tuned limited merging is
self-inclusive and uses inherited staleness. With a tight threshold, it is also steep. Removing steepness shrinks the gain.
Gossip learning's plain count weights gain only $+1.01$\,pp, and limited merging with a loose threshold ($L=10$) gains
$+2.55$\,pp.

\sfgcut{} is 0.13\,pp above limited merging at $L=1$. We leave this difference unresolved. The advantage of \sfgcut{} is that it
is covered by our bounds. Outside i.i.d.\ outages, the steep rules are on average within 0.2\,pp of \sfgl{}. Under i.i.d.\
outages, they gain up to 2.9\,pp more, and the oracles of \cref{sec:rate} up to 3.2\,pp more.

\textbf{Outage-free DFedAvg is a reference, not a ceiling.} Under i.i.d.\ outages, the steepest rules even exceed outage-free
DFedAvg. \sfgcut{} with $L=1$ does so also at tuned learning rates. We trace this to a mixing effect outside our staleness model,
with three pieces of evidence. Averaging two random candidates gains as much without any outages ($+1.36$\,pp). Copying a random candidate gains $+0.86$\,pp
without outages but costs $-0.97$\,pp under them. Against an outage-free Ditto reference, the excess disappears.
The lead of the steep rules over \sfgl{} under i.i.d.\ outages persists with Ditto (2.3\,pp) and remains unexplained
(\cref{tab:perprocess}, Appendix~\ref{app:iid}).

\textbf{Composability.} The link correction of IPW is orthogonal to freshness weighting. It adds $+0.42$ to $+0.72$\,pp to
\sfg{}, \sfgl{} and \sfgcut{}. \sfgcut{}\,+\,IPW gains $+4.39$\,pp ($[4.23,4.56]$), wins all 120 runs and closes 59--69\% of the
gap. \sfgl{}\,+\,IPW ($+4.21$\,pp) and \sfg{} on counts\,+\,IPW ($+4.27$\,pp) are within 0.2\,pp, an unresolved difference.
Catch-up local steps help on their own, gaining $+1.72$\,pp at $1.71\times$ the computation. \sfgcut{}\,+\,catch-up gains
$+5.06$\,pp, wins all 120 runs and closes 73--75\% of the gap.

\subsection{Ablations}
\label{app:ext-ablation}
This subsection expands \cref{sec:ablation}.

\textbf{Both halves of self-inclusion are needed.} We decompose \sfg{} on four processes: bursty $u{=}0.8$, heterogeneous
$u{\approx}0.77$, and FLASH relaxed and strict (60 paired runs, \cref{fig:mech}c, \cref{tab:decomp}). Discounting only stale neighbours (AoI) gains $+0.99$\,pp over DFedAvg. Discounting only the receiver's own stale model gains
$+1.05$\,pp, as does discarding it on return. Doing both, as \sfg{} does, gains $+2.48$\,pp.
The combination is worth more than the sum of the halves ($+2.04$\,pp), and \sfg{} beats self-only discounting in 60 of 60 runs.
A constant, age-independent self-weight gains only 0.2--0.7\,pp, and as much without outages as with them. \sfg{} adds $+2.1$ to
$+3.5$\,pp on top of this mixing benefit (\cref{tab:controls}).

\textbf{Lags help where many neighbours are stale at once.} \sfgl{} gains $+1.05$\,pp more than \sfg{}
($p_{\text{Holm}}=0.004$). It leads in 18 of 24 cells, with cell-level $p_{\text{Holm}}=0.08$. The gain concentrates where lags and
ages disagree most (\cref{sec:rate}). Under bursty outages at $u=0.8$, \sfgl{} gains $+6.8$\,pp against \sfg{}'s $+3.8$\,pp. On FLASH strict and STUNner, it gains
$+4.5$\,pp against $+1.9$ and $+2.1$\,pp. On the other five processes, the two rules are within 0.5\,pp.
Under heterogeneous rates at $u=0.6$, \sfgl{} ($+1.2$\,pp) trails \sfg{}, DFL-AA ($+1.7$\,pp) and limited merging ($+1.9$\,pp).
Adding IPW recovers it ($+2.5$\,pp).

\subsection{Who benefits}
\label{app:ext-mechanism}
This subsection expands \cref{sec:mechanism}.

\textbf{The least available clients benefit too.} Rarely available clients are the ones whose own model is usually the stalest
(\cref{fig:mech}d, \cref{tab:quartiles}). Under heterogeneous rates, neighbour-only AoI does not help the least-available quartile
($-0.18$\,pp), as \cref{thm:stale} predicts. AoI\,+\,IPW and DFL-AA, in contrast, lift it as much as \sfg{} and \sfgl{}: $+1.33$
and $+1.46$\,pp, against $+1.27$ and $+1.35$\,pp. The link correction is an implicit self-discount, outside the scope of
\cref{thm:stale}.

Among self-inclusive rules, limited merging with $L=2$ lifts that quartile most ($+1.83$\,pp), and \sfgcut{} with $L=1$ least
($+1.21$\,pp). With IPW, \sfgcut{} trails \sfgl{}\,+\,IPW by 0.5--1.0\,pp in every quartile, while \sfgl{}\,+\,IPW gains $+2.8$ to
$+3.2$\,pp in each. On the real traces, \sfgl{}'s gain is flat across quartiles, at $+3.7$ to $+3.9$\,pp. Age-based rules, in
contrast, help frequently available clients most.

\subsection{Robustness and generality}
\label{app:ext-robust}
This subsection expands \cref{sec:robust}.

\textbf{Protocol.} The gains hold for time-averaged accuracy (\cref{tab:time}). They also hold with learning rates re-tuned per
rule, where \sfgl{} gains $+4.1$\,pp in 36 of 36 runs and AoI\,+\,IPW $+1.7$\,pp (\cref{tab:lr}).

\textbf{A six-day real trace.} On a six-day STUNner window, outages last up to 143 hours. \sfgcut{} gains $+5.3$\,pp there, and
$+5.8$\,pp with IPW. \sfgl{} gains $+5.3$\,pp, against $+0.6$\,pp for DFL-AA (\cref{tab:week}). On this longer trace, the
advantage of the self-inclusive rules over DFL-AA widens rather than shrinks.

\textbf{Backbones and a larger model.} Under bursty and FLASH-strict outages, \sfgl{} gains $+4.2$\,pp with local momentum,
$+5.2$\,pp with Ditto-style personal models and $+2.7$\,pp with a personal head. AoI\,+\,IPW gains $+1.0$ to $+1.7$\,pp in the same
settings (\cref{tab:backbones}). With a pretrained ResNet-18-GN, \sfgl{} gains $+3.8$\,pp in 18 of 18 runs, against $+1.4$\,pp
for AoI\,+\,IPW, and $+5.9$\,pp with catch-up (\cref{tab:resnet}). \sfgcut{} matches \sfgl{} across these settings, except with
local momentum ($+3.1$ against $+4.2$\,pp, \cref{tab:backbones,tab:links,tab:resnet}).

\textbf{Graphs, scale, heterogeneity and links.} On CIFAR-10, \sfgl{} gains $+3.5$ to $+8.6$\,pp on Erd\H{o}s--R\'enyi and
regular graphs, with $N\in\{50,300\}$ and Dirichlet $\alpha\in\{0.1,1\}$. It gains $+4.6$ to $+6.1$\,pp when 30--60\% of links fail
(\cref{tab:settings,tab:links}). On a ring, where each client has only two neighbours to choose from, it gains $+1.6$\,pp. With
two classes per client, it gains $+2.1$\,pp.

\section{Proofs}
\label{app:proofs}

\paragraph{Notation.} Throughout, $S$ is the candidate set of receiver $i$, with $|S|=m+1$. We write $e_j=x_j^t-c_t$, and
$\Sigma$ is the error covariance \eqref{eq:cov}. For weights with $\mathbf 1^\top w=1$, we have $\sum_jw_jx_j^t-c_t=\sum_jw_je_j$.
The aggregation error is therefore
\begin{equation}\label{eq:aggerr}
\mathcal E(w):=\E\Big\|\sum_jw_je_j\Big\|^2=w^\top\Sigma w .
\end{equation}

\paragraph{Numerical checks.} We also checked the following statements numerically on random instances: \cref{prop:blue},
\cref{thm:stale,thm:sfg}, \cref{cor:exact}, \cref{prop:lag}(i)--(iii) and (v), \cref{prop:horizon}, and \cref{prop:ft}(ii)--(iii).
The instances draw random $m$, $\rho$, $\gamma$, ages, weights and availability sequences. We used 300--20{,}000 instances per
statement with absolute tolerance $10^{-9}$, and found no violation.

\paragraph{Covariance \eqref{eq:cov}.} The candidate error decomposes as
\begin{equation*}
e_j=\varepsilon_j-\big(c_t-c_{t-\tau_j}\big).
\end{equation*}
Cross terms between $\varepsilon$ and the drift vanish by \cref{ass:staleness}. The dispersions give
$\E\langle\varepsilon_j,\varepsilon_k\rangle=\sigma^2\mathbf 1[j=k]$, and the drift gives
$\E\langle c_t-c_{t-\tau_j},c_t-c_{t-\tau_k}\rangle=\delta^2\kappa(\tau_j,\tau_k)$. Under RW drift, $c_t-c_{t-\tau}$ is a sum of
the last $\tau$ increments. Two such sums share exactly $\min(\tau_j,\tau_k)$ uncorrelated increments. \hfill$\square$

\paragraph{Proof of Proposition~\ref{prop:blue}.}
$\Sigma$ is positive definite because $\sigma^2>0$. We minimise the convex quadratic $w^\top\Sigma w$ over the affine set
$\mathbf 1^\top w=1$. A Lagrange multiplier gives $\Sigma w=\lambda\mathbf 1$, hence
\begin{equation*}
w^\star=\frac{\Sigma^{-1}\mathbf 1}{\mathbf 1^\top\Sigma^{-1}\mathbf 1},\qquad
\mathcal E(w^\star)=\frac{1}{\mathbf 1^\top\Sigma^{-1}\mathbf 1}.
\end{equation*}

\emph{(i)} Under RW, split the drift of each candidate $j\in S$ at the youngest candidate:
\begin{equation*}
c_t-c_{t-\tau_j}=\big(c_t-c_{t-\tau_{\min}}\big)+\big(c_{t-\tau_{\min}}-c_{t-\tau_j}\big).
\end{equation*}
The two terms are sums over disjoint sets of increments, so they are uncorrelated. The first is common to all candidates and
has second moment $\delta^2\tau_{\min}$. Hence
\begin{equation*}
\Sigma=\delta^2\tau_{\min}\mathbf 1\mathbf 1^\top+\Sigma',
\end{equation*}
with $\Sigma'$ as stated. For every $w$ with $\mathbf 1^\top w=1$, this gives $w^\top\Sigma w=\delta^2\tau_{\min}+w^\top\Sigma'w$.
The minimiser is that of $w^\top\Sigma'w$, which depends on ages only through $\Delta_j$.

\emph{(ii)} The off-diagonal entry $\Sigma'_{jk}=\sigma^2\rho\min(\Delta_j,\Delta_k)$ vanishes unless both $\Delta_j>0$ and
$\Delta_k>0$. If at most one candidate has $\Delta_j>0$, then $\Sigma'=\sigma^2\,\mathrm{diag}(1+\rho\Delta_j)$, and
$w^\star_j\propto1/(1+\rho\Delta_j)$. Nothing distinguishes $j=i$ from a neighbour.

\emph{(iii)} Equal ages give $\Delta\equiv0$, so $\Sigma'=\sigma^2I$ and $w^\star$ is uniform. \hfill$\square$

\paragraph{Proof of Theorem~\ref{thm:stale}.}
All entries of $\Sigma$ are non-negative because $\kappa\ge0$, and $w\ge0$. Dropping the off-diagonal terms therefore gives
\begin{equation*}
w^\top\Sigma w\;\ge\;\sum_jw_j^2\Sigma_{jj}\;\ge\;w_{ii}^2\,v(\tau_i).
\end{equation*}
This step is what excludes signed or extrapolating weights. Since $w_{ii}\ge w_{ij}$ for all $j$ and the $m+1$ weights sum to
one, $w_{ii}\ge1/(m+1)$. As $v(\tau_i)=\sigma^2(1+\rho\varphi(\tau_i))$ and $\varphi$ is unbounded, the bound grows without limit
in $\tau_i$.

The same holds on lags under \cref{ass:dynamic}. An outage of $\tau$ rounds puts $\tau$ unit entries in the receiver's row of
$A$, so $G_{ii}\ge\tau$. Since $G\ge0$ and $w\ge0$, the persistent error of the aggregate satisfies
\begin{equation*}
\delta^2\,w^\top Gw\;\ge\;\delta^2w_{ii}^2G_{ii}\;\ge\;\frac{\delta^2\tau}{(m+1)^2}.
\end{equation*}

\emph{Worse than uniform.} Consider uncorrelated errors and equally aged, fresher neighbours. We show that every neighbour-only
discount $g\in(0,1)$ is strictly worse than uniform averaging. Write the unnormalised weights as $(1,g_1,\dots,g_m)$, with
normaliser $Z=1+\sum_lg_l$. For uncorrelated errors with variances $v_0$ (receiver) and $v_l$ (neighbours),
\begin{equation*}
\mathcal E(g)=\frac{v_0+\sum_lg_l^2v_l}{Z^2}.
\end{equation*}
With $g_l\equiv g$ and $v_l\equiv v$, this becomes
\begin{equation*}
\mathcal E(g)=\frac{v_0+mg^2v}{(1+mg)^2},\qquad \mathcal E'(g)=\frac{2m(gv-v_0)}{(1+mg)^3}.
\end{equation*}
The derivative is negative for $g<v_0/v$. Hence $\mathcal E$ is strictly decreasing on $(0,v_0/v]\supset(0,1]$, and every $g<1$
is worse than $g=1$. The minimiser is $g^\star=v_0/v>1$. Weights $(1,g^\star,\dots)$ are proportional to $(1/g^\star,1,\dots)$,
so the optimal rule discounts the receiver by $v/v_0$.

More generally, for uncorrelated errors,
\begin{equation*}
\frac{\partial\mathcal E}{\partial g_k}=\frac{2\big[g_kv_kZ-(v_0+\sum_lg_l^2v_l)\big]}{Z^3},\qquad
\frac{\partial\mathcal E}{\partial g_k}\Big|_{g\equiv1}=\frac{2\big[(m+1)v_k-\sum_{l=0}^mv_l\big]}{(m+1)^3}.
\end{equation*}
Starting from uniform weights, an infinitesimal down-weighting of candidate $k$ therefore lowers the error if and only if its
error exceeds the candidate average. \hfill$\square$

\paragraph{Proof of Theorem~\ref{thm:sfg}.}
By the decomposition in the proof of \cref{prop:blue}(i), $\mathcal E(w)=\delta^2\tau_{\min}+w^\top\Sigma'w$. Using
$\min(\Delta_j,\Delta_k)\le\Delta_j$, $w\ge0$ and $\sum_kw_k=1$,
\begin{align*}
w^\top\Sigma'w&=\sigma^2\sum_jw_j^2+\delta^2\sum_{j,k}w_jw_k\min(\Delta_j,\Delta_k)\\
&\le\sigma^2\sum_jw_j^2+\delta^2\sum_{j,k}w_jw_k\Delta_j
=\sigma^2\sum_jw_j^2+\delta^2\sum_jw_j\Delta_j .
\end{align*}
Under \sfg{}, $w_j=\gamma^{\Delta_j}/Z$ with $Z=\sum_k\gamma^{\Delta_k}\ge1$, because the youngest candidate contributes
$\gamma^0=1$. At most $m$ candidates have $\Delta_j\ge1$. Moreover,
\begin{equation*}
\max_{x>0}\,x\gamma^x=\frac{1}{e\ln(1/\gamma)},
\end{equation*}
so $\sum_jw_j\Delta_j\le m/(e\ln(1/\gamma))$. This gives \eqref{eq:sfgbound}.

For the special cases, suppose first that $\tau_i=0$. Then $\tau_{\min}=0$, and the \sfg{} weights are $\gamma^{\tau_j}$ for
neighbours and $1$ for the receiver. These are exactly the AoI weights. If all ages are equal, the rule is uniform.
\hfill$\square$

\paragraph{Proof of Corollary~\ref{cor:exact}.}
The receiver has age $\tau$ and its $m$ neighbours have age $0$. Hence $\tau_{\min}=0$ and
$\Sigma'=\sigma^2\mathrm{diag}(1+\rho\tau,1,\dots,1)$. For weights proportional to $(h,1,\dots,1)$,
\begin{equation*}
\mathcal E=\sigma^2\,\frac{h^2(1+\rho\tau)+m}{(h+m)^2}.
\end{equation*}
DFedAvg and AoI have $h=1$, and \sfg{} has $h=\gamma^\tau$. The optimum is $h=1/(1+\rho\tau)$ by \cref{prop:blue}(ii). Substituting
gives
\begin{equation*}
\mathcal E^\star=\sigma^2\,\frac{\tfrac1{1+\rho\tau}+m}{\big(\tfrac1{1+\rho\tau}+m\big)^2}=\sigma^2\Big(m+\frac1{1+\rho\tau}\Big)^{-1}.
\end{equation*}
\hfill$\square$

\begin{proposition}[Robustness to the staleness law]\label{prop:robust}
Let \cref{ass:staleness} hold. Consider shift-invariant rules $w_j\propto h(\Delta_j)$ with $h(0)=1$ and $h$ non-increasing.
\begin{enumerate}[nosep,leftmargin=*]
\item[(i)] If $\sup_{\Delta}h(\Delta)\varphi(\tau_{\min}+\Delta)<\infty$, the aggregation error is bounded uniformly in the
receiver's age.
\item[(ii)] Geometric weights $h(\Delta)=\gamma^\Delta$ satisfy this for every sub-exponential law $\varphi$.
\item[(iii)] Hyperbolic weights $h(\Delta)=(1+\rho'\Delta^q)^{-1}$ have unbounded error at returning receivers whenever
$\varphi(\Delta)/\Delta^{2q}\to\infty$.
\end{enumerate}
\end{proposition}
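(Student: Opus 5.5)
Part (i) follows from the decomposition used in the proofs of \cref{thm:stale,thm:sfg}, now with a general kernel $\kappa$ and an arbitrary law $\varphi$. With $w\ge0$ and $\mathbf 1^\top w=1$, I write $\mathcal E(w)=w^\top\Sigma w$. The key tool is the Cauchy--Schwarz bound on the lag covariance,
\begin{equation*}
\kappa(a,b)\le\sqrt{\varphi(a)\varphi(b)}\le\tfrac12\big(\varphi(a)+\varphi(b)\big),
\end{equation*}
which holds because $\kappa$ is the inner product of the drifts $c_t-c_{t-a}$ and $c_t-c_{t-b}$. It gives
\begin{equation*}
\mathcal E(w)\le\sigma^2\textstyle\sum_jw_j^2+\delta^2\sum_jw_j\varphi(\tau_j).
\end{equation*}
Write $\tau_j=\tau_{\min}+\Delta_j$ and $w_j=h(\Delta_j)/Z$. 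Here $Z\ge1$, because the freshest candidate has $h(0)=1$. Each term is then at most $h(\Delta_j)\varphi(\tau_{\min}+\Delta_j)$. At most $m+1$ terms appear, so
\begin{equation*}
\mathcal E\le\sigma^2+\delta^2(m+1)\sup_\Delta h(\Delta)\varphi(\tau_{\min}+\Delta).
\end{equation*}
The receiver's age enters only through $\Delta_i$, inside the supremum, which gives the uniform bound claimed in (i). I would note that "uniformly in the receiver's age" means with $\tau_{\min}$ held fixed, as in \cref{thm:sfg}.

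For (ii) I take $h(\Delta)=\gamma^\Delta$. Sub-exponential means $\varphi(x)\le C_\eta e^{\eta x}$ for every $\eta>0$. I choose $\eta<\ln(1/\gamma)$. Then
\begin{equation*}
\gamma^\Delta\varphi(\tau_{\min}+\Delta)\le C_\eta e^{\eta\tau_{\min}}\,e^{-(\ln(1/\gamma)-\eta)\Delta},
\end{equation*}
and the right-hand side is bounded in $\Delta$, so (i) applies. This is routine.

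For (iii) I construct a lower bound, reusing the diagonal-dropping step of \cref{thm:stale}: $\kappa\ge0$ and $w\ge0$ give
\begin{equation*}
\mathcal E\ge w_{ii}^2\,v(\tau_i).
\end{equation*}
I take a receiver of age $\tau$ with $m$ fresh neighbours, so $\tau_{\min}=0$ and $\Delta_i=\tau$. Then
\begin{equation*}
w_{ii}=\frac{h(\tau)}{m+h(\tau)}\ge\frac{h(\tau)}{m+1},
\end{equation*}
and hence
\begin{equation*}
\mathcal E\ge\frac{\sigma^2\rho\,\varphi(\tau)}{(m+1)^2(1+\rho'\tau^q)^2}.
\end{equation*}
The right-hand side behaves like $\varphi(\tau)/\tau^{2q}$ up to constants, and that ratio tends to infinity by assumption.

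The main obstacle is the Cauchy--Schwarz step in (i). It is what lets a general, non-RW $\kappa$ be controlled by $\varphi$ alone. I would also check that the $2q$ exponent in (iii) comes from squaring $w_{ii}$, which explains why the threshold is $\Delta^{2q}$ rather than $\Delta^q$.
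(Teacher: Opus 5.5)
Your proposal is correct and follows the paper's proof. Part (i) rests on the upper bound $\mathcal E\le\sigma^2+\delta^2(m+1)\sup_\Delta h(\Delta)\varphi(\tau_{\min}+\Delta)$ via $Z\ge h(0)=1$, and part (iii) on the lower bound $\mathcal E\ge w_{ii}^2v(\tau_i)$ with $w_{ii}\ge h(\Delta_i)/(m+1)$. The only cosmetic difference is that you reach $\mathcal E\le\sigma^2\sum_jw_j^2+\delta^2\sum_jw_j\varphi(\tau_j)$ by Cauchy--Schwarz on $\kappa$, while the paper uses convexity of the squared norm to get $\mathcal E\le\sum_jw_jv(\tau_j)$; the two are equivalent here, and yours keeps a slightly tighter dispersion term.
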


\emph{Proof.} \emph{Upper bound.} By convexity of the squared norm,
\begin{equation*}
\mathcal E(w)\le\sum_jw_j\,v(\tau_j)=\sigma^2+\delta^2\sum_jw_j\varphi(\tau_j).
\end{equation*}
With $w_j=h(\Delta_j)/Z$ and $Z\ge h(0)=1$,
\begin{equation*}
\sum_jw_j\varphi(\tau_j)\le(m+1)\sup_\Delta h(\Delta)\varphi(\tau_{\min}+\Delta),
\end{equation*}
which does not depend on $\tau_i$. Suppose $h=\gamma^\Delta$ and $\varphi$ is sub-exponential, meaning
$\varphi(a+\Delta)e^{-\alpha\Delta}\to0$ for every $\alpha>0$. Then the supremum is finite.

\emph{Lower bound.} As in Theorem~\ref{thm:stale}, $\mathcal E(w)\ge w_{ii}^2v(\tau_i)$. Since $h\le1$,
$w_{ii}=h(\Delta_i)/Z\ge h(\Delta_i)/(m+1)$. Fix $\tau_{\min}$ and let $\tau_i\to\infty$. Then $\varphi(\tau_i)\ge\varphi(\Delta_i)$.
For $h(\Delta)=(1+\rho'\Delta^q)^{-1}$, the bound is at least a constant times
\begin{equation*}
\frac{\varphi(\Delta_i)}{(1+\rho'\Delta_i^q)^2},
\end{equation*}
which diverges under the stated condition. \hfill$\square$

The proposition is a robustness guarantee. It separates geometric from hyperbolic weights under drift that grows faster than
$\Delta^{2q}$. For the two canonical laws, both families keep the error bounded. The empirical rate is analysed in
\cref{sec:rate}.

\paragraph{Identifiability (used by PWG).} The difference of two candidates is
\begin{equation*}
x_j-x_k=\big(c_{t-\tau_j}-c_{t-\tau_k}\big)+\big(\varepsilon_j-\varepsilon_k\big).
\end{equation*}
Under RW, the first term is a sum of $|\tau_j-\tau_k|$ uncorrelated increments. It is uncorrelated with the second term, whose
second moment is $2\sigma^2$. Hence
\begin{equation}\label{eq:distlaw}
\E\|x_j-x_k\|^2=2\sigma^2+\delta^2|\tau_j-\tau_k| .
\end{equation}
A regression of squared distances on age differences with intercept $a$ and slope $b$ therefore identifies $\rho=2b/a$. Under
\cref{ass:dynamic}, the analogue is
\begin{equation*}
\E\|x_j-x_k\|^2=2\sigma^2+\delta^2\|A_j-A_k\|^2 .
\end{equation*}
This equals $2\sigma^2+\delta^2|\ell_j-\ell_k|$ only when the rows of $A$ are $0/1$ and nested. Regressing on lag differences
(PWG-L) is therefore a heuristic. \hfill$\square$

\paragraph{Proof of Proposition~\ref{prop:lag}.}
Under \cref{ass:dynamic}, an available client's persistent error after aggregation and training is $\sum_jw_{ij}p_j^t$. An
unavailable client's is $p_i^t-\xi_t$. By induction from $A^0=0$, $p_i^t=-\sum_{r<t}A^t_{ir}\xi_r$ with the recursion
\eqref{eq:Arec}. Every entry of $A$ lies in $[0,1]$. New columns are $0/1$, and rows of $\widetilde W^t$ are convex combinations.

\emph{(i)} Summing \eqref{eq:Arec} over columns gives \eqref{eq:lagrec}. The increments are uncorrelated with second moment
$\delta^2$, so
\begin{equation*}
\E\|p_i^t\|^2=\delta^2\sum_r(A^t_{ir})^2\le\delta^2\sum_rA^t_{ir}=\delta^2\ell_i^t,
\end{equation*}
because $A\in[0,1]$. Equality holds if and only if the row has entries in $\{0,1\}$.

\emph{(ii), first claim.} Suppose client $i$ has been unavailable for the last $\tau_i^t$ rounds. Its row received an entry $1$ in
each of those columns, so $\ell_i^t\ge\tau_i^t$. After it returns, aggregates and trains, $\tau_i=0$, while
$\ell_i=\sum_jw_{ij}\ell_j$. This is positive unless every candidate has lag $0$.

\emph{(ii), second claim.} We construct the configuration in three steps. First, let neighbours $a$ and $b$ have lag $0$, be offline in the same $D$
rounds, and return together. Second, in the return round, let all of $b$'s other neighbours be offline and all of $a$'s other
neighbours be available with lag $0$. This needs a neighbour of $a$ that is not adjacent to $b$, and common neighbours are offline.
Then $b$ averages only with $a$, which missed the same increments, and keeps a row of $D$ ones. Client $a$ gives total weight $v$
to the two stale candidates (itself and $b$, both of age $D$), and gets a row of $v$'s on the same $D$ columns. Both train. Third,
in the next round, let all their other neighbours be offline. So $a$ aggregates over $\{a,b\}$ and $b$ over $\{b,a\}$.
Both clients are now fresh, have the same ages and availability histories, and share the link $a$--$b$. A rule whose weights
depend only on the candidates' roles, ages and availability histories therefore assigns the same weights $(w,1-w)$ to
(receiver, neighbour) in both aggregations. The persistent errors are
\begin{equation*}
\delta^2D\,\big(v+(1-w)(1-v)\big)^2\ \text{for receiver }a,\qquad
\delta^2D\,\big(v+w(1-v)\big)^2\ \text{for receiver }b .
\end{equation*}
The larger of the two is at least $\delta^2D\big(\frac{1+v}2\big)^2$. Keeping only the less-lagged $a$ gives $\delta^2v^2D$ in
both. The excess is therefore at least
\begin{equation*}
\delta^2D\Big[\Big(\frac{1+v}2\Big)^2-v^2\Big]=\frac{\delta^2D(1-v)(1+3v)}{4}\ge\frac{\delta^2(1-v)D}{4}.
\end{equation*}

\emph{(iii)} If $\mathbf 1^\top\widetilde W^t=\mathbf 1^\top$, then $\mathbf 1^\top\ell^{t+1}=\mathbf 1^\top\ell^t+\mathbf 1^\top\mathbf 1[a^t=0]$.

\emph{(iv)} The aggregate's error is $\sum_jw_j(p_j+\varepsilon_j)$. The dispersions are independent of each other and of the
persistent errors. By convexity and (i),
\begin{equation*}
\E\Big\|\sum_jw_j(p_j+\varepsilon_j)\Big\|^2=\sigma^2\sum_jw_j^2+\E\Big\|\sum_jw_jp_j\Big\|^2
\le\sigma^2\sum_jw_j^2+\delta^2\sum_jw_j\ell_j .
\end{equation*}
With \sfg{} weights on lags, as in the proof of Theorem~\ref{thm:sfg},
\begin{equation*}
\sum_jw_j\ell_j=\ell_{\min}+\sum_jw_j(\ell_j-\ell_{\min})\le\ell_{\min}+m\max_{x\ge0}x\gamma^x .
\end{equation*}

\emph{(v)} The split is the one in (iv). By \cref{ass:dynamic}, the next round's dispersion is regenerated independently of the
past, while the persistent error of the aggregate is propagated by the recursion for $p$. Since $p_j=-\sum_rA_{jr}\xi_r$ with
uncorrelated increments,
\begin{equation*}
\E\Big\|\sum_jw_jp_j\Big\|^2=\delta^2\sum_r\Big(\sum_jw_jA_{jr}\Big)^2=\delta^2w^\top Gw,\qquad G=AA^\top .
\end{equation*}
As $A\in[0,1]$, we have $0\le G_{jk}=\sum_rA_{jr}A_{kr}\le\sum_rA_{jr}=\ell_j$, and likewise $G_{jk}\le\ell_k$. Hence
$G_{jk}\le\min(\ell_j,\ell_k)$. Under nesting, the rows are $0/1$ with supports ordered by inclusion, and
$G_{jk}=|\mathrm{supp}A_j\cap\mathrm{supp}A_k|=\min(\ell_j,\ell_k)$. \hfill$\square$

\paragraph{Proof of Proposition~\ref{prop:counts}.}
Take a receiver $i$ with a neighbour $j$, and let $j$ have $m\ge1$ further neighbours $k_1,\dots,k_m$ that are not neighbours
of $i$. Both histories agree everywhere except for client $j$ in rounds $t_0-D,\dots,t_0-1$. In the first history, $j$ is
available throughout. In the second, $j$ is unavailable in rounds $t_0-D,\dots,t_0-1$ and returns at round $t_0$. In both
histories, the clients $k_1,\dots,k_m$ are available throughout and have lag 0, and $i$ is unavailable at round $t_0$. At round
$t_0$, client $j$ aggregates with $k_1,\dots,k_m$, gives its own model weight $v$, and then trains. At round $t_0+1$, $i$ returns
and all its neighbours other than $j$ are unavailable, so $S_i^{t_0+1}=\{i,j\}$.

\emph{Counts and ages agree.} Under maximum inheritance, $j$'s count after merging with $k_1,\dots,k_m$ at round $t_0$ equals the
largest count among them, which is the count of a client that never missed a round. After training, $j$'s count therefore equals
its count in the first history. Its age at round $t_0+1$ is 0 in both histories, and $i$'s count and age are the same in both.

\emph{Lags differ.} In the second history, $j$'s row of $A$ has ones on the $D$ columns of its outage before round $t_0$. The rows
of $k_1,\dots,k_m$ are zero. After aggregation, $j$'s row is $v$ on those $D$ columns, and training leaves it unchanged. Hence
$\ell_j=vD$ and $G_{jj}=\sum_r A_{jr}^2=v^2D$. In the first history, $j$'s row is zero.

\emph{Error.} A rule that depends only on roles, ages and max-inherited counts sees identical inputs at round $t_0+1$ and assigns
the same weights in both histories. In the second history, $G\ge0$ entrywise and $w\ge0$, so
\begin{equation*}
\E\Big\|\sum_kw_kp_k\Big\|^2=\delta^2\,w^\top Gw\ \ge\ \delta^2w_j^2G_{jj}=\delta^2w_j^2v^2D .
\end{equation*}
Limited merging keeps $j$ for every margin $L\ge0$, because $j$'s count is the largest. \sfg{} weights on counts give $j$ the
weight $h(0)=1$. \sfgcut{} sees $\ell_j-\ell_{\min}\ge vD-\ell_i$ and drops $j$ once this exceeds $L$. In general, its persistent
error is bounded by \cref{prop:horizon}(iii). \hfill$\square$

\paragraph{Proof of Proposition~\ref{prop:horizon}.}
Fix every other aggregation, including the receiver's later ones. The receiver is available at round $t$. Its row of $A^{t+1}$
is therefore $a=w^\top A_S$, with a zero in column $t$.

For $s>t$, the columns of $A^s$ up to $t$ are those of $\Phi^sA^{t+1}$. The later columns are created by later outages and mixed
by the fixed $\Phi$, so they do not depend on $w$. The increments are uncorrelated with second moment $\delta^2$, so
\begin{equation*}
\E\|p_i^s\|^2=\delta^2\|\psi_sa+b_s\|^2+\text{const},\qquad b_s=\sum_{k\ne i}\Phi^s_{ik}A_k^{t+1}.
\end{equation*}
Expanding, with $G=A_SA_S^\top$,
\begin{equation*}
\|\psi_sa+b_s\|^2=\psi_s^2\,w^\top Gw+2\psi_s\,w^\top A_Sb_s+\|b_s\|^2 .
\end{equation*}
Summing over $s=t+1,\dots,T$ gives $H=\sum_s\psi_s^2\ge\psi_{t+1}^2=1$ and $g=A_SB$. Adding $\sigma^2\|w\|^2$ and using
$\rho=\delta^2/\sigma^2$ gives \eqref{eq:Jform}. Finally, $B\ge0$ because $\Phi\ge0$ and $A\ge0$.

\emph{(i)} $I+H\rho G$ is positive definite. The minimiser over $\mathbf 1^\top w=1$ is therefore the GLS solution
\begin{equation*}
w=\frac{(I+H\rho G)^{-1}\mathbf 1}{\mathbf 1^\top(I+H\rho G)^{-1}\mathbf 1}.
\end{equation*}
Under nesting, $G=\Lambda$. This is the covariance of \cref{prop:blue} under RW drift with $(\rho,\tau)$ replaced by
$(H\rho,\ell)$, so parts (i) and (ii) of \cref{prop:blue} apply verbatim.

For the limit, write $\Lambda=\ell_{\min}\mathbf 1\mathbf 1^\top+\Lambda'$ with $\Lambda'_{jk}=\min(\Delta_j,\Delta_k)$. On
$\mathbf 1^\top w=1$,
\begin{equation*}
w^\top(I+H\rho\Lambda)w=\sum_jw_j^2+H\rho\ell_{\min}+H\rho\,w^\top\Lambda'w,\qquad
w^\top\Lambda'w=\int_0^\infty\Big(\sum_{j:\Delta_j>s}w_j\Big)^2ds\ge0 .
\end{equation*}
As $H\to\infty$, the minimisers of this quadratic penalty problem converge to the minimiser of $\sum_jw_j^2$ over the affine set
$\{\mathbf 1^\top w=1,\ w^\top\Lambda'w=0\}$. The second condition holds if and only if the weights on each group of equal
positive $\Delta$ sum to zero. Minimising $\sum_jw_j^2$ then sets every positive-$\Delta$ weight to zero and spreads the unit
mass uniformly over $\{j:\Delta_j=0\}$.

\emph{(ii)} The objective $f(w)=w^\top(I+H\rho\Lambda)w+2\rho g^\top w$ is strictly convex. Its minimiser over the simplex is
unique and satisfies the Karush--Kuhn--Tucker (KKT) conditions
\begin{equation*}
2w_j+s_j=\nu+\mu_j,\qquad s_j:=2H\rho(\Lambda w)_j+2\rho g_j,\qquad \mu_j\ge0,\quad \mu_jw_j=0 .
\end{equation*}
Hence $w_j=\max\{0,(\nu-s_j)/2\}$.

Under nesting, the supports of the rows are ordered by inclusion. If $\ell_k\le\ell_j$, then
$\Lambda_{kl}=\min(\ell_k,\ell_l)\le\Lambda_{jl}$ for every $l$, so $(\Lambda w)_k\le(\Lambda w)_j$ for $w\ge0$. Also
\begin{equation*}
g_j-g_k=\sum_{r\in\mathrm{supp}A_j\setminus\mathrm{supp}A_k}B_r\ge0 .
\end{equation*}
Hence $s_k\le s_j$ and $w_k\ge w_j$. The weights are therefore non-increasing in lag. The support $\{j:s_j<\nu\}$ contains every
candidate whose lag does not exceed the largest kept lag. It is a cut, and it contains $j_0$.

For a kept $j$, subtracting the conditions of $j_0$ gives
\begin{equation*}
w_j=w_{j_0}-H\rho\big[(\Lambda w)_j-(\Lambda w)_{j_0}\big]-\rho(g_j-g_{j_0})\le w_{j_0}-\rho(g_j-g_{j_0}),
\end{equation*}
so $\rho(g_j-g_{j_0})<w_{j_0}\le1$. With nested $0/1$ rows and $B_r=\beta$ on their supports, $g_j=\beta\ell_j$.

\emph{(iii)} For $w\ge0$, $G\le\Lambda$ entrywise (\cref{prop:lag}(v)) gives $w^\top Gw\le w^\top\Lambda w$. Moreover,
$\min(\ell_j,\ell_k)\le\frac12(\ell_j+\ell_k)$ gives $w^\top\Lambda w\le\sum_jw_j\ell_j$. This is at most $\ell_{\min}+L$ for
weights supported on $\Delta_j\le L$. \hfill$\square$

\paragraph{Carry-over through local training.} \Cref{ass:dynamic} lets training regenerate the dispersion entirely and keep
the persistent error entirely. We now relax both. Suppose a fraction $\theta_\varepsilon^2$ of the aggregate's dispersion
energy and a fraction $\theta_\delta^2$ of its persistent-error energy survive each round of local training, independently of
the rest. For $x\in\{\varepsilon,\delta\}$, define
\begin{equation*}
D_x^r=\mathrm{diag}\big(\theta_x\text{ for clients that train at round }r,\ 1\text{ otherwise}\big),\qquad
\Phi_x^s=D_x^{s-1}\widetilde W^{s-1}\cdots D_x^{t+1}\widetilde W^{t+1}D_x^t .
\end{equation*}
The aggregate's dispersion reaches the receiver's model at round $s$ with weight $(\Phi_\varepsilon^s)_{ii}$. Dispersions of
different aggregations are independent. The computation above then gives
\begin{equation}\label{eq:Jcarry}
J(w)=\sigma^2\Big[H_\varepsilon\|w\|^2+\rho\big(H_\delta\,w^\top\widetilde Gw+2\tilde g^\top w\big)\Big]+\text{const},
\qquad H_x=\sum_{s=t+1}^T(\Phi_x^s)_{ii}^2 .
\end{equation}
Here $\widetilde G$ and $\tilde g$ are defined as before, from missed fractions that also decay by $\theta_\delta$ per training
round.

\Cref{ass:dynamic} is the case $\theta_\varepsilon=0$, $\theta_\delta=1$. Then $H_\varepsilon=0$, since the aggregate's
dispersion never reaches a trained model, and the convention of \cref{prop:horizon} counts it once. Dividing by $H_\varepsilon$,
the effective rate is $\rho H_\delta/H_\varepsilon$, and the import slope is scaled by $1/H_\varepsilon$. With $\theta_\delta=1$,
as we measure, $\widetilde G=G$ and $\tilde g=g$. Parts (i)--(iii) of \cref{prop:horizon} then hold with $H\rho$ replaced by
$\rho H_\delta/H_\varepsilon$, and $\rho$ in the import term replaced by $\rho/H_\varepsilon$.

This is a first-order approximation, for two reasons. Carry-over correlates the dispersions of candidates that averaged the same
models. The receiver also later imports part of its aggregate's dispersion back from neighbours. The derivation ignores both.
Unlike \cref{prop:horizon}, it counts dispersion on trained models from round $t+1$. So $H_\varepsilon$ can be below 1, and the
two conventions do not coincide as $\theta_\varepsilon\to0$. The carry-over extension and Theorem~\ref{thm:stale} on lags were not
checked numerically.

\section{A secondary effect: staleness in the fine-tuned geometry}
\label{app:ftgeom}

Age laundering (\cref{sec:lags}) is not the only reason why a rate fitted to raw distances is too small. Local fine-tuning also
changes the appropriate distance-based calibration. The receiver runs $K$ local steps from the aggregate, so the relevant error
is the part that survives this fine-tuning.

\begin{proposition}[Fine-tuned geometry]\label{prop:ft}
Model receiver $i$'s local fine-tuning as the affine map
\begin{equation*}
\mathrm{FT}_i(x)=x_i^\star+M(x-x_i^\star),\qquad M=(I-\alpha\nabla^2f_i)^K .
\end{equation*}
This is exact for $K$ gradient steps of size $\alpha$ on a quadratic local loss with Hessian
$\nabla^2f_i=\sum_k\lambda_kq_kq_k^\top$ and $0\le\alpha\lambda_k\le2$. Assume RW drift whose increments $\xi_r$ share a
covariance $\Sigma_\xi$, and dispersions that share a covariance $\Sigma_\varepsilon$.
\begin{enumerate}[nosep,leftmargin=*]
\item[(i)] The post-fine-tuning error $\mathrm{FT}_i(\bar x)-\mathrm{FT}_i(c_t)$ of an aggregate $\bar x$ has the covariance
\eqref{eq:cov}, with $(\sigma^2,\delta^2)$ replaced by
\begin{equation*}
\sigma_M^2=\mathrm{tr}(M\Sigma_\varepsilon M),\qquad \delta_M^2=\mathrm{tr}(M\Sigma_\xi M).
\end{equation*}
The optimal weights are therefore those of \cref{prop:blue} with rate $\rho_M=\delta_M^2/\sigma_M^2$.
\item[(ii)] $\rho_M$ is identified by post-fine-tuning distances:
\begin{equation*}
\E\|\mathrm{FT}_i(x_j)-\mathrm{FT}_i(x_k)\|^2=2\sigma_M^2+\delta_M^2|\tau_j-\tau_k| .
\end{equation*}
\item[(iii)] Let $\pi_k$ and $\vartheta_k$ be the fractions of dispersion and drift energy along $q_k$, and let
$\mu_k=(1-\alpha\lambda_k)^2$. Then
\begin{equation*}
\frac{\rho_M}{\rho}=\frac{\sum_k\vartheta_k\mu_k^K}{\sum_k\pi_k\mu_k^K}.
\end{equation*}
Suppose drift is concentrated on flatter directions than dispersion, in the sense that $\vartheta_k/\pi_k$ is non-decreasing in
$\mu_k$. Then $\rho_M\ge\rho$, and $\rho_M/\rho$ is non-decreasing in $K$.
\end{enumerate}
\end{proposition}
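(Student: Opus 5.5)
The plan is to push everything through the affine map, which is linear in the deviation, and then reuse the covariance computation for \eqref{eq:cov} and the proof of \cref{prop:blue}. Since $\mathrm{FT}_i(x)-\mathrm{FT}_i(y)=M(x-y)$, the post-fine-tuning error of an aggregate $\bar x=\sum_jw_jx_j$ with $\mathbf 1^\top w=1$ is $M(\bar x-c_t)=\sum_jw_jMe_j$. Here $e_j=\varepsilon_j-(c_t-c_{t-\tau_j})$ as before. The transformed candidate errors $Me_j$ then have the same decomposition as in the covariance proof: the dispersions $M\varepsilon_j$ are uncorrelated and uncorrelated with the transformed drift. For a zero-mean vector $z$ with covariance $C$, we have $\E\langle Mz,Mz'\rangle=\mathrm{tr}(M\,\mathrm{Cov}(z,z')M^\top)$, and $M$ is symmetric because it is a polynomial in the Hessian. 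This gives $\E\langle M\varepsilon_j,M\varepsilon_k\rangle=\mathbf 1[j=k]\,\mathrm{tr}(M\Sigma_\varepsilon M)$. The drift sums share exactly $\min(\tau_j,\tau_k)$ uncorrelated increments, each with covariance $\Sigma_\xi$, so the drift term is $\min(\tau_j,\tau_k)\,\mathrm{tr}(M\Sigma_\xi M)$. This is \eqref{eq:cov} with $(\sigma^2,\delta^2)\to(\sigma_M^2,\delta_M^2)$. \cref{prop:blue} then applies verbatim, since it used only this covariance structure, and this proves (i). Note that $x_i^\star$ cancels in every difference, so its unknown location is irrelevant.

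For (ii), I would repeat the identifiability computation behind \eqref{eq:distlaw} on the transformed differences. We have $M(x_j-x_k)=M(c_{t-\tau_j}-c_{t-\tau_k})+M(\varepsilon_j-\varepsilon_k)$. The first term is a sum of $|\tau_j-\tau_k|$ transformed increments, with second moment $\delta_M^2|\tau_j-\tau_k|$. The second has second moment $2\sigma_M^2$, and the cross term vanishes. This is a short calculation.

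For (iii), I would diagonalise. Because $M=\sum_k(1-\alpha\lambda_k)^Kq_kq_k^\top$, we get $\mathrm{tr}(M\Sigma M)=\sum_k\mu_k^K\,q_k^\top\Sigma q_k$. Writing $q_k^\top\Sigma_\varepsilon q_k=\sigma^2\pi_k$ and $q_k^\top\Sigma_\xi q_k=\delta^2\vartheta_k$, with $\sigma^2=\mathrm{tr}\Sigma_\varepsilon$ and $\delta^2=\mathrm{tr}\Sigma_\xi$ so that $\sum\pi_k=\sum\vartheta_k=1$, gives the ratio formula. The condition $0\le\alpha\lambda_k\le2$ ensures $\mu_k\in[0,1]$. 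It is not needed for the identity, but it keeps the weights well behaved.

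The monotonicity claims are the main step. Here I would treat $\pi$ as a probability distribution on the index set and define $r_k=\vartheta_k/\pi_k$, which is non-decreasing in $\mu_k$. Then
\[
\frac{\rho_M}{\rho}=\frac{\E_\pi[r\,\mu^K]}{\E_\pi[\mu^K]}=\E_{\pi^{(K)}}[r],
\]
where $\pi^{(K)}_k\propto\pi_k\mu_k^K$ is an exponential tilting of $\pi$ towards large $\mu$. At $K=0$ the ratio is $\E_\pi[r]=\sum\vartheta_k=1$. Tilting by the increasing function $\mu^{K+1}/\mu^K=\mu$ produces a family that is increasing in $K$ in the likelihood-ratio order. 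The expectation of the non-decreasing $r$ therefore does not decrease. Concretely, I would show $\E_{\pi^{(K)}}[r\mu]\ge\E_{\pi^{(K)}}[r]\,\E_{\pi^{(K)}}[\mu]$ by Chebyshev's sum inequality, since $r$ and $\mu$ are comonotone. Rearranging gives $\E_{\pi^{(K+1)}}[r]\ge\E_{\pi^{(K)}}[r]$, and induction from the value 1 at $K=0$ yields $\rho_M\ge\rho$.

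The obstacle is bookkeeping in edge cases. Indices with $\pi_k=0$ need $\vartheta_k=0$, or a convention for $r_k=\infty$ placed consistently with the ordering. Ties in $\mu_k$ need the ordering hypothesis interpreted as comonotonicity. Directions with $\mu_k=0$ drop out after one step. I would handle these by restricting to the support of $\pi$ and noting that drift mass off that support only increases the numerator.
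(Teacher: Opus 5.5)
Your proof is correct and follows the paper's route: (i) and (ii) push the covariance and distance computations through the linear part $M$ exactly as the paper does, and your tilting--Chebyshev step in (iii) is the paper's symmetrised sum $\tfrac12\sum_{k,l}\mu_k^K\mu_l^K(\mu_k-\mu_l)(\vartheta_k\pi_l-\vartheta_l\pi_k)\ge0$ rewritten with $\vartheta_k=r_k\pi_k$. The paper reads the hypothesis in the cross-multiplied form $\mu_k>\mu_l\Rightarrow\vartheta_k\pi_l\ge\vartheta_l\pi_k$, which handles $\pi_k=0$ with no extra bookkeeping; your fallback of restricting to the support of $\pi$ and noting that off-support drift only enlarges the numerator would give only $\rho_M/\rho\ge1-\sum_{k:\pi_k=0}\vartheta_k$, so it is the ordering convention (off-support drift directions have the largest $\mu_k$) that actually closes that case.
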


\textbf{Why the condition is plausible.} The condition in (iii) fits personalised learning. Fresh dispersion is \emph{created}
by local training. Local training moves each client fastest along the high-curvature directions of its own loss, and these are
also the directions that local fine-tuning restores. Collaborative progress, in contrast, accumulates along slow, flat directions
that a few local steps cannot recover.

\textbf{A limitation.} The shared-covariance assumption has a specific weakness. Each client's dispersion may lie along its
\emph{own} high-curvature directions. Then the dispersions do not share one covariance, and the receiver's dispersion is partly
the personal signal that personalisation should keep. The proposition is therefore a heuristic for the direction of the effect,
not a calibration formula.

\textbf{Two predictions.} The proposition makes two testable predictions. \textbf{(P1)}: distances measured after the receiver's fine-tuning grow faster with age than raw distances, so
$\rho_M>\rho$. \textbf{(P2)}: the ratio $\rho_M/\rho$, and with it the best discount, grows with $K$.

\textbf{Test.} We ran 36 DFedAvg runs: 3 datasets, $K\in\{2,5,10,20\}$ and 3 seeds, under bursty $u{=}0.8$. Every 10 rounds, up to
16 fresh receivers ran their $K$ local steps from their own model and from every live neighbour's model, with a common minibatch
sequence (Appendix~\ref{app:details}).

\emph{P1 holds} in all twelve dataset--$K$ settings (\cref{fig:ftgeom}). The rate fitted to post-fine-tuning distances exceeds the
raw one by a factor of 1.1--1.8. In each setting, 66--94\% of stale pairs have a larger distance ratio after fine-tuning than
before. Here a stale pair has neighbour age at least 1, and there are 311--345 pairs per setting. The share is 70--83\% on
CIFAR-10, 70--94\% on CIFAR-100 and 66--77\% on FEMNIST across $K$. The effect accounts for part of the observed rate. However,
$\rho_M$ stays between 0.24 and 0.64. This is below the $\rho\approx1$--10 at which distance-calibrated weights match \sfg{}
(\cref{tab:gamma}).

\emph{P2 does not hold.} The ratio $\rho_M/\rho$ is largest at $K=5$ on CIFAR-10 and CIFAR-100, and smallest there on FEMNIST. It
falls to 1.1--1.2 at $K=20$ on both CIFAR datasets. The best $\gamma$ does not move with $K$ (\cref{fig:rate}d).

The measurements support the predicted direction of the effect, but not its monotonic dependence on local training. Age
laundering is the larger effect.

\begin{figure}[h]
\centering
\includegraphics[width=0.62\textwidth]{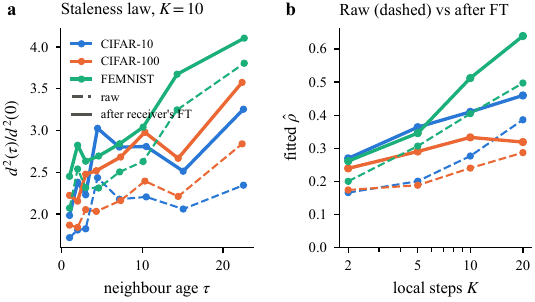}
\caption{\textbf{Fine-tuned geometry} (DFedAvg, bursty $u{=}0.8$, 3 seeds). (a) Squared distance between a fresh receiver's
model and a neighbour's, relative to fresh neighbours, against the neighbour's age. Dashed: before the receiver's $K=10$ local
steps from both models. Solid: after. (b) Rate fitted to the raw and post-fine-tuning distance laws for each $K$.}
\label{fig:ftgeom}
\end{figure}

\paragraph{Proof of Proposition~\ref{prop:ft}.}
\emph{(i)} $\mathrm{FT}_i$ is affine with linear part $M$, and $\sum_jw_j=1$. Hence
\begin{equation*}
\mathrm{FT}_i(\bar x)-\mathrm{FT}_i(c_t)=M(\bar x-c_t)=\sum_jw_jMe_j .
\end{equation*}
We repeat the covariance computation of \cref{eq:cov} with $M\varepsilon_j$ and $M\xi_r$ in place of $\varepsilon_j$ and $\xi_r$.
Since $M$ is symmetric, this gives
\begin{equation*}
\E\langle Me_j,Me_k\rangle=\sigma_M^2\mathbf 1[j=k]+\delta_M^2\min(\tau_j,\tau_k).
\end{equation*}
\Cref{prop:blue} then applies verbatim.

\emph{(ii)} $\mathrm{FT}_i(x_j)-\mathrm{FT}_i(x_k)=M(x_j-x_k)$. Apply the identifiability argument of Appendix~\ref{app:proofs} to
$M\varepsilon$ and $M\xi$.

\emph{(iii)} The map is $M=\sum_k(1-\alpha\lambda_k)^Kq_kq_k^\top$. Hence
\begin{equation*}
\mathrm{tr}(M\Sigma_\varepsilon M)=\sum_k\mu_k^K\,q_k^\top\Sigma_\varepsilon q_k=\sigma^2\sum_k\pi_k\mu_k^K,
\qquad \pi_k=\frac{q_k^\top\Sigma_\varepsilon q_k}{\mathrm{tr}\,\Sigma_\varepsilon},
\end{equation*}
and likewise $\delta_M^2=\delta^2\sum_k\vartheta_k\mu_k^K$. Define
\begin{equation*}
f(K):=\frac{\rho_M}{\rho}=\frac{\sum_k\vartheta_k\mu_k^K}{\sum_k\pi_k\mu_k^K},\qquad f(0)=1 .
\end{equation*}
Then $f(K+1)\ge f(K)$ if and only if $\sum_{k,l}\vartheta_k\pi_l\mu_k^K\mu_l^K(\mu_k-\mu_l)\ge0$. Symmetrising over $(k,l)$, the
left side equals
\begin{equation*}
\frac12\sum_{k,l}\mu_k^K\mu_l^K(\mu_k-\mu_l)\big(\vartheta_k\pi_l-\vartheta_l\pi_k\big).
\end{equation*}
If $\vartheta_k/\pi_k$ is non-decreasing in $\mu_k$, then $\mu_k>\mu_l$ implies $\vartheta_k\pi_l\ge\vartheta_l\pi_k$. Every term is
then non-negative. Thus $f$ is non-decreasing, and $f(K)\ge f(0)=1$. \hfill$\square$

\section{Experimental details}
\label{app:details}

\paragraph{Simulator.} All $N$ client models are stored as one flat tensor and trained in parallel with
\texttt{torch.func.vmap}. Gossip is a dense row-stochastic matrix product. For a given seed, every method shares the data
partition, the graph, the initial model, the outage trace and the minibatch random stream. All comparisons are therefore paired.

Seeds 1--5 are used for evaluation. Seed 0 was used only for pilots. These pilots fixed $\gamma=0.5$, the DFedAvgM learning rate,
the Ditto coefficient, the ResNet learning rate and the margin $L$ of \sfgcut{}. All other hyperparameters are shared defaults.
The learning rate is re-tuned per method in \cref{tab:lr}. Each run uses one NVIDIA A100, with up to four runs sharing a GPU. A
300-round CNN run takes 2--5 minutes.

\paragraph{Data.} \emph{CIFAR-10/100.} We apply Dirichlet label skew with $\alpha=0.3$ over the 50k training images, and every
client holds at least 20 images. Each client's test set has 300 images drawn from the CIFAR test split with the client's own
label distribution. Training uses random horizontal flips. \emph{FEMNIST.} We use the LEAF writer split \citep{caldas2018leaf}.
Per seed, we draw 100 writers with at least 250 samples, each evaluated on its own LEAF test split, with no augmentation.

On every dataset, 15\% of each client's training data is held out as a validation split. It is used only by diagnostics, never
for training or selection.

\paragraph{Models and training.} The CNN has two $5\times5$ convolutions (32 and 64 channels) with max-pooling, a 512-unit
hidden layer and a linear head. It has 0.88M parameters on CIFAR-10. ResNet-18 uses GroupNorm with 2 groups, so that it has no
running statistics, and has 11.2M parameters. Local training is SGD with batch 32, learning rate 0.05 and weight decay
$5\cdot10^{-4}$. Each round runs $K=10$ steps, for 300 rounds (CNN). The graph is a 6-nearest-neighbour geometric graph on uniform
random positions in the unit square, with $N=100$ clients. It is symmetrised and connected if necessary.

\paragraph{ResNet-18 protocol.} Trained from scratch on a few hundred images per client, ResNet-18-GN reaches only about 55\% on
CIFAR-10 without outages. To evaluate a stronger backbone, we start all clients from a common backbone pretrained centrally on
\emph{public} data from the other CIFAR dataset.
For the CIFAR-10 federation, we pretrain on CIFAR-100, reaching 69.6\% CIFAR-100 test accuracy. For the CIFAR-100 federation,
we pretrain on CIFAR-10, reaching 91.9\% test accuracy.
Pretraining uses SGD with Nesterov momentum, a one-cycle schedule with peak learning rate 0.1, batch 256, crop-and-flip
augmentation and 40 epochs. The classifier head is re-initialised. Federated training then runs 150 rounds with learning rate
0.03. This rate was chosen on seed 0 from $\{0.01,0.03,0.05\}$ under bursty outages.

\paragraph{Outage processes.} Synthetic traces are resampled until their realised unavailability is within $\pm0.005$ of the
target $u$. The exception is heterogeneous rates at $u=0.8$, where clipping makes the target unreachable.
\emph{i.i.d.} Each client is off independently with probability $u$ in every round.
\emph{Bursty.} A two-state Gilbert--Elliott chain per client, with stationary unavailability $u$ and mean outage length 10
rounds.
\emph{Heterogeneous.} Per-client rates are drawn from $\mathrm{Beta}(2u,2(1-u))$, rescaled to mean $u$ and clipped at 0.98.
Each client then follows a Gilbert--Elliott chain with mean outage length 5. Clipping limits the realised rate to 0.77 for target
0.8.
\emph{Regional} (anatomy only). Regions are $k$-means clusters of the node positions (spatial), or a random permutation of
the same labels (scattered). Each region follows one Gilbert--Elliott chain. Clients additionally fail independently with
probability 0.05. The regional rate is set so that the total matches $u$.

\paragraph{FLASH traces.}
\label{app:flash}
We use the public state traces released with FLASH \citep{yang2021characterizing}, licensed BSD-2-Clause. They are per-device
event logs of 1,000 smartphones, recording screen on/off and lock, charging, and WiFi/cellular state. We forward-fill each
device's state per minute, align it to its first local midnight, and cut 300 rounds of 10 minutes (50 hours). A round counts as
available if the rule holds for at least half of its minutes. \emph{Strict} is the standard ``screen off, charging and on WiFi''
rule. \emph{Relaxed} drops charging.

Devices with at least 50 hours of trace and at least 5\% availability are eligible.
The strict rule leaves 326 devices, with availability 0.180, mean outage 45.5 rounds ($\approx$ 7.6\,h) and longest outage 283
rounds. The relaxed rule leaves 512 devices, with availability 0.384 and mean outage 16.6 rounds.
Each seed draws 100 eligible devices without replacement and assigns them to clients. Among all 1,000 devices, 29.5\% never
satisfy the strict rule.

\paragraph{STUNner trace.}
STUNner \citep{szabo2019stunner} logs the network and battery state of Android phones running a measurement app. The data are
public for non-profit use. We use the three months from November 2015 to January 2016. During this period, the app took a
measurement at least every 30 minutes and on every charger or network change.

A device counts as available when it is plugged in (AC, USB or wireless) and on WiFi. This is the STUNner analogue of
FLASH-strict and is close to the definition used in gossip learning \citep{hegedus2019gossip}. States are forward-filled from the
last measurement. A state older than 45 minutes counts as unavailable, because the phone was off or not reporting.

Each device contributes its first 50-hour window that starts at a local midnight, has measurement coverage of at least 60\% and
availability of at least 5\%. The window is discretised into 300 rounds of 10 minutes, as for FLASH. This yields 729 devices with
availability 0.177, mean outage 8.6\,h (median 4.7\,h, longest 47\,h) and mean coverage 0.89. \Cref{fig:traces} compares the three
real regimes.

\begin{figure}[h]
\centering
\includegraphics[width=\textwidth]{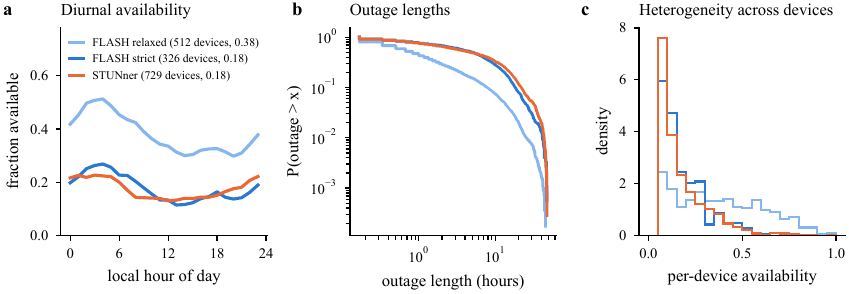}
\caption{\textbf{Real availability traces} (10-minute rounds, 50 hours from local midnight). (a) Availability by hour of day.
The legend gives the number of devices and the mean availability. (b) Distribution of outage lengths. (c) Per-device
availability.}
\label{fig:traces}
\end{figure}

\paragraph{Baselines.} All baselines use the same local training and differ only in aggregation or local work. Weights are
written unnormalised, $w_{ij}\propto h_{ij}$ over the candidate set.

\emph{Stale-information baselines.}
\emph{Model memory.} Each receiver keeps the last model received over every edge. It averages that model in, undiscounted,
when the neighbour is offline. It is the model-level analogue of MIFA \citep{gu2021mifa}, which reuses stale \emph{updates} at a
server. It sends no extra messages.
\emph{Update amplification} (FedAWE-style, \citealp{xiang2024fedawe}). A returning client scales its local update by
$1+\min(\tau_i,c)$, with $c=1$.
\emph{Catch-up.} A returning client runs $K(1+\min(\tau_i,3))$ local steps. On average over the main grid, this is
$1.71\times$ the local computation of DFedAvg.

\emph{Age-of-information baselines.}
\emph{AoI.} $h_{ii}=1$ and $h_{ij}=\gamma^{\tau_j}$ on absolute ages.
\emph{AoI + IPW.} AoI weights divided by an exponentially averaged estimate $\hat q_{ij}$ of the probability that link
$(i,j)$ is live. The estimate uses factor $0.9$ and floor $0.05$, and is updated whenever the receiver is available.
\emph{DFL-AA} \citep{hewa2026dflaa}. As AoI + IPW, but with the age of information measured relative to the youngest live
neighbour. The receiver keeps $h_{ii}=1$, and
\begin{equation*}
h_{ij}=\frac{\gamma^{\,\tau_j-\min_{k\in\mathcal N_i^t}\tau_k}}{\hat q_{ij}} .
\end{equation*}

\emph{Gossip-learning baselines.}
\emph{Gossip-learning model age} \citep{hegedus2019gossip}. Every model carries an update count $n_j$, initially 1. An
aggregate inherits the \emph{largest} count among the candidates it merges, and each round of local training adds 1. Every
candidate, the receiver included, has weight $h_{ij}=n_j$.
\emph{Limited merging} \citep{danner2023limited}. The same inherited counts. A candidate whose count is more than $L$ below
the largest in the candidate set is discarded, the receiver's own model included. We use $L\in\{2,5,10,20,50\}$.
\emph{\sfg{} weights on counts.} $h_{ij}=\gamma^{n_{\max}-n_j}$ on the same max-inherited counts, the receiver included,
optionally with IPW.
\emph{\sfgcut{}} (the cut on lags, i.e.\ limited merging on effective lags). The hard cut applied to mean-inherited counts
$n_j=1+t-\ell_j$. Candidates whose lag exceeds the smallest by more than $L\in\{1,2,10\}$ are dropped, the receiver included.
The rest are weighted by $n_j$.
\emph{Local counts.} Variants using per-client counters without inheritance, with $L\in\{2,10\}$ (\cref{tab:extra}).

\emph{Distance- and theory-based baselines.}
\emph{Relative-variance combiner} \citep{zhao2012diffusion}. It uses no ages. Receiver $i$ tracks, for every candidate $j$
including itself,
\begin{equation*}
v_{ij}\leftarrow(1-\nu)v_{ij}+\nu\|x_j-\bar x_i^{-}\|^2,\qquad h_{ij}=1/v_{ij},\qquad \nu=0.2,
\end{equation*}
where $\bar x_i^{-}$ is its own previous aggregate.
\emph{PWG} (precision-weighted gossip). The weights of \cref{prop:blue}(ii) on relative ages, applied to every candidate:
\begin{equation*}
h_{ij}=\frac{1}{1+\hat\rho\,\Delta_j},\qquad \Delta_j=\tau_j-\tau_{\min}.
\end{equation*}
The rate $\hat\rho$ is estimated online per client by exponentially forgotten least squares of squared distances on age
differences. The forgetting factor is 0.98, with prior $\rho_0=1$ until five observations. \emph{PWG (absolute ages)} applies
$1/(1+\hat\rho\,\tau_j)$ to absolute ages.
\emph{PWG-NP.} A non-parametric variant. It estimates the distance ratio $r(\tau)$ per age bin and uses $h=1/(2r(\tau)-1)$.
\emph{GLS.} The exact minimum-error weights $\Sigma^{-1}\mathbf 1$ of \cref{prop:blue}, with $\Sigma$ from \cref{eq:cov}
under RW drift and the online $\hat\rho$, clipped at zero.
\emph{Metropolis--Hastings.} $w_{ij}=1/(1+\max(d_i,d_j))$ on the live graph.

\emph{Controls.}
\emph{Self-weight control.} Neighbours weight 1, and the receiver a constant $c\in\{0.25,0.5\}$, regardless of age.
\emph{Self-only discount.} Neighbours weight 1. A returning receiver ($\tau_i\ge1$) with at least one live neighbour weights
its own model by $0.5^{\tau_i}$.
\emph{Reset on return.} The same with weight 0. A returning client discards its own model and averages its live neighbours.
\emph{\sfg{} with base self-weight $c$.} \sfg{} weights, with the receiver's weight further multiplied by $c$.
\emph{Neighbour-only on lags or counts.} $h_{ii}=1$ and $h_{ij}=\gamma^{s_j-\min_{k\in\mathcal N_i^t}s_k}$ over live
neighbours. Here $s$ is the effective lag or the deficit of the max-inherited count. This is DFL-AA's relative AoI without IPW,
applied to inherited staleness.
\emph{Copy a random candidate.} Every available receiver replaces its model by one candidate drawn uniformly from itself and
its live neighbours, from a separate random stream. It has the sparsity of $\gamma=0$ without staleness information.
\emph{Model memory + \sfg{}/\sfgl{}.} Remembered models of offline neighbours join the candidate set. Their staleness is
their age or lag at reception plus the rounds since. All candidates get \sfg{} weights.

\paragraph{Our rules.} \sfgcut{} is defined above, next to limited merging.
\emph{\sfg{}.} \Cref{eq:sfg} with ages and $\gamma=0.5$.
\emph{\sfgl{}.} The same weights with ages replaced by effective lags. Every client keeps a real-valued lag $\ell_i$,
initially 0, and sends it with its model. After aggregating, an available client sets $\ell_i\leftarrow\sum_jw_{ij}\ell_j$ with the
weights it just used. Training leaves $\ell_i$ unchanged, and each round without training adds 1. The weights depend on lag
differences and are invariant to the common level.
\emph{PWG-L.} PWG with lags in place of ages, both in the weights (relative lags) and in the online regression.
\emph{+ IPW.} Neighbour weights divided by the link-reception estimate $\hat q_{ij}$ of AoI + IPW. The receiver's own model
has $\hat q=1$.
\emph{+ catch-up.} As above.
\emph{Freshest only} ($\gamma=0$, with $0^0=1$). \sfg{} averages uniformly the candidates of minimal age. On real-valued
lags, ties are rare after the first outage, so \sfgl{} with $\gamma=0$ in effect copies the least-lagged candidate. With
\emph{rounded ties}, the weights use lags rounded to integers, so the rule averages all candidates whose rounded lag is minimal.
\emph{Pinned receiver.} \sfgl{}'s weights and reference (the least-lagged candidate, the receiver included), but with the
receiver's own weight fixed at 1.

\paragraph{Horizon-based and oracle rules.}
\emph{Horizon rates.} PWG-L and the GLS weights of \cref{prop:blue} on lags, with a fixed rate $H\rho\in\{3,10,30\}$ in place
of the online estimate.
\emph{Oracle GLS.} The simulator tracks the matrix $A$ of missed-increment fractions (\cref{sec:lags}). It applies the
minimiser of \cref{prop:horizon}(i), $(I+H\rho\,G)^{-1}\mathbf 1$ normalised over the candidates, with $G=AA^\top$, clipped at
zero and renormalised. Clipping the affine GLS solution generally differs from minimising over the simplex. The rates
$H\rho\in\{3,10\}$ are taken directly from the PWG-L sweep. This oracle uses global information to test the theory. It is not a
deployable rule.
\emph{Derived rules.} These minimise the objective of \cref{prop:horizon} with uniform imports, $g_j=\eta\ell_j/\rho$, over
the simplex. \emph{Sparsemax on lags} drops the quadratic term, so $w=\mathrm{sparsemax}(-\eta\Delta)$. It has a closed form, and
candidates with $\Delta_j\ge1/\eta$ get no weight. The \emph{QP on lags} minimises
\begin{equation*}
\|w\|^2+a\,w^\top\Lambda'w+2\eta\Delta^\top w,\qquad \Lambda'_{jk}=\min(\Delta_j,\Delta_k),\quad a=H\rho .
\end{equation*}
The \emph{oracle QP} uses the exact $G=AA^\top$ in place of $\Lambda'$ and is not deployable. QPs are solved by accelerated
projected gradient (300 steps, exact simplex projection).

\paragraph{Cut variants.} All use $L=1$.
The \emph{placebo cut} keeps the same number of candidates as \sfgcut{}, but drawn at random, with uniform weights. The
\emph{cut on ages} keeps the candidates within one round of the youngest, with uniform weights. \emph{Receiver kept} is
\sfgcut{} with a receiver that is never dropped. The \emph{+ IPW} and \emph{+ catch-up} variants are as above.

\paragraph{Choice of the margin.} The margin $L$ of \sfgcut{} was chosen on the pilot seed 0 from $\{0.5,1,2,3\}$
(\cref{tab:pilot}). This pilot was run after the test-seed results for $L\in\{1,2\}$ were known. It documents a choice made with
those results available. $L=1$ is best on average ($+3.98$ against $+3.89$\,pp for $L=2$), driven by i.i.d.\ outages. Without
them, $L=2$ is slightly better ($+3.87$ against $+3.82$\,pp), as on the test seeds.

\paragraph{Anatomy conditions.} \emph{Structure} uses 300 runs: 5 structures, $u\in\{0.6,0.8\}$, DFedAvg and \sfg{}, 3 datasets
and 5 seeds. The structures are independent bursty chains (mean outage 10), and regional or scattered outages with 5 or 20 groups,
as defined above. \emph{What an outage disables} uses 240 runs, with the same factors and 4 modes.
\emph{Full outage.} The client stops both training and communication.
\emph{Cannot communicate.} The client keeps training on its own data but neither sends nor aggregates, so its age stays 0.
\emph{Cannot train, still shares.} The client keeps sending its frozen model, whose age grows. It does not adopt the
aggregate it would have formed.
A fourth mode communicates and adopts the aggregate but cannot train. It is reported in \cref{tab:anatomy}, not in the
figure.

\paragraph{STUNner, six days.} To test multi-day structure, we cut a second STUNner window of 300 rounds of 30 minutes (6.25
days), with the same availability rule and eligibility criteria. It yields 542 devices with availability 0.158, mean outage
14.6\,h (median 8\,h, longest 143\,h) and mean coverage 0.85. All methods run with the default hyperparameters.

\paragraph{Learning-rate tuning.} For DFedAvg, AoI + IPW, \sfg{} and \sfgl{}, we additionally ran learning rates
$\{0.02,0.1,0.2\}$. These runs cover four processes (bursty $u{=}0.8$, heterogeneous $u{=}0.6$, FLASH strict, STUNner), all three
datasets and seeds 1--3. We compare each method at its own best learning rate per dataset (\cref{tab:lr}). The learning rate is
selected on the same runs it is evaluated on, which favours every method equally. DFedAvg's best rate is 0.1 on two of the three
datasets, and \sfgl{} keeps its gain there.

\paragraph{Statistics.} All comparisons are paired over seeds 1--5 with shared randomness. The outage processes of a (dataset,
seed) pair share the partition, graph and initialisation. The unit of inference is therefore the (dataset, seed) pair. We average
each paired difference over the processes of a unit, which gives 15 units.
\emph{Intervals.} Stratified $t$-intervals over the 15 units, with the variance pooled within datasets (12 degrees of
freedom).
\emph{Primary test.} An exact two-sided sign-flip permutation test over the 15 units ($2^{15}$ sign patterns). It is
Holm-corrected over the family of 70 comparisons listed in the accompanying file \texttt{comparisons.csv}.
\emph{Secondary test.} Wilcoxon signed-rank tests on the 24 cell means, Holm-corrected over the same family.
The file lists every comparison with its interval, wins and both tests. Where the interval and the corrected unit test disagree,
we treat the difference as unresolved.

In nine comparisons, the interval excludes zero but the corrected test does not.
These are copying the least-lagged candidate versus \sfgl{} ($+0.22$\,pp), limited merging versus \sfg{} weights on counts
($+0.04$\,pp), \sfgcut{} versus limited merging at $L=2$ ($+0.12$\,pp) and at $L=1$ ($+0.13$\,pp), least-lagged with rounded ties
versus \sfgl{} ($+0.23$\,pp), GLS on lags versus \sfgl{} ($+0.25$\,pp), \sfgcut{} versus the oracle QP of \cref{tab:horizon}
($+0.10$\,pp), and \sfgcut{} + IPW versus \sfgl{} + IPW ($+0.18$\,pp) and versus \sfg{} on counts + IPW ($+0.12$\,pp).
In 17 further comparisons, the corrected unit test is significant but the corrected cell-level test is not. These differences are
consistent across seeds but not across processes. They include \sfgl{} versus \sfg{} (18/24 cells), \sfgcut{} versus \sfgl{}
(17/24), \sfgcut{} with $L=1$ versus $L=2$ (14/24), and the oracle versus \sfgl{} (14/24).

\emph{Design sensitivity.} Every unit averages the eight processes, so i.i.d.\ outages weigh $1/8$ in every contrast. The file
therefore also reports three further quantities: the number of processes with a positive mean, the range of the mean over the
eight leave-one-process-out designs, and the mean without i.i.d.\ outages with its uncorrected permutation $p$-value. Without
i.i.d.\ outages, \sfgcut{} leads \sfgl{} by $+0.14$\,pp ($p=0.002$) and the oracle by $+0.17$\,pp ($p=0.008$). Three pairs do not
differ: \sfgcut{} and limited merging at $L=1$ ($+0.03$, $p=0.55$), the oracle and \sfgl{} ($-0.03$, $p=0.44$), and \sfgcut{} at
$L=1$ and $L=2$ ($-0.04$, $p=0.33$).

The no-outage control gives the noise floor of paired win counts. There, \sfg{} and DFedAvg are identical up to GPU
nondeterminism, and \sfg{} wins 2 of 15 runs ($-0.04\pm0.15$\,pp).

\paragraph{Backbones.}
\emph{Full model} (default). The whole model is gossiped and fine-tuned.
\emph{Personal head} (DFedPer, \citealp{arivazhagan2019fedper}). The classifier head is never averaged.
\emph{DFedAvgM} \citep{sun2023dfedavgm}. Local heavy-ball momentum 0.9, with learning rate 0.02 chosen on seed 0 from
$\{0.005,0.01,0.02\}$.
\emph{Ditto} \citep{li2021ditto}. Each client also trains a personal model with a proximal pull towards the
post-aggregation shared model, and is evaluated with the personal model. The pull strength $\lambda=0.2$ was chosen on seed 0
from $\{0.05,0.2,1\}$.

\paragraph{Diagnostics.} The diagnostics use a separate random stream and do not affect training.
\emph{Raw staleness law.} Every 5 rounds, for every fresh receiver, we record the squared distance to each live neighbour,
divided by the mean fresh--fresh distance, binned by the neighbour's age.
\emph{Fine-tuned geometry.} Every 10 rounds, up to 16 fresh receivers with at least one fresh and one stale live neighbour
run their $K$ local steps from their own model and from every live neighbour's model, with a common minibatch sequence. We record
distances before and after this fine-tuning, normalised by the fresh-neighbour mean, and validation losses.
\emph{Lag law.} Every 5 rounds, over all live pairs, we record the squared distance, normalised by the mean over pairs with
(near-)zero difference. It is binned by the age difference and by the rounded effective-lag difference. Every run tracks
effective lags with its own weights, whether or not the rule uses them.
\emph{Returning clients.} At every round, for every client that returns from an outage, we record several quantities on
its validation split, binned by its outage length. These are the validation loss of its own stale model, of its post-aggregation
model, of the model after that round's local training, and of its model five rounds later. We also record the loss of each live
neighbour's model and the weight the rule assigns to the client's own model.

\section{Ages versus lags and the discount rate, in detail}
\label{app:rate}

This appendix expands \cref{sec:rate}.

\textbf{The puzzle.} \Cref{prop:blue} also prescribes a \emph{rate}, through weights $1/(1+\rho\Delta)$. The rate $\rho$ is
identifiable from pairwise distances. We implement this as precision-weighted gossip (PWG), which estimates $\rho$ online from
ages ($\hat\rho\approx0.2$--$0.7$). The theory-derived rule turns out too mild. It gains $+1.48$\,pp over DFedAvg, against
\sfg{}'s $+2.44$\,pp (\cref{tab:extra}). Yet PWG with a \emph{fixed} large $\rho\in\{1,10\}$ matches \sfg{} on CIFAR-10
(\cref{tab:gamma}).

\textbf{Lags explain half of it.} Measured against effective lag rather than age, the distance law of DFedAvg trajectories loses
its spurious jump at age 1 (\cref{fig:rate}a). Its fitted rate is 1.4--3.1$\times$ larger in every rule--dataset combination we
logged (\cref{tab:laglaw}). This is age laundering made visible.

Feeding lags instead of ages to the \emph{same} online estimator (PWG-L) raises $\hat\rho$ to 1.0--1.5 under bursty $u{=}0.8$,
heterogeneous $u{\approx}0.77$ and FLASH strict (\cref{fig:rate}b). It lifts the gain from $+1.48$ to $+2.44$\,pp, in 102 of 120
paired runs. This closes half of the gap to \sfgl{} ($+3.49$\,pp, \cref{fig:rate}c). Lags also improve \sfg{} itself, by
$+1.05$\,pp, where the candidates' relative lags and ages differ most. The difference reaches 2.8--4.2 rounds under bursty
$u{=}0.8$, FLASH strict and STUNner (\cref{tab:stalest}).

\textbf{The horizon explains the rest.} Even on lags, the one-step weights $1/(1+\hat\rho\Delta)$ are too mild.
\Cref{prop:horizon} says why. The receiver carries its persistent error for $H\ge3$ rounds, and for 8--23 rounds under long or
frequent outages. The dispersion it absorbs survives for a shorter time. With the measured carry-over, the effective rate
\eqref{eq:effrate} is 0.8--9 (\cref{tab:hmeasured,tab:carry}). Raising PWG-L's rate to $H\rho=3$, 10 and 30, as the horizon
prescribes, lifts its gain from $+2.44$\,pp to $+3.35$, $+3.56$ and $+3.59$\,pp (\cref{tab:horizon}).

\textbf{The theory identifies the shape, not the exact rule.} We ran a pre-registered test on held-out seeds
(Appendix~\ref{app:prereg}). A quadratic program (QP) on lags, with per-process parameters measured on the traces, does no better
than constant parameters outside i.i.d.\ outages ($+0.05$\,pp, $[-0.05,0.16]$). It trails \sfgcut{} with $L=2$ by 0.27\,pp
($[0.14,0.40]$). An additional comparison, outside the pre-registration, shows a $+0.76$\,pp gain over one-step weights (online
PWG-L). The measured horizon thus supports steeper weights, but it does not single out the best rule.

\textbf{A deployable rule matches the oracle.} An oracle QP that knows the true persistent-error Gram matrix $G$ gains
$+3.87$\,pp at $H\rho=10$. This is within 0.1\,pp of \sfgcut{}, an unresolved difference. The oracle shows the same advantage of
steep rules under i.i.d.\ outages (\cref{tab:perprocess}). That a rule using only local information matches an oracle with
global information is, in our view, the strongest evidence that the three principles capture what matters.

Minimisers on scalar lags trail \sfgcut{} by 0.2--0.3\,pp. For example, sparsemax at $\eta=0.5$ trails by 0.21\,pp
($p_{\text{Holm}}=0.004$). The reason lies in heterogeneous rates at $u{\approx}0.77$. There, steep weights lose ($+0.2$ to
$+1.7$\,pp), while \sfgcut{} and the oracles, which average near-ties, hold up ($+1.1$ to $+1.8$\,pp).

\begin{figure*}[t]
\centering
\includegraphics[width=\textwidth]{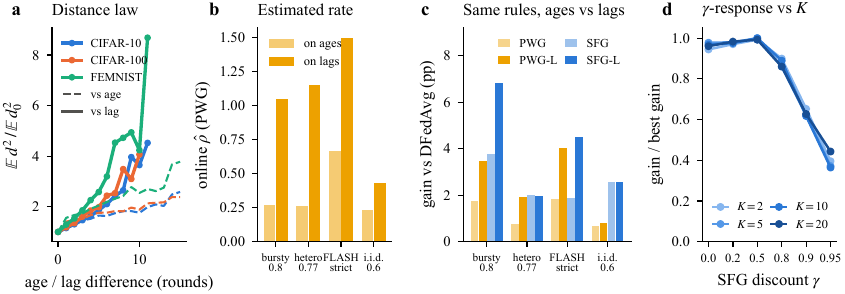}
\caption{\textbf{Ages versus lags.} (a) Squared distance between live models, relative to the zero-difference baseline, against
their age difference (dashed) and their effective-lag difference (solid). DFedAvg trajectories under bursty $u{=}0.8$,
heterogeneous $u{\approx}0.77$, FLASH strict and i.i.d.\ $u{=}0.6$, 3 seeds. (b) The online rate estimate of PWG is
1.9--4.3$\times$ larger on lags. (c) Gain over DFedAvg of the same rules on ages (light) and on lags (dark), 3 datasets
$\times$ 5 seeds. (d) \sfg{}'s response to $\gamma$ has the same shape for every number of local steps $K$ (bursty $u{=}0.8$,
gain normalised by its best value and averaged over datasets).}
\label{fig:rate}
\end{figure*}

\section{Additional results}
\label{app:results}

Throughout this appendix, ``the cut'' and ``cut on lags'' denote \sfgcut{} (\cref{sec:method}). ``Cut$_L$'' denotes its margin.

\paragraph{Anatomy of outages.}
\label{app:anatomy}
\Cref{fig:anatomy,tab:anatomy} expand \cref{sec:anatomy}. The conditions are described in Appendix~\ref{app:details}. Four
findings stand out.

\emph{The rate matters, the structure does not.} At matched unavailability, the structure of outages changes DFedAvg's accuracy
drop by at most 0.78\,pp per dataset at $u=0.6$, and by at most 0.52\,pp at $u=0.8$. Moving between the two rates changes it by
3.8--4.4\,pp.

\emph{Structure matters for the remedy.} \sfg{}'s gain shrinks as outages become more correlated. It falls from $+3.4$/$+3.8$\,pp
under independent outages to $+0.8$/$+0.7$\,pp when five whole regions fail together. Five scattered groups already halve it at
$u=0.8$. The cause is therefore not spatial proximity but correlated \emph{return}. Neighbours that fail and return together are
equally stale, so a returning client has no fresher candidate to shift weight to.

\emph{Under plain averaging, stale sharing is worse than silence.} A client that cannot communicate but keeps training costs about
half as much as a full outage. It gains nothing from \sfg{}, because its model is never stale. A client that cannot train but
keeps sharing its frozen model costs DFedAvg more than a full outage. Under \sfg{}, the same client costs \emph{less} than a full
outage (1.7 against 4.6\,pp at $u=0.8$). Receivers discount its model by its age. A model frozen one or two rounds ago therefore
still contributes, while an old one is ignored. If a client that cannot train also adopts the aggregate it forms, the stale models
it averages spread further. The cost under DFedAvg is then the largest of all at $u=0.8$.

\emph{Why model memory does not help.} Model memory uses older copies, last received while both clients were online. Under
bursty $u=0.8$, the receiver's remembered copy of an offline neighbour is on average 32.5 rounds old. The neighbour's current
frozen model is only 9.7 rounds old. Only 6.5\% of remembered copies are within two rounds, against 19.4\% of frozen models
(\cref{tab:memory}, traces only, 3 seeds). Discounted memory changes \sfgl{}'s accuracy by $-0.61$\,pp, with interval
$[-0.74,-0.49]$ (\texttt{comparisons.csv}).

\begin{figure}[h]
\centering
\includegraphics[width=0.62\textwidth]{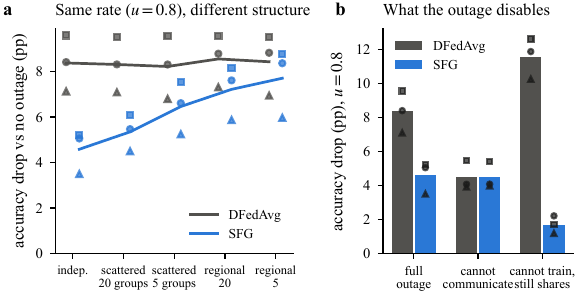}
\caption{\textbf{Anatomy of outages} ($u=0.8$). Markers: CIFAR-10 $\circ$, CIFAR-100 $\square$, FEMNIST $\triangle$.
(a) At a matched rate, outage structure does not change DFedAvg's loss. Correlated outages, however, leave \sfg{} little to work
with. (b) Sharing a model one cannot train hurts more than going silent. \sfg{} neutralises this cost.}
\label{fig:anatomy}
\end{figure}

\begin{table}[h]
\centering
\caption{\textbf{Anatomy of outages.} Accuracy drop relative to DFedAvg without outages (pp, mean over 3 datasets $\times$ 5
seeds), and \sfg{}'s gain over DFedAvg. Within a single dataset, DFedAvg's drop varies across the five structures by at most
0.78\,pp ($u=0.6$) and 0.52\,pp ($u=0.8$).}
\label{tab:anatomy}
\small
\adjustbox{max width=\linewidth}{%
\begin{tabular}{lcccccc}
\toprule
 & \multicolumn{3}{c}{$u=0.6$} & \multicolumn{3}{c}{$u=0.8$} \\
\cmidrule(lr){2-4}\cmidrule(lr){5-7}
 & DFedAvg & \sfg{} & gain & DFedAvg & \sfg{} & gain \\
\midrule
\multicolumn{7}{l}{\emph{Outage structure at matched rate}} \\
Independent (bursty) & 4.47 & 1.11 & +3.36 & 8.38 & 4.59 & +3.79 \\
Scattered, 20 groups & 4.52 & 1.28 & +3.24 & 8.31 & 5.35 & +2.96 \\
Scattered, 5 groups & 4.23 & 1.93 & +2.30 & 8.22 & 6.46 & +1.76 \\
Regional, 20 regions & 4.47 & 2.74 & +1.73 & 8.55 & 7.22 & +1.34 \\
Regional, 5 regions & 4.00 & 3.24 & +0.76 & 8.43 & 7.71 & +0.72 \\
\midrule
\multicolumn{7}{l}{\emph{What the outage disables (bursty chains)}} \\
Full outage & 4.50 & 1.05 & +3.45 & 8.37 & 4.61 & +3.76 \\
Cannot communicate (still trains) & 2.89 & 2.98 & -0.09 & 4.50 & 4.50 & -0.00 \\
Cannot train (shares, adopts aggregate) & 7.70 & 5.82 & +1.88 & 13.29 & 9.03 & +4.26 \\
Cannot train (still shares stale model) & 9.28 & 1.09 & +8.20 & 11.59 & 1.71 & +9.88 \\
\bottomrule
\end{tabular}
}
\end{table}

\begin{table}[h]
\centering
\caption{\textbf{Why model memory does not help \sfgl{}.} For an available receiver and an offline neighbour, we compare two
models. The frozen model is the neighbour's current model, which the receiver would see if the neighbour kept sharing. The
remembered copy was last received while both were online. Traces and graphs only, 3 seeds.}
\label{tab:memory}
\small
\adjustbox{max width=\linewidth}{%
\begin{tabular}{lcccc}
\toprule
 & \multicolumn{2}{c}{mean age (rounds)} & \multicolumn{2}{c}{within 2 rounds (\%)} \\
\cmidrule(lr){2-3}\cmidrule(lr){4-5}
Outage process & frozen model & remembered copy & frozen model & remembered copy \\
\midrule
i.i.d.\ $u{=}0.6$ & 2.5 & 7.1 & 64.3 & 16.5 \\
bursty $u{=}0.6$ & 9.6 & 18.1 & 19.8 & 9.1 \\
bursty $u{=}0.8$ & 9.7 & 32.5 & 19.4 & 6.5 \\
hetero $u{=}0.6$ & 4.9 & 11.2 & 36.5 & 15.1 \\
hetero $u{\approx}0.77$ & 4.9 & 17.4 & 37.0 & 10.4 \\
FLASH relaxed & 36.9 & 50.2 & 10.8 & 5.6 \\
FLASH strict & 49.7 & 89.7 & 4.3 & 2.2 \\
STUNner & 61.9 & 106.1 & 3.9 & 2.0 \\
STUNner 6-day & 43.0 & 86.6 & 7.2 & 3.3 \\
\bottomrule
\end{tabular}
}
\end{table}

\paragraph{Controls.}
\label{app:controls}
\Cref{tab:controls} separates the gain from staleness-aware weighting from the effect of lowering the receiver's self-weight.
Lowering every client's self-weight to a constant, independent of age, gains 0.2--0.7\,pp. It does so equally with and
without outages.
Metropolis--Hastings weights are slightly worse than uniform.
\sfg{} gains $+1.9$ to $+3.8$\,pp under outages. Without outages, it is identical to DFedAvg up to GPU nondeterminism
($-0.04\pm0.15$\,pp over 15 runs). This is also the noise floor of our win counts.
The two effects are orthogonal. On top of the constant self-weight $c=0.25$, \sfg{} adds another $+2.1$ to $+3.5$\,pp under
outages. It beats that control in 15 of 15 paired runs in every outage column.

\begin{table}[h]
\centering
\caption{\textbf{Controls.} Gain over DFedAvg (pp), pooled over CIFAR-10, CIFAR-100 and FEMNIST, with wins/runs against DFedAvg.
The last row is paired against the constant-self-weight control instead. Age-independent self-weights help a little, and equally
without outages. \sfg{} helps only when there are stale models to discount, and its gain adds to a tuned self-weight. Bold: best value in each column, last row excluded.}
\label{tab:controls}
\small
\adjustbox{max width=\linewidth}{%
\begin{tabular}{lcccc}
\toprule
Rule & no outage & i.i.d.\ $u{=}0.6$ & bursty $u{=}0.8$ & FLASH strict \\
\midrule
Metropolis--Hastings weights & -0.39 \tiny{(2/15)} & -0.29 \tiny{(3/15)} & -0.08 \tiny{(3/15)} & -0.21 \tiny{(2/15)} \\
constant self-weight $c{=}0.5$ & +0.35 \tiny{(15/15)} & +0.47 \tiny{(13/15)} & +0.29 \tiny{(14/15)} & +0.21 \tiny{(12/15)} \\
constant self-weight $c{=}0.25$ & +0.65 \tiny{(15/15)} & +0.72 \tiny{(13/15)} & +0.52 \tiny{(15/15)} & +0.26 \tiny{(11/15)} \\
AoI + IPW & $\pm$0 by def. & +1.78 \tiny{(15/15)} & +2.46 \tiny{(15/15)} & +1.25 \tiny{(15/15)} \\
\textbf{SFG} & -0.04 \tiny{(2/15)} & +2.59 \tiny{(15/15)} & +3.79 \tiny{(15/15)} & +1.89 \tiny{(15/15)} \\
\textbf{SFG} with base self-weight $c{=}0.25$ & +0.64 \tiny{(15/15)} & \textbf{+3.20} \tiny{(15/15)} & \textbf{+4.03} \tiny{(15/15)} & \textbf{+2.33} \tiny{(15/15)} \\
Copy a random candidate (no staleness information) & \textbf{+0.86} \tiny{(10/15)} & -0.42 \tiny{(5/15)} & -0.79 \tiny{(1/15)} & -2.15 \tiny{(0/15)} \\
\midrule
\textbf{SFG} ($c{=}0.25$) $-$ constant self-weight $c{=}0.25$ & -0.01 \tiny{(3/15)} & +2.48 \tiny{(15/15)} & +3.51 \tiny{(15/15)} & +2.07 \tiny{(15/15)} \\
\bottomrule
\end{tabular}
}
\end{table}

\paragraph{Both halves.} \Cref{tab:decomp} gives the numbers behind \cref{fig:mech}c.

\begin{table}[h]
\centering
\caption{\textbf{Both halves of the discount.} Bursty $u{=}0.8$, heterogeneous $u{\approx}0.77$, and FLASH relaxed and strict,
3 datasets $\times$ 5 seeds. Wins/runs against DFedAvg are in brackets. The last column gives paired wins of \sfg{} over the rule. Bold: best gain.}
\label{tab:decomp}
\small
\adjustbox{max width=\linewidth}{%
\begin{tabular}{lcc}
\toprule
Discounted & Gain vs DFedAvg (pp) & \sfg{} wins \\
\midrule
Neighbours only (AoI) & +0.99 \tiny{(53/60)} & 58/60 \\
Receiver only ($0.5^{\tau_i}$) & +1.05 \tiny{(60/60)} & 60/60 \\
Receiver only, reset on return & +1.05 \tiny{(60/60)} & 60/60 \\
Both, relative (\sfg{}) & \textbf{+2.48} \tiny{(60/60)} & -- \\
\bottomrule
\end{tabular}
}
\end{table}

\paragraph{How often is the receiver the stalest candidate, and where do lags differ from ages?}
\label{app:stalest}
\Cref{tab:stalest} measures the premise of the paper directly from the outage traces and graphs, with no training. At every
aggregation of an available client with at least one live neighbour, we record whether the client is returning
($\tau_i\ge1$). If so, we record whether its own model is strictly older than every live neighbour's.

\emph{The premise holds on real traces.} On the three real traces of the main grid, a returning client is the strictly stalest
candidate in 85--93\% of returns, and in 83\% on the six-day window. Under i.i.d.\ outages, most returns come after a single round
and neighbours are often just as stale. The premise then holds less often (37\%).

\emph{Lags differ from ages where \sfgl{} helps.} Lags depend only on the trace, the graph and the weights. The same pass therefore
also tracks them along a DFedAvg trajectory. We define the lag--age discrepancy as the mean over candidates of
\begin{equation*}
\big|(\ell_j-\ell_{\min})-(\tau_j-\tau_{\min})\big| .
\end{equation*}
It measures how far the relative staleness that \sfgl{} sees differs from the one \sfg{} sees. It is largest on the processes
where many neighbours are stale at once ($u\approx0.8$ with homogeneous availability). These are exactly the processes where
\sfgl{} improves on \sfg{}. Relative lags are also more compressed than relative ages. A neighbour that trained after averaging
with stale models is young but lagged, and a returning client's neighbours often carry lag of their own.

\begin{table}[h]
\centering
\caption{\textbf{The stalest model in the room, and where lags differ from ages.} kNN graph, $N=100$, 300 rounds, mean over 3
seeds. Traces and graphs only, with lags along DFedAvg trajectories. Spread: mean over aggregations of the largest relative age or
lag among the candidates. Last column: paired gain of \sfgl{} over \sfg{} (3 datasets $\times$ 5 seeds).}
\label{tab:stalest}
\small
\setlength{\tabcolsep}{4pt}
\adjustbox{max width=\linewidth}{%
\begin{tabular}{lcccccc}
\toprule
Outage process & returning & strictly & no live & lag--age & spread of & \sfg-L $-$ \sfg{} \\
 & receivers (\%) & stalest (\%) & neighbour (\%) & discrepancy & ages / lags & (pp) \\
\midrule
i.i.d.\ $u{=}0.6$ & 59.9 & 36.7 & 3.2 & 0.8 & 3.2 / 1.6 & $-$0.01 \\
bursty $u{=}0.6$ & 14.8 & 79.7 & 2.9 & 1.3 & 4.9 / 3.1 & +0.43 \\
bursty $u{=}0.8$ & 39.6 & 68.1 & 21.6 & 2.8 & 8.3 / 2.7 & +3.03 \\
hetero $u{=}0.6$ & 26.6 & 63.9 & 2.5 & 1.2 & 4.2 / 3.5 & $-$0.37 \\
hetero $u{\approx}0.77$ & 51.0 & 53.0 & 10.9 & 1.7 & 5.4 / 3.1 & $-$0.03 \\
FLASH relaxed & 8.6 & 84.8 & 2.9 & 1.7 & 4.5 / 4.4 & +0.27 \\
FLASH strict & 7.9 & 90.3 & 22.4 & 3.2 & 8.4 / 4.1 & +2.64 \\
STUNner & 7.0 & 93.0 & 26.6 & 3.6 & 9.2 / 4.0 & +2.42 \\
STUNner 6-day & 17.1 & 82.8 & 27.8 & 4.2 & 10.9 / 4.5 & +2.96 \\
\bottomrule
\end{tabular}
}
\end{table}

\begin{table}[h]
\centering
\caption{\textbf{Distance law against ages and against lags.} The rate $\rho$ is fitted to $\E d^2/\E d^2_0=1+\tfrac{\rho}{2}x$ over
all live pairs. Here $x$ is the age difference or the (rounded) effective-lag difference, with $1\le x\le20$. Trajectories of the
given rule, four outage processes, 3 seeds. The rate is 1.4--3.1$\times$ larger against lags.}
\label{tab:laglaw}
\small
\adjustbox{max width=\linewidth}{%
\begin{tabular}{lcccccc}
\toprule
 & \multicolumn{2}{c}{CIFAR-10} & \multicolumn{2}{c}{CIFAR-100} & \multicolumn{2}{c}{FEMNIST} \\
\cmidrule(lr){2-3}\cmidrule(lr){4-5}\cmidrule(lr){6-7}
Trajectory & ages & lags & ages & lags & ages & lags \\
\midrule
DFedAvg & 0.18 & 0.45 & 0.20 & 0.48 & 0.32 & 0.84 \\
AoI & 0.26 & 0.59 & 0.26 & 0.69 & 0.41 & 1.25 \\
PWG & 0.24 & 0.45 & 0.23 & 0.49 & 0.42 & 0.90 \\
SFG & 0.48 & 0.67 & 0.56 & 0.83 & 0.76 & 1.24 \\
\bottomrule
\end{tabular}
}
\end{table}

\paragraph{Returning clients.}
\label{app:returns}
\Cref{tab:returns} expands \cref{fig:mech}b. It reports three quantities for the returning client's model: the validation loss
right after aggregation, the validation accuracy after the local training of the return round, and the accuracy five rounds later.
We use bursty $u{=}0.8$, heterogeneous $u{\approx}0.77$ and FLASH strict, with 3 datasets $\times$ 3 seeds. Bins mix different
clients, because rarely available clients have longer outages. Comparisons are therefore meaningful within a column.

\begin{table}[h]
\centering
\caption{\textbf{Returning clients.} Loss after aggregation, before local training. Accuracy (\%) after the return round's local
training, and five rounds later. Bold: best value in each column, lowest loss and highest accuracy.}
\label{tab:returns}
\scriptsize
\setlength{\tabcolsep}{3pt}
\adjustbox{max width=\linewidth}{%
\begin{tabular}{lcccccccccccc}
\toprule
 & \multicolumn{3}{c}{outage 1} & \multicolumn{3}{c}{outage 2--4} & \multicolumn{3}{c}{outage 5--16} & \multicolumn{3}{c}{outage 17--60} \\
\cmidrule(lr){2-4}\cmidrule(lr){5-7}\cmidrule(lr){8-10}\cmidrule(lr){11-13}
Rule & loss agg. & acc. & acc.+5 & loss agg. & acc. & acc.+5 & loss agg. & acc. & acc.+5 & loss agg. & acc. & acc.+5 \\
\midrule
DFedAvg & 2.13 & 48.5 & 48.8 & 2.14 & 48.3 & 48.7 & 2.14 & 48.5 & 49.1 & 2.23 & 47.5 & 49.4 \\
AoI & \textbf{2.10} & 50.0 & 50.4 & \textbf{2.10} & 49.8 & 50.2 & \textbf{2.09} & 50.0 & 50.7 & \textbf{2.18} & 48.9 & 50.8 \\
AoI + IPW & 2.14 & 50.5 & 50.9 & 2.16 & 50.3 & 50.7 & 2.18 & 50.5 & 51.2 & 2.36 & 49.0 & 51.2 \\
SFG & \textbf{2.10} & 51.9 & 52.4 & 2.35 & 51.6 & 52.2 & 2.67 & 51.8 & 52.6 & 2.87 & 51.0 & 52.8 \\
SFG-L & 2.13 & \textbf{53.2} & \textbf{53.6} & 2.28 & \textbf{53.1} & \textbf{53.6} & 2.48 & \textbf{53.7} & \textbf{54.4} & 2.73 & \textbf{53.8} & \textbf{55.5} \\
\bottomrule
\end{tabular}
}
\end{table}

\begin{figure}[h]
\centering
\includegraphics[width=0.6\textwidth]{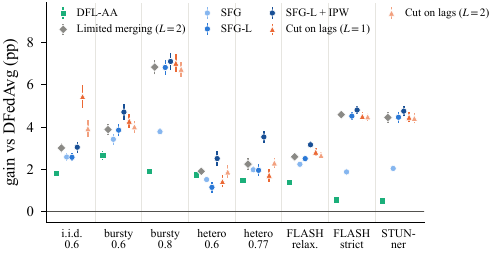}
\caption{\textbf{Gain over DFedAvg by outage process} (3 datasets $\times$ 5 seeds, bars: s.e.\ of the paired gains). Lags help
where many neighbours are stale at once (bursty $u{=}0.8$, FLASH strict, STUNner). Limited merging with $L=2$ on max-inherited
counts (no IPW) tracks \sfgl{} everywhere, slightly ahead under i.i.d.\ and heterogeneous rates. The link correction (IPW) adds most
under heterogeneous rates. \sfgcut{} with $L=1$ leads under i.i.d.\ outages and trails under heterogeneous rates. $L=2$ is the most
even.}
\label{fig:main}
\end{figure}

\begin{table}[h]
\centering
\caption{\textbf{Per-cell results.} Mean personalised accuracy (\%) $\pm$ s.d.\ over 5 seeds. Bold: best zero-overhead rule among
the columns shown, with ties at the displayed precision bolded jointly. Catch-up uses 1.71$\times$ local computation. \sfg{} on
counts + IPW is as good as \sfgl{} + IPW in \cref{tab:main} and is not shown. Limited: limited merging with inherited counts and
$L{=}2$ ($L=1$ is in \cref{tab:inherit}). SFG$^\dagger$: \sfg{} weights on the same counts. SFG-L$_0$: \sfgl{} with $\gamma=0$
(least-lagged candidate only).}
\label{tab:cells}
\scriptsize
\setlength{\tabcolsep}{2.6pt}
\adjustbox{max width=\linewidth}{%
\begin{tabular}{llccccccccccccc}
\toprule
Data & Outage & DFedAvg & AoI+IPW & DFL-AA & Limited & SFG$^\dagger$ & SFG & SFG+IPW & SFG-L$_0$ & SFG-L & SFG-L+IPW & Cut$_1$ & Cut$_2$ & SFG-L+CU \\
\midrule
\multirow{8}{*}{\rotatebox{90}{CIFAR-10}} & i.i.d.\ $u{=}0.6$ & 69.0\tiny{$\pm$0.5} & 70.8\tiny{$\pm$0.5} & 70.9\tiny{$\pm$0.6} & 71.9\tiny{$\pm$0.4} & 72.3\tiny{$\pm$0.5} & 71.6\tiny{$\pm$0.6} & 72.0\tiny{$\pm$0.6} & 73.9\tiny{$\pm$0.7} & 71.5\tiny{$\pm$0.7} & 71.9\tiny{$\pm$0.6} & \textbf{74.0}\tiny{$\pm$0.6} & 72.3\tiny{$\pm$0.5} & 73.9\tiny{$\pm$0.7} \\
 & bursty $u{=}0.6$ & 68.9\tiny{$\pm$0.4} & 71.9\tiny{$\pm$0.4} & 71.7\tiny{$\pm$0.4} & 73.0\tiny{$\pm$0.6} & 72.8\tiny{$\pm$0.6} & 72.6\tiny{$\pm$0.5} & 73.2\tiny{$\pm$0.7} & 73.3\tiny{$\pm$0.6} & 73.0\tiny{$\pm$0.6} & \textbf{73.7}\tiny{$\pm$0.5} & 73.3\tiny{$\pm$0.5} & 73.1\tiny{$\pm$0.5} & 74.1\tiny{$\pm$0.9} \\
 & bursty $u{=}0.8$ & 65.3\tiny{$\pm$1.0} & 67.6\tiny{$\pm$0.6} & 67.0\tiny{$\pm$0.8} & 71.6\tiny{$\pm$0.4} & 71.6\tiny{$\pm$0.6} & 68.6\tiny{$\pm$0.4} & 68.8\tiny{$\pm$0.6} & \textbf{71.8}\tiny{$\pm$0.6} & 71.6\tiny{$\pm$0.6} & \textbf{71.8}\tiny{$\pm$0.6} & \textbf{71.8}\tiny{$\pm$0.6} & 71.6\tiny{$\pm$0.6} & 73.0\tiny{$\pm$0.7} \\
 & hetero $u{=}0.6$ & 69.0\tiny{$\pm$0.8} & 70.4\tiny{$\pm$0.8} & 70.5\tiny{$\pm$0.9} & 70.9\tiny{$\pm$1.1} & 70.6\tiny{$\pm$1.0} & 70.4\tiny{$\pm$0.9} & \textbf{71.5}\tiny{$\pm$0.8} & 70.1\tiny{$\pm$1.1} & 70.2\tiny{$\pm$1.0} & \textbf{71.5}\tiny{$\pm$0.9} & 70.4\tiny{$\pm$0.7} & 70.7\tiny{$\pm$0.9} & 72.0\tiny{$\pm$1.0} \\
 & hetero $u{\approx}0.77$ & 66.2\tiny{$\pm$1.0} & 67.6\tiny{$\pm$0.9} & 67.5\tiny{$\pm$0.8} & 68.6\tiny{$\pm$1.3} & 68.4\tiny{$\pm$1.4} & 68.4\tiny{$\pm$0.9} & 69.1\tiny{$\pm$1.0} & 66.9\tiny{$\pm$1.8} & 68.3\tiny{$\pm$1.2} & \textbf{69.6}\tiny{$\pm$0.9} & 68.0\tiny{$\pm$1.5} & 68.7\tiny{$\pm$1.5} & 70.6\tiny{$\pm$1.6} \\
 & FLASH relaxed & 68.1\tiny{$\pm$0.6} & 69.8\tiny{$\pm$0.9} & 69.6\tiny{$\pm$0.8} & 71.1\tiny{$\pm$0.8} & 71.1\tiny{$\pm$0.8} & 70.7\tiny{$\pm$0.8} & 71.2\tiny{$\pm$0.9} & 71.2\tiny{$\pm$0.8} & 71.2\tiny{$\pm$0.8} & \textbf{71.8}\tiny{$\pm$0.9} & 71.3\tiny{$\pm$0.7} & 71.3\tiny{$\pm$1.0} & 71.7\tiny{$\pm$1.1} \\
 & FLASH strict & 64.1\tiny{$\pm$0.9} & 65.4\tiny{$\pm$1.1} & 64.6\tiny{$\pm$0.7} & 68.8\tiny{$\pm$0.8} & 68.9\tiny{$\pm$0.9} & 66.4\tiny{$\pm$1.0} & 66.5\tiny{$\pm$0.8} & 68.8\tiny{$\pm$1.0} & 68.7\tiny{$\pm$0.8} & \textbf{69.1}\tiny{$\pm$0.7} & 68.7\tiny{$\pm$0.9} & 68.8\tiny{$\pm$0.9} & 69.3\tiny{$\pm$0.7} \\
 & STUNner & 64.3\tiny{$\pm$0.8} & 65.4\tiny{$\pm$0.4} & 64.9\tiny{$\pm$0.8} & 68.6\tiny{$\pm$0.6} & 68.6\tiny{$\pm$0.7} & 66.4\tiny{$\pm$0.4} & 66.3\tiny{$\pm$0.5} & 68.6\tiny{$\pm$0.7} & 68.6\tiny{$\pm$0.6} & \textbf{68.9}\tiny{$\pm$0.4} & 68.5\tiny{$\pm$0.7} & 68.5\tiny{$\pm$0.7} & 68.8\tiny{$\pm$0.9} \\
 & \textit{no outage} & \multicolumn{13}{l}{\textit{DFedAvg 73.7. Every staleness-based rule coincides with DFedAvg.}} \\
\midrule
\multirow{8}{*}{\rotatebox{90}{CIFAR-100}} & i.i.d.\ $u{=}0.6$ & 28.0\tiny{$\pm$0.6} & 30.2\tiny{$\pm$0.6} & 30.5\tiny{$\pm$0.7} & 32.2\tiny{$\pm$0.7} & 32.9\tiny{$\pm$0.8} & 31.4\tiny{$\pm$0.5} & 32.2\tiny{$\pm$0.6} & \textbf{36.6}\tiny{$\pm$0.4} & 31.4\tiny{$\pm$0.8} & 32.3\tiny{$\pm$0.9} & 36.0\tiny{$\pm$0.8} & 34.0\tiny{$\pm$1.0} & 33.0\tiny{$\pm$1.0} \\
 & bursty $u{=}0.6$ & 27.5\tiny{$\pm$0.4} & 31.2\tiny{$\pm$0.5} & 30.9\tiny{$\pm$0.4} & 32.4\tiny{$\pm$0.6} & 32.3\tiny{$\pm$0.7} & 31.8\tiny{$\pm$0.6} & 32.9\tiny{$\pm$0.9} & 33.0\tiny{$\pm$0.6} & 32.5\tiny{$\pm$0.7} & \textbf{33.8}\tiny{$\pm$0.5} & 33.1\tiny{$\pm$0.7} & 32.6\tiny{$\pm$0.7} & 33.2\tiny{$\pm$0.6} \\
 & bursty $u{=}0.8$ & 23.5\tiny{$\pm$0.4} & 26.2\tiny{$\pm$0.5} & 25.6\tiny{$\pm$0.4} & 31.9\tiny{$\pm$0.6} & 31.9\tiny{$\pm$0.6} & 27.8\tiny{$\pm$0.7} & 28.2\tiny{$\pm$0.8} & 32.2\tiny{$\pm$0.5} & 31.8\tiny{$\pm$0.8} & \textbf{32.5}\tiny{$\pm$0.8} & 32.4\tiny{$\pm$0.5} & 31.8\tiny{$\pm$0.6} & 33.1\tiny{$\pm$0.8} \\
 & hetero $u{=}0.6$ & 27.1\tiny{$\pm$0.8} & 29.5\tiny{$\pm$0.6} & 29.6\tiny{$\pm$0.7} & 29.5\tiny{$\pm$0.7} & 29.0\tiny{$\pm$0.8} & 29.0\tiny{$\pm$0.7} & \textbf{30.9}\tiny{$\pm$0.7} & 28.7\tiny{$\pm$1.0} & 29.0\tiny{$\pm$1.1} & \textbf{30.9}\tiny{$\pm$0.8} & 29.2\tiny{$\pm$1.3} & 29.8\tiny{$\pm$1.1} & 31.0\tiny{$\pm$0.8} \\
 & hetero $u{\approx}0.77$ & 24.3\tiny{$\pm$0.6} & 25.7\tiny{$\pm$0.9} & 26.3\tiny{$\pm$0.6} & 27.3\tiny{$\pm$1.5} & 26.8\tiny{$\pm$1.9} & 26.7\tiny{$\pm$0.9} & 27.8\tiny{$\pm$0.9} & 25.5\tiny{$\pm$2.3} & 26.7\tiny{$\pm$1.9} & \textbf{28.9}\tiny{$\pm$0.9} & 26.7\tiny{$\pm$1.9} & 27.1\tiny{$\pm$1.4} & 29.0\tiny{$\pm$1.3} \\
 & FLASH relaxed & 26.2\tiny{$\pm$0.3} & 28.0\tiny{$\pm$0.3} & 27.7\tiny{$\pm$0.2} & 28.7\tiny{$\pm$0.6} & 28.6\tiny{$\pm$0.6} & 28.2\tiny{$\pm$0.4} & 29.0\tiny{$\pm$0.3} & 28.9\tiny{$\pm$0.6} & 28.5\tiny{$\pm$0.4} & \textbf{29.5}\tiny{$\pm$0.4} & 29.1\tiny{$\pm$0.8} & 28.8\tiny{$\pm$0.3} & 29.3\tiny{$\pm$0.4} \\
 & FLASH strict & 22.8\tiny{$\pm$0.7} & 24.2\tiny{$\pm$0.6} & 23.6\tiny{$\pm$0.7} & 27.0\tiny{$\pm$0.6} & 26.9\tiny{$\pm$0.7} & 24.7\tiny{$\pm$0.7} & 25.0\tiny{$\pm$0.9} & 26.8\tiny{$\pm$0.6} & 27.0\tiny{$\pm$0.6} & \textbf{27.3}\tiny{$\pm$0.6} & 26.9\tiny{$\pm$0.5} & 27.0\tiny{$\pm$0.6} & 27.2\tiny{$\pm$0.9} \\
 & STUNner & 22.5\tiny{$\pm$0.4} & 23.5\tiny{$\pm$0.5} & 22.8\tiny{$\pm$0.8} & 26.2\tiny{$\pm$0.5} & 26.3\tiny{$\pm$0.6} & 24.2\tiny{$\pm$0.7} & 24.3\tiny{$\pm$0.8} & 26.3\tiny{$\pm$0.7} & 26.2\tiny{$\pm$0.3} & \textbf{26.6}\tiny{$\pm$0.5} & 26.3\tiny{$\pm$0.6} & 26.3\tiny{$\pm$0.5} & 26.5\tiny{$\pm$0.4} \\
 & \textit{no outage} & \multicolumn{13}{l}{\textit{DFedAvg 33.1. Every staleness-based rule coincides with DFedAvg.}} \\
\midrule
\multirow{8}{*}{\rotatebox{90}{FEMNIST}} & i.i.d.\ $u{=}0.6$ & 83.7\tiny{$\pm$0.7} & 85.1\tiny{$\pm$1.0} & 84.9\tiny{$\pm$0.9} & 85.7\tiny{$\pm$0.9} & 86.2\tiny{$\pm$0.9} & 85.5\tiny{$\pm$0.9} & 85.7\tiny{$\pm$0.9} & \textbf{87.1}\tiny{$\pm$1.1} & 85.6\tiny{$\pm$0.8} & 85.8\tiny{$\pm$0.7} & \textbf{87.1}\tiny{$\pm$1.1} & 86.2\tiny{$\pm$0.8} & 86.8\tiny{$\pm$0.7} \\
 & bursty $u{=}0.6$ & 83.6\tiny{$\pm$0.9} & 85.5\tiny{$\pm$0.9} & 85.3\tiny{$\pm$1.0} & 86.2\tiny{$\pm$0.9} & 86.2\tiny{$\pm$0.8} & 85.8\tiny{$\pm$1.0} & 86.2\tiny{$\pm$0.9} & 86.4\tiny{$\pm$0.8} & 86.0\tiny{$\pm$1.0} & \textbf{86.6}\tiny{$\pm$0.8} & 86.4\tiny{$\pm$0.9} & 86.2\tiny{$\pm$0.8} & 86.8\tiny{$\pm$0.7} \\
 & bursty $u{=}0.8$ & 79.5\tiny{$\pm$1.1} & 81.9\tiny{$\pm$1.0} & 81.4\tiny{$\pm$1.1} & \textbf{85.4}\tiny{$\pm$0.8} & 85.3\tiny{$\pm$0.7} & 83.2\tiny{$\pm$0.9} & 83.1\tiny{$\pm$0.9} & 85.1\tiny{$\pm$1.0} & \textbf{85.4}\tiny{$\pm$0.7} & 85.3\tiny{$\pm$0.7} & 85.2\tiny{$\pm$0.8} & 85.2\tiny{$\pm$0.8} & 86.4\tiny{$\pm$0.4} \\
 & hetero $u{=}0.6$ & 83.1\tiny{$\pm$0.4} & 84.4\tiny{$\pm$0.6} & 84.4\tiny{$\pm$0.4} & 84.6\tiny{$\pm$0.4} & 84.1\tiny{$\pm$0.6} & 84.3\tiny{$\pm$0.5} & \textbf{84.9}\tiny{$\pm$0.5} & 83.6\tiny{$\pm$0.4} & 83.5\tiny{$\pm$0.3} & 84.3\tiny{$\pm$0.3} & 83.9\tiny{$\pm$0.4} & 84.4\tiny{$\pm$0.1} & 85.2\tiny{$\pm$0.8} \\
 & hetero $u{\approx}0.77$ & 80.4\tiny{$\pm$0.9} & 81.5\tiny{$\pm$0.9} & 81.5\tiny{$\pm$0.7} & 81.7\tiny{$\pm$1.1} & 81.5\tiny{$\pm$1.0} & 81.8\tiny{$\pm$0.8} & 82.8\tiny{$\pm$0.7} & 79.1\tiny{$\pm$1.1} & 81.8\tiny{$\pm$1.1} & \textbf{83.0}\tiny{$\pm$1.0} & 81.5\tiny{$\pm$1.3} & 82.1\tiny{$\pm$1.2} & 83.7\tiny{$\pm$1.1} \\
 & FLASH relaxed & 82.3\tiny{$\pm$0.7} & 83.6\tiny{$\pm$0.8} & 83.6\tiny{$\pm$0.8} & 84.5\tiny{$\pm$0.5} & 84.4\tiny{$\pm$0.4} & 84.4\tiny{$\pm$0.6} & 84.7\tiny{$\pm$0.5} & 84.5\tiny{$\pm$0.5} & 84.4\tiny{$\pm$0.4} & \textbf{84.9}\tiny{$\pm$0.4} & 84.6\tiny{$\pm$0.5} & 84.4\tiny{$\pm$0.6} & 85.1\tiny{$\pm$0.5} \\
 & FLASH strict & 77.6\tiny{$\pm$0.6} & 78.7\tiny{$\pm$0.4} & 78.0\tiny{$\pm$0.9} & \textbf{82.5}\tiny{$\pm$0.9} & 82.3\tiny{$\pm$1.0} & 79.1\tiny{$\pm$0.6} & 79.4\tiny{$\pm$0.4} & 82.4\tiny{$\pm$0.9} & 82.4\tiny{$\pm$0.9} & \textbf{82.5}\tiny{$\pm$0.8} & \textbf{82.5}\tiny{$\pm$0.9} & 82.2\tiny{$\pm$0.9} & 82.6\tiny{$\pm$1.0} \\
 & STUNner & 76.3\tiny{$\pm$0.4} & 77.7\tiny{$\pm$0.2} & 76.9\tiny{$\pm$1.0} & 81.7\tiny{$\pm$0.9} & 81.8\tiny{$\pm$0.8} & 78.7\tiny{$\pm$0.4} & 78.7\tiny{$\pm$0.3} & 81.6\tiny{$\pm$1.1} & 81.8\tiny{$\pm$0.9} & \textbf{82.0}\tiny{$\pm$0.9} & 81.7\tiny{$\pm$0.9} & 81.5\tiny{$\pm$0.9} & 82.5\tiny{$\pm$0.7} \\
 & \textit{no outage} & \multicolumn{13}{l}{\textit{DFedAvg 86.7. Every staleness-based rule coincides with DFedAvg.}} \\
\bottomrule
\end{tabular}
}
\end{table}

\begin{table}[h]
\centering
\caption{\textbf{Backbones} (bursty $u{=}0.8$ and FLASH strict, 3 datasets $\times$ 5 seeds). Gain over the same backbone with
uniform weights, in pp (wins/runs). Bold: best rule in each row.}
\label{tab:backbones}
\small
\adjustbox{max width=\linewidth}{%
\begin{tabular}{lcccccc}
\toprule
Backbone & DFedAvg (\%) & AoI + IPW & Limited, $L{=}2$ & SFG & SFG-L & \sfgcut{} \\
\midrule
Full model (DFedAvg) & 55.5 & +1.85 \tiny{(30/30)} & +5.71 \tiny{(30/30)} & +2.84 \tiny{(30/30)} & +5.67 \tiny{(30/30)} & \textbf{+5.77} \tiny{(30/30)} \\
Local momentum (DFedAvgM) & 55.0 & +1.74 \tiny{(29/30)} & +3.76 \tiny{(26/30)} & +2.11 \tiny{(27/30)} & \textbf{+4.22} \tiny{(28/30)} & +3.07 \tiny{(25/30)} \\
Ditto personal models & 56.3 & +1.52 \tiny{(30/30)} & +5.19 \tiny{(30/30)} & +2.53 \tiny{(30/30)} & +5.17 \tiny{(30/30)} & \textbf{+5.29} \tiny{(30/30)} \\
Personal head (DFedPer) & 53.4 & +1.00 \tiny{(30/30)} & +2.61 \tiny{(30/30)} & +1.56 \tiny{(30/30)} & +2.67 \tiny{(30/30)} & \textbf{+2.70} \tiny{(30/30)} \\
\bottomrule
\end{tabular}
}
\end{table}

\begin{table}[h]
\centering
\caption{\textbf{ResNet-18-GN} from a pretrained backbone (150 rounds, learning rate 0.03, 3 seeds). DFedAvg accuracy (\%) and
gain over DFedAvg (pp, wins/runs). $^\dagger$On gossip-learning counts inherited by the maximum: limited merging with $L=2$, and
\sfg{} weights. CU: catch-up local steps ($1.5\times$ computation on average). Bold: best rule without extra computation in each row.}
\label{tab:resnet}
\small
\setlength{\tabcolsep}{3.5pt}
\adjustbox{max width=\linewidth}{%
\begin{tabular}{llcccccccc}
\toprule
Data & Outage & DFedAvg & AoI+IPW & Limited$^\dagger$ & SFG$^\dagger$ & SFG & SFG-L & Cut & SFG-L+CU \\
\midrule
CIFAR-10 & bursty $u{=}0.8$ & 83.9 & +0.86 \tiny{(3/3)} & +2.09 \tiny{(3/3)} & +2.32 \tiny{(3/3)} & +1.45 \tiny{(3/3)} & +2.28 \tiny{(3/3)} & \textbf{+2.34} \tiny{(3/3)} & +3.59 \tiny{(3/3)} \\
 & FLASH strict & 79.0 & +0.92 \tiny{(3/3)} & \textbf{+1.83} \tiny{(3/3)} & +1.72 \tiny{(3/3)} & +1.03 \tiny{(3/3)} & +1.44 \tiny{(3/3)} & +1.62 \tiny{(3/3)} & +2.24 \tiny{(3/3)} \\
 & STUNner & 67.8 & $-$0.06 \tiny{(1/3)} & +0.62 \tiny{(2/3)} & +0.55 \tiny{(3/3)} & +0.50 \tiny{(2/3)} & \textbf{+0.71} \tiny{(3/3)} & +0.63 \tiny{(2/3)} & +1.13 \tiny{(3/3)} \\
\midrule
CIFAR-100 & bursty $u{=}0.8$ & 30.9 & +3.97 \tiny{(3/3)} & +11.01 \tiny{(3/3)} & +10.97 \tiny{(3/3)} & +6.22 \tiny{(3/3)} & +10.74 \tiny{(3/3)} & \textbf{+11.08} \tiny{(3/3)} & +17.72 \tiny{(3/3)} \\
 & FLASH strict & 31.3 & +2.00 \tiny{(3/3)} & +4.94 \tiny{(3/3)} & +5.10 \tiny{(3/3)} & +3.26 \tiny{(3/3)} & +5.13 \tiny{(3/3)} & \textbf{+5.17} \tiny{(3/3)} & +6.35 \tiny{(3/3)} \\
 & STUNner & 25.5 & +0.74 \tiny{(2/3)} & \textbf{+2.54} \tiny{(3/3)} & +2.21 \tiny{(3/3)} & +1.39 \tiny{(3/3)} & +2.30 \tiny{(3/3)} & +2.34 \tiny{(3/3)} & +4.49 \tiny{(3/3)} \\
\bottomrule
\end{tabular}
}
\end{table}

\paragraph{Discount factor.}
\label{app:gamma}
The discount $\gamma=0.5$ was fixed on a pilot seed. We swept $\gamma$ under bursty $u{=}0.8$ and FLASH strict, on CIFAR-10 and
FEMNIST (\cref{fig:rate}d, \cref{tab:gamma,tab:gammal}).
Both rules are flat for $\gamma\in[0,0.5]$.
\sfg{} also tolerates $\gamma=0.8$. Under bursty outages, $\gamma\ge0.9$ is too mild for \sfg{}, and $\gamma=0.8$ is too
mild for \sfgl{}. FLASH strict tolerates milder discounts.
Neighbour-only AoI stays below $+1.6$\,pp for every $\gamma$. Its deficit is therefore not a tuning problem.
Across all eight processes, $\gamma=0$ is slightly better on average ($+0.22$\,pp) but fragile (Appendix~\ref{app:rate}). We recommend
$\gamma\in[0.2,0.5]$.

\begin{table}[h]
\centering
\caption{\textbf{Discount factor and calibrated weights} (CIFAR-10, gain over DFedAvg in pp, mean $\pm$ s.e.\ over 5 seeds).
\sfg{} is flat for $\gamma\in[0.2,0.8]$ here, and $\gamma=0$ appears in \cref{fig:rate}d. Neighbour-only AoI stays below $+1.6$\,pp
for every $\gamma$. PWG on absolute ages is too mild with its online estimate ($\hat\rho\approx0.2$--$0.7$) or with a small fixed
$\rho$. It matches \sfg{} with a large fixed $\rho$. Bold: best value in each column.}
\label{tab:gamma}
\small
\adjustbox{max width=\linewidth}{%
\begin{tabular}{lcc}
\toprule
Rule & bursty $u{=}0.8$ & FLASH strict \\
\midrule
SFG $\gamma{=}0.2$ & +3.19$\pm$0.30 & +2.17$\pm$0.16 \\
SFG $\gamma{=}0.5$ & \textbf{+3.31}$\pm$0.31 & +2.28$\pm$0.17 \\
SFG $\gamma{=}0.8$ & +3.06$\pm$0.16 & \textbf{+2.35}$\pm$0.15 \\
SFG $\gamma{=}0.95$ & +1.35$\pm$0.11 & +2.15$\pm$0.23 \\
\midrule
AoI $\gamma{=}0.2$ & +1.50$\pm$0.17 & +1.39$\pm$0.14 \\
AoI $\gamma{=}0.5$ & +1.53$\pm$0.21 & +1.37$\pm$0.14 \\
AoI $\gamma{=}0.8$ & +1.42$\pm$0.16 & +1.32$\pm$0.11 \\
AoI $\gamma{=}0.95$ & +0.79$\pm$0.11 & +1.33$\pm$0.20 \\
\midrule
PWG, online $\hat\rho$ & +1.49$\pm$0.10 & +2.06$\pm$0.17 \\
PWG, fixed $\rho{=}0.1$ & +1.40$\pm$0.10 & +1.97$\pm$0.17 \\
PWG, fixed $\rho{=}1$ & +2.94$\pm$0.25 & +2.30$\pm$0.19 \\
PWG, fixed $\rho{=}10$ & +3.15$\pm$0.30 & +2.24$\pm$0.17 \\
PWG, quadratic law & +1.21$\pm$0.11 & +1.91$\pm$0.25 \\
PWG, exact GLS weights & +1.74$\pm$0.09 & +2.10$\pm$0.17 \\
\bottomrule
\end{tabular}
}
\end{table}

\begin{table}[h]
\centering
\caption{\textbf{Settings} (CIFAR-10, gain over DFedAvg in pp, 5 seeds). $^a$Bursty $u{=}0.6$ with the given mean outage length.
$^b$Bursty $u{=}0.8$. Gains are small on a ring, where a returning client rarely has a live neighbour. They are also small under a
pathological two-class split, where collaboration itself loses to local training. Bold: best rule in each row.}
\label{tab:settings}
\small
\adjustbox{max width=\linewidth}{%
\begin{tabular}{llcccccc}
\toprule
Factor & Setting & DFedAvg (\%) & Local & AoI & PWG & \textbf{SFG} & \textbf{SFG-L} \\
\midrule
Mean outage length$^a$ & 2 rounds & 69.2 & -- & +0.39 & +0.39 & \textbf{+1.75} & +1.54 \\
 & 5 rounds & 69.3 & -- & +1.81 & +1.26 & +3.69 & \textbf{+4.21} \\
 & 20 rounds & 68.7 & -- & +1.96 & +2.54 & +3.38 & \textbf{+3.77} \\
 & 40 rounds & 68.8 & -- & +1.76 & +2.53 & +2.99 & \textbf{+3.26} \\
\midrule
Topology$^b$ & Erd\H{o}s--R\'enyi & 65.4 & -- & +1.36 & +1.64 & +3.53 & \textbf{+7.25} \\
 & random 6-regular & 65.4 & -- & +1.09 & +1.28 & +3.25 & \textbf{+6.91} \\
 & ring & 64.4 & -- & $-$0.12 & +0.45 & +0.44 & \textbf{+1.59} \\
\midrule
Clients$^b$ & $N{=}50$ & 65.6 & -- & +1.94 & +1.57 & +3.87 & \textbf{+6.78} \\
 & $N{=}300$ & 62.8 & -- & +0.70 & +1.40 & +2.76 & \textbf{+5.06} \\
\midrule
Data skew$^b$ & Dirichlet $\alpha{=}0.1$ & 77.0 & +0.28 & +0.88 & +0.89 & +1.95 & \textbf{+3.49} \\
 & Dirichlet $\alpha{=}1.0$ & 54.4 & $-$4.94 & +1.31 & +1.92 & +4.46 & \textbf{+8.57} \\
 & 2 classes per client & 80.5 & \textbf{+5.49} & +1.59 & +0.60 & +0.72 & +2.10 \\
\bottomrule
\end{tabular}
}
\end{table}

\begin{table}[h]
\centering
\caption{\textbf{Unreliable links.} Links fail independently every round, on top of bursty outages ($u{=}0.8$, 3 datasets
$\times$ 5 seeds, pp against DFedAvg, wins/runs). Link failures do not change ages or lags. Bold: best rule in each row.}
\label{tab:links}
\small
\adjustbox{max width=\linewidth}{%
\begin{tabular}{lcccccc}
\toprule
Setting (bursty $u{=}0.8$) & DFedAvg (\%) & AoI + IPW & Limited, $L{=}2$ & SFG & SFG-L & \sfgcut{} \\
\midrule
30\% of links fail & 56.0 & +1.74 \tiny{(15/15)} & +6.11 \tiny{(15/15)} & +2.64 \tiny{(15/15)} & +6.11 \tiny{(15/15)} & \textbf{+6.17} \tiny{(15/15)} \\
60\% of links fail & 55.6 & +1.01 \tiny{(15/15)} & \textbf{+4.64} \tiny{(15/15)} & +1.49 \tiny{(15/15)} & +4.62 \tiny{(15/15)} & +4.61 \tiny{(15/15)} \\
\bottomrule
\end{tabular}
}
\end{table}

\begin{table}[h]
\centering
\caption{\textbf{Discount factor of \sfgl{}} (gain over DFedAvg in pp, 5 seeds). The rule is flat for $\gamma\in[0,0.5]$. Under
bursty outages, lags call for a firmer discount than ages. Bold: best $\gamma$ in each row.}
\label{tab:gammal}
\small
\adjustbox{max width=\linewidth}{%
\begin{tabular}{lccccc}
\toprule
\sfg-L, $\gamma=$ & 0.0 & 0.2 & 0.5 & 0.8 & 0.95 \\
\midrule
CIFAR-10, bursty & +6.55 & \textbf{+6.65} & +6.33 & +3.24 & +0.72 \\
CIFAR-10, FLASH strict & \textbf{+4.69} & +4.66 & +4.63 & +4.64 & +3.28 \\
FEMNIST, bursty & +5.56 & +5.59 & \textbf{+5.84} & +3.44 & +0.81 \\
FEMNIST, FLASH strict & +4.71 & \textbf{+4.93} & +4.71 & +4.36 & +3.08 \\
\bottomrule
\end{tabular}
}
\end{table}

\paragraph{Further baselines.}
\Cref{tab:extra} compares additional variants: local counts without inheritance, PWG on absolute ages, the non-parametric PWG-NP,
and stronger update amplification. Counts that are not inherited drift apart under heterogeneous availability. A hard threshold
on them then cuts frequently available clients off from everyone else. Inheriting counts, as gossip learning prescribes, removes
this failure (\cref{tab:main}).

\begin{table}[h]
\centering
\caption{\textbf{Further baselines} on the main grid. Gain over DFedAvg, 95\% stratified $t$-interval, and wins over paired runs.
$^a$Seven processes (no STUNner), 105 runs. Bold: best gain without extra computation.}
\label{tab:extra}
\small
\adjustbox{max width=\linewidth}{%
\begin{tabular}{lccc}
\toprule
Rule & Gain vs DFedAvg (pp) & 95\% CI & Wins \\
\midrule
Reset own model on return & +1.00 & [0.94, 1.06] & 120/120 \\
Limited merging (counts), $L{=}2$ & +3.70 & [3.57, 3.82] & 120/120 \\
\sfgcut{}, $L{=}2$ & \textbf{+3.81} & [3.66, 3.96] & 119/120 \\
Least-lagged only ($\gamma{=}0$, lags) & +3.70 & [3.51, 3.90] & 112/120 \\
PWG-L, horizon rate $H\rho{=}10$ & +3.56 & [3.41, 3.71] & 112/120 \\
Oracle horizon-GLS, $H\rho{=}10$ & +3.75 & [3.61, 3.89] & 118/120 \\
Model memory + \sfg-L & +2.87 & [2.72, 3.03] & 112/120 \\
PWG (calibrated, relative ages) & +1.48 & [1.43, 1.54] & 120/120 \\
PWG, absolute ages & +1.46 & [1.41, 1.51] & 120/120 \\
PWG-NP (non-parametric, ages) & +1.13 & [1.09, 1.17] & 118/120 \\
PWG-L (calibrated, lags) & +2.44 & [2.33, 2.54] & 117/120 \\
Freshest only (\sfg{}, $\gamma{=}0$, ages) & +2.41 & [2.32, 2.51] & 115/120 \\
Model memory + \sfg{} (ages) & +2.13 & [2.06, 2.20] & 120/120 \\
Limited merging on lags (mean-inherited), $L{=}10$ & +1.97 & [1.86, 2.09] & 118/120 \\
Model-age weighting, local counts & +0.37 & [0.31, 0.43] & 91/120 \\
Limited merging, local counts, $L{=}2$ & $-$8.52 & [$-$8.97, $-$8.07] & 0/120 \\
Limited merging, local counts, $L{=}10$ & $-$6.40 & [$-$7.25, $-$5.55] & 6/120 \\
Update amplification, $c{=}3$ ($\times4$)$^a$ & $-$12.62 & [$-$14.37, $-$10.87] & 1/105 \\
\sfg{} + catch-up & +3.88 & [3.78, 3.97] & 120/120 \\
\bottomrule
\end{tabular}
}
\end{table}

\paragraph{How staleness is inherited, and how it is used.}
\label{app:inherit}
\Cref{tab:inherit} crosses two ways of inheriting staleness through merging with two ways of using it. The inheritance rules are
the maximum update count of gossip learning and the weighted mean of effective lag. The uses are the hard cut of limited merging,
with count-proportional weights among the candidates within a margin $L$ of the least stale, and the geometric discount of
\sfg{}.
\emph{The margin matters most.} Both cuts lose 1.1--1.8\,pp as $L$ grows from 2 to 10.
\emph{At a loose margin}, $L=10$, the geometric discount is better for both inheritance rules. The maximum is then slightly
better than the mean, by $+0.17$\,pp with the discount and $+0.58$\,pp with the cut.
\emph{At a tight margin, the order reverses}, though not significantly after correction. \sfgcut{} gains $+3.97$ and
$+3.81$\,pp at $L=1$ and 2, against $+3.84$ and $+3.70$\,pp for the cut on counts.
Copying the least-lagged candidate ($\gamma=0$ on lags, $+3.70$\,pp) is the $L\to0$ limit of \sfgcut{}.

\begin{table}[h]
\centering
\caption{\textbf{Inheritance $\times$ use} on the main grid (gain over DFedAvg in pp, 120 paired runs each, -- not run). Bold: best inheritance rule in each column.}
\label{tab:inherit}
\small
\adjustbox{max width=\linewidth}{%
\begin{tabular}{l ccccc c}
\toprule
 & \multicolumn{5}{c}{hard cut at margin $L$} & geometric \\
\cmidrule(lr){2-6}
Staleness inherited by & $L{=}1$ & $L{=}2$ & $L{=}5$ & $L{=}10$ & $L{=}50$ & $\gamma{=}0.5$ \\
\midrule
none (ages) & +2.51 & -- & -- & -- & -- & +2.44 \\
maximum (gossip-learning counts) & +3.84 & +3.70 & +3.15 & \textbf{+2.55} & +1.35 & \textbf{+3.66} \\
weighted mean (effective lags) & \textbf{+3.97} & \textbf{+3.81} & -- & +1.97 & -- & +3.49 \\
\bottomrule
\end{tabular}
}
\end{table}

\paragraph{The horizon, tested.}
\label{app:horizon}
\Cref{tab:horizon} compares rules on effective lags that differ only in steepness.
\emph{Steeper is better on average.} Raising the rate of the calibrated weights from the online one-step estimate to the
horizon rates $H\rho\in\{3,10,30\}$ of \cref{prop:horizon} lifts the average gain by up to 1.2\,pp. The exact GLS weights at
$H\rho=10$ gain more still.
\emph{The steepest rules are fragile.} They lose most under heterogeneous rates, where a frequently available client keeps
only itself.
\emph{Cuts are robust.} Cuts at a margin of one or two rounds keep near-ties and retain the average gain. $L=2$ is the most
robust, and $L=1$ the best on average.
The minimisers of the derived objective behave as \cref{prop:horizon}(ii) predicts. With $\eta\ge1$, they keep only exact ties on
integer lags and coincide with the nested limit ($+3.70$\,pp, fragile under heterogeneous rates). At the self-consistent slopes
$\eta\in[0.25,0.5]$ (\cref{tab:hmeasured}), they gain $+3.66$ to $+3.77$\,pp on scalar lags and $+3.87$\,pp with the exact $G$.
There, solving the simplex QP instead of clipping GLS adds 0.1\,pp. The coherence term changes nothing on average ($-0.00$\,pp,
$[-0.06,0.06]$).

All these minimisers stay 0.1--0.3\,pp below \sfgcut{}. For example, \sfgcut{} ($L=1$) beats sparsemax at $\eta=0.5$ by
$+0.21$\,pp ($p_{\text{Holm}}=0.004$). The main reason is that their weights fall within the margin, which makes them steeper under
heterogeneous rates.

\begin{table}[h]
\centering
\caption{\textbf{The horizon, tested.} Rules on effective lags, with gain over DFedAvg in pp, wins over 120 paired runs, the worst
of the 24 cells, and the mean gain under heterogeneous rates at $u{\approx}0.77$. The last eight rows minimise the objective of
\cref{prop:horizon}. They are sparsemax and QP on scalar lags with uniform imports of slope $\eta=\rho\beta$, and the oracle QP on
the exact $G$ with the same scalar import term ($\eta=0$: none), not the exact $g=A_SB$ (Appendix~\ref{app:details}). Bold: best value in each column.}
\label{tab:horizon}
\small
\adjustbox{max width=\linewidth}{%
\begin{tabular}{lcccc}
\toprule
Weights on effective lags & gain (pp) & wins & worst cell & hetero $u{\approx}0.77$ \\
\midrule
PWG-L, online $\hat\rho$ ($H=1$) & +2.44 & 117/120 & $-$0.07 & +1.92 \\
PWG-L, $H\rho=3$ & +3.35 & 113/120 & $-$0.44 & +1.59 \\
PWG-L, $H\rho=10$ & +3.56 & 112/120 & $-$0.80 & +1.01 \\
PWG-L, $H\rho=30$ & +3.59 & 107/120 & $-$0.99 & +0.50 \\
GLS on lags (nested kernel), $H\rho=10$ & +3.74 & 107/120 & $-$0.53 & +0.80 \\
oracle GLS (exact $G$), $H\rho=3$ & +3.71 & \textbf{120}/120 & +0.59 & +1.83 \\
oracle GLS (exact $G$), $H\rho=10$ & +3.75 & 118/120 & +0.49 & +1.59 \\
least-lagged only ($H\to\infty$, $\gamma=0$) & +3.70 & 112/120 & $-$1.32 & +0.19 \\
least-lagged, ties on rounded lags & +3.71 & 113/120 & $-$1.06 & +0.23 \\
\sfgcut{}, $L{=}2$ & +3.81 & 119/120 & \textbf{+1.27} & \textbf{+2.32} \\
\sfgcut{}, $L{=}1$ & \textbf{+3.97} & 119/120 & +0.76 & +1.73 \\
\sfg-L ($\gamma=0.5$) & +3.49 & 118/120 & +0.38 & +1.96 \\
sparsemax on lags, $\eta{=}2$ & +3.70 & 112/120 & $-$1.23 & +0.16 \\
sparsemax on lags, $\eta{=}1$ & +3.71 & 112/120 & $-$1.27 & +0.15 \\
sparsemax on lags, $\eta{=}0.5$ & +3.76 & 112/120 & $-$0.02 & +0.89 \\
sparsemax on lags, $\eta{=}0.25$ & +3.66 & 117/120 & +0.07 & +1.70 \\
QP on lags, $H\rho{=}3$, $\eta{=}0.5$ & +3.77 & 111/120 & $-$0.79 & +0.54 \\
QP on lags, $H\rho{=}10$, $\eta{=}0.5$ & +3.72 & 110/120 & $-$1.08 & +0.30 \\
oracle QP (exact $G$), $H\rho{=}10$, $\eta{=}0$ & +3.87 & 115/120 & +0.43 & +1.48 \\
oracle QP (exact $G$), $H\rho{=}10$, $\eta{=}0.5$ & +3.87 & 115/120 & +0.05 & +1.11 \\
\bottomrule
\end{tabular}
}
\end{table}

\paragraph{Measuring the horizon.}
\label{app:hderive}
Lags, the missed-fraction matrix $A$ and the weights of every rule on lags depend only on the outage trace, the graph and the
rule. They never depend on the models. We therefore replay each rule's weight process on the traces of every process, with 3
seeds, $N=100$, $T=300$ and the 6-nearest-neighbour graph. For every aggregation before round 250, we compute the receiver's
self-return $H$ and import vector $B$ of \cref{prop:horizon} under the realised future weights (\texttt{horizon\_derive.py}).

\Cref{tab:hmeasured} reports four quantities.
These are the self-return $H$, the effective rate $H\hat\rho$ with the online rate of PWG-L on lags, and the import slope
pooled over aggregations,
\begin{equation*}
\beta=\frac{\sum_j(g_j-g_{j_0})}{\sum_j\Delta_j}.
\end{equation*}
The fourth is the self-consistent cut margin. Running the cut with margin $L$ gives a slope $\beta(L)$. The best response to it
cuts at $1/(\hat\rho\beta(L))$. We report where the two agree, interpolated over $L\in\{0.5,1,2,3,5\}$.
These quantities hold given the rule that runs, the training length ($T=300$), and the convention that the aggregate's dispersion
counts once. The next paragraph measures how much dispersion survives local training, which lowers the effective multiplier.
There are three findings.

\emph{(1) One-step weights are too mild.} $H\ge3$ on every process and for every rule. Along \sfgl{} trajectories, $H$ is 8--23
under bursty $u{=}0.8$, heterogeneous $u{\approx}0.77$, FLASH strict and STUNner. The receiver keeps its aggregate through its
next outages. $H$ converges within about 50 rounds. One-step weights ($H=1$) are therefore too mild. PWG-L with its online rate
trails every fixed $H\rho\in\{3,10,30\}$ on six of the eight processes, all but the two heterogeneous ones.

\emph{(2) $H\hat\rho$ does not predict the best rate per process.} It lies where the calibrated rates are within 0.1\,pp of the
best on five processes (heterogeneous $u{=}0.6$, FLASH relaxed, bursty $u{=}0.8$, FLASH strict, STUNner). A constant $H\rho=30$,
however, does so on six. Much steeper weights win under i.i.d.\ and bursty $u{=}0.6$ outages, where $H\hat\rho\approx2$ against a
best rate of 30. The online rate wins under heterogeneous rates at $u{\approx}0.77$. The rank correlation between $H\hat\rho$ and
the best rate is weak (Spearman about 0.2, $p>0.5$).

\emph{(3) The import slope depends on the neighbours' rule and on the run length.} It is positive against freely averaging
neighbours, at $\beta=3.5$--12 along DFedAvg trajectories, which gives a best-response margin below one round. It is smaller as
the network's own rule steepens (0.3--2.3 along \sfgl{}). It is negative on FLASH strict and STUNner, and, along \sfgcut{} with
$L=1$, also under bursty $u{=}0.8$. There, imports concentrate on increments that the least-lagged candidate also missed, so
nesting fails. The slope grows with the remaining run length. Along DFedAvg under i.i.d.\ outages, it is 0.62, 1.95 and 3.62 when
the sum runs over 10, 50 and all remaining rounds (seed 1).

The self-consistent margin is 1.9--3.2 rounds on five processes. Under bursty $u{=}0.8$, it lies between 3 and 5, where
$\beta(L)$ changes sign. On the two long-outage traces, there is none. At $\hat\rho$, the exact minimiser of the objective keeps
1.9--3.9 candidates on average along \sfgl{} trajectories. Its support coincides with that of \sfgcut{} at $L=1$ in 47--92\% of
aggregations, and in 25--80\% along \sfgcut{}'s own trajectories. Without the coherence term, it keeps more candidates (2.2--4.0).

The derivation thus explains why the discount must be steep. The cut structure needs nesting and positive imports. These fail on
the long-outage traces, so the margin is empirical.

\begin{table}[h]
\centering
\caption{\textbf{The horizon, measured} on the traces (3 seeds, \cref{prop:horizon}). $\hat\rho$: online rate of PWG-L on lags.
$H$: mean self-return of the receiver along each rule's trajectory (cut: $L=1$). Best $H\rho$: the best rate of PWG-L among the
online estimate ($H=1$) and $\{3,10,30\}$ in the runs of \cref{tab:horizon}. $\beta$: pooled import slope. Self-consistent margin:
see text. The bracket 3--5 is where $\beta(L)$ changes sign.}
\label{tab:hmeasured}
\scriptsize
\setlength{\tabcolsep}{3pt}
\adjustbox{max width=\linewidth}{%
\begin{tabular}{lccccccccc}
\toprule
 & & \multicolumn{3}{c}{self-return $H$} & & best $H\rho$ & \multicolumn{2}{c}{import slope $\beta$} & cut margin \\
\cmidrule(lr){3-5}\cmidrule(lr){8-9}
Process & $\hat\rho$ & DFedAvg & \sfg-L & cut & $H\hat\rho$ (\sfg-L) & (PWG-L) & DFedAvg & \sfg-L & self-consistent \\
\midrule
i.i.d.\ $u{=}0.6$ & 0.43 & 3.2 & 3.5 & 5.1 & 1.5 & 30 & 3.6 & 2.27 & 2.4 \\
bursty $u{=}0.6$ & 0.54 & 3.4 & 4.5 & 4.6 & 2.4 & 30 & 4.1 & 0.33 & 3.2 \\
bursty $u{=}0.8$ & 1.05 & 8.8 & 14.0 & 16.6 & 14.7 & 30 & 8.7 & 0.49 & 3--5 \\
hetero $u{=}0.6$ & 0.61 & 3.1 & 3.9 & 4.3 & 2.4 & 3 & 3.5 & 0.53 & 2.6 \\
hetero $u{\approx}0.77$ & 1.15 & 5.2 & 7.8 & 9.1 & 8.9 & online & 5.7 & 0.94 & 1.9 \\
FLASH relaxed & 1.00 & 3.6 & 4.8 & 4.9 & 4.8 & 3 & 4.4 & 0.29 & 2.6 \\
FLASH strict & 1.50 & 12.4 & 19.2 & 19.6 & 28.9 & 30 & 12.1 & $-$0.38 & none \\
STUNner & 1.65 & 14.3 & 22.6 & 22.7 & 37.3 & 30 & 12.2 & $-$1.44 & none \\
\bottomrule
\end{tabular}
}
\end{table}

\paragraph{Carry-over through local training.}
\label{app:carry}
We measure carry-over with the fine-tuned-geometry diagnostic on DFedAvg trajectories. We use four processes (i.i.d.\ $u{=}0.6$,
bursty $u{=}0.8$, heterogeneous $u{\approx}0.77$, STUNner) with 3 datasets $\times$ 2 seeds. For each fresh receiver and each
live neighbour, we record the absolute squared distance between their models before and after the receiver's $K$ local steps
from both. We also record the pair's lag difference $\Delta\ell$. We use rounds $\ge50$, with 52--925 pairs per run. Under the
staleness model, the two distances are
\begin{equation*}
\text{before: } 2\sigma^2+\delta^2|\Delta\ell|,\qquad
\text{after: } 2\sigma^2\theta_\varepsilon^2+\delta^2\theta_\delta^2|\Delta\ell| .
\end{equation*}
Per-run least-squares fits therefore give $\theta_\varepsilon^2$ and $\theta_\delta^2$.

\emph{Findings.} Local training keeps most of the dispersion ($\theta_\varepsilon^2$: median 0.80, range 0.62--0.93 over the 24
runs). It keeps nearly all persistent error ($\theta_\delta^2$: median 0.99). This supports the persistence assumption in
\cref{ass:dynamic}, while showing substantial dispersion carry-over.

\emph{Effective rate.} We set $\theta_\varepsilon^2=0.805$ and $\theta_\delta=1$, and evaluate the horizon terms of the
carry-over extension (Appendix~\ref{app:proofs}) along \sfgl{} trajectories. They give $H/H_\varepsilon=1.5$--4 and an effective
rate $\hat\rho\lambda_GH/H_\varepsilon$ of 0.8--9 (\cref{tab:carry}). Here
\begin{equation*}
\lambda_G=\frac{\sum_j\|A_j-A_{j_0}\|^2}{\sum_j\Delta_j}\in[0.9,1.4]
\end{equation*}
converts persistent differences to lag differences. Across the measured range $\theta_\varepsilon^2\in[0.62,0.93]$, $H/H_\varepsilon$
is 2.0--6.6 at the lower end and 1.2--2.3 at the upper end. The effective rate is 1.2--15 and 0.7--5.2, respectively.

\emph{Caveats.} The intercept ratio overstates $\theta_\varepsilon^2$ when persistent differences are not proportional to the lag
difference (non-nested rows). This understates the effective rate. The extension also keeps only the receiver's own path of its
aggregate. It treats the candidates' dispersions as uncorrelated, although carry-over correlates them. It omits the dispersion
the receiver later imports from neighbours that averaged the same candidates. $H/H_\varepsilon$ is therefore a first-order
approximation.

\emph{Link to the fine-tuned geometry.} The ratio $\theta_\delta^2/\theta_\varepsilon^2\approx1.23$ matches the rate ratio
$\rho_M/\rho=1.1$--1.8 of the fine-tuned geometry (Appendix~\ref{app:ftgeom}). The carry-over is that effect measured in absolute
units.

\emph{Iteration.} Iterating the QP rule's parameters along its own trajectory does not converge exactly. The parameter $a$
alternates within about $\pm0.4$ under bursty $u{=}0.8$ and heterogeneous $u{\approx}0.77$. The coherence slope $\eta$ oscillates
more and is small (0--0.21 after damping).

\begin{table}[h]
\centering
\caption{\textbf{The horizon with carry-over} (trace seeds 1--2, $\theta_\varepsilon^2=0.805$, $\theta_\delta=1$, $\hat\rho$ from
simulator seeds 1--2). Self-consistent QP: parameters of the QP on scalar lags iterated along its own trajectory, as the mean of
the last two of four iterations. This damping was chosen after the oscillation was seen, before launch. Dispersion once: the
earlier convention ($H_\varepsilon=1$) along \sfgl{} trajectories. Both are evaluated in \cref{tab:c26b}.}
\label{tab:carry}
\scriptsize
\setlength{\tabcolsep}{3pt}
\adjustbox{max width=\linewidth}{%
\begin{tabular}{lccccccccc}
\toprule
 & & \multicolumn{4}{c}{along \sfg-L} & \multicolumn{2}{c}{self-consistent QP} & \multicolumn{2}{c}{dispersion once} \\
\cmidrule(lr){3-6}\cmidrule(lr){7-8}\cmidrule(lr){9-10}
Process & $\hat\rho$ & $H$ & $H_\varepsilon$ & $\lambda_G$ & $\hat\rho\lambda_GH/H_\varepsilon$ & $a$ & $\eta$ & $a$ & $\eta$ \\
\midrule
i.i.d.\ $u{=}0.6$ & 0.43 & 3.5 & 2.4 & 1.32 & 0.84 & 0.86 & 0.21 & 2.0 & 0.98 \\
bursty $u{=}0.6$ & 0.54 & 4.3 & 2.4 & 0.95 & 0.89 & 0.86 & 0.06 & 2.2 & 0.17 \\
bursty $u{=}0.8$ & 0.96 & 14.0 & 6.5 & 1.41 & 2.90 & 3.10 & 0.07 & 18.9 & 0.63 \\
hetero $u{=}0.6$ & 0.65 & 4.0 & 2.3 & 0.90 & 1.02 & 1.02 & 0.08 & 2.3 & 0.36 \\
hetero $u{\approx}0.77$ & 1.37 & 7.7 & 3.8 & 1.01 & 2.81 & 3.10 & 0.11 & 10.6 & 1.32 \\
FLASH relaxed & 1.00 & 4.6 & 2.2 & 0.95 & 1.96 & 1.97 & 0.06 & 4.4 & 0.30 \\
FLASH strict & 1.48 & 19.7 & 6.0 & 1.14 & 5.55 & 5.47 & 0.00 & 33.3 & 0.00 \\
STUNner & 1.85 & 22.5 & 5.6 & 1.21 & 8.95 & 8.92 & 0.00 & 50.4 & 0.00 \\
\bottomrule
\end{tabular}
}
\end{table}

\paragraph{A pre-registered predictive test.}
\label{app:prereg}
Before running the predictive test, we recorded the rule, its parameters, the evaluation seeds and five hypotheses in the
accompanying \texttt{c26b\_prereg.md}.

\emph{Design.} The rule is the QP on scalar lags with the self-consistent per-process parameters of \cref{tab:carry}. These were
measured on trace seeds 1--2, with $\hat\rho$ from simulator seeds 1--2. The rule is evaluated on the held-out seeds 3--5, with 72
runs per rule. The primary metric is the mean paired difference over the seven processes other than i.i.d., over 9
dataset--seed units. The test was to succeed if two conditions held. The measured rule had to beat the constant QP ($a=10$,
$\eta=0.5$, H1). It also had to come within 0.1\,pp of the best of the cuts and \sfgl{} (H2). The registered criteria are point
estimates.

\emph{Outcome. The test fails on H2.}
\emph{H1.} Against the constant QP, the difference is $+0.05$\,pp. H1 is met as registered, but the interval $[-0.05,0.16]$
includes zero (6/9 units).
\emph{H2.} Against the best heuristic, \sfgcut{} with $L=2$, the difference is $-0.27$\,pp ($[-0.40,-0.14]$, 1/9). H2 is
rejected.
\emph{H3.} The measured rule is within 0.1\,pp of the best comparator on 2 of the 7 processes, where 5 were required.
\emph{H4.} Parameters under the count-once convention do 0.15\,pp worse ($[0.01,0.30]$, 8/9, uncorrected $p=0.03$). H4 is
met. These parameters are also steeper, however, so this equally says that milder parameters do better outside i.i.d.\ outages.
\emph{H5.} The coherence term changes nothing there ($+0.01$\,pp, $[-0.08,0.10]$). H5 is met as registered, but the effect is
negligible. Under i.i.d.\ outages, in contrast, the term adds $+1.93$\,pp ($[1.46,2.40]$, 9/9). This is the steepness that pays
there.

\emph{Exploratory comparisons.} The measured rule trails \sfgl{} by 0.11\,pp ($[-0.24,0.02]$, 3/9). It beats one-step weights
(online PWG-L) by 0.76\,pp ($[0.58,0.94]$, 9/9). Horizon-scaled weights thus beat one-step weights. Measured per-process scales,
however, do no better than constants and do not select the rule.

\emph{Disclosure.} The comparators' results on seeds 3--5 were known at registration. No multiplicity correction was registered or
applied. The damping rule was added after the oscillation was seen, before launch. The accompanying registration is
self-attested.

\begin{table}[h]
\centering
\caption{\textbf{Pre-registered test} on held-out seeds 3--5. Gain over \sfgl{} in pp by outage process, 3 datasets $\times$ 3
seeds. Measured: per-process parameters of \cref{tab:carry}. All QPs are on scalar effective lags. Bold: best rule in each column.}
\label{tab:c26b}
\scriptsize
\setlength{\tabcolsep}{3pt}
\adjustbox{max width=\linewidth}{%
\begin{tabular}{lcccccccccc}
\toprule
Gain over \sfg-L (pp), seeds 3--5 & i.i.d. & b.\,0.6 & b.\,0.8 & h.\,0.6 & h.\,0.77 & F.\,rel. & F.\,str. & STUN. & all & w/o i.i.d. \\
\midrule
QP, measured $(a,\eta)$ & +2.18 & +0.09 & +0.13 & $-$0.13 & $-$0.68 & +0.19 & $-$0.17 & $-$0.24 & +0.17 & $-$0.11 \\
QP, measured $a$, $\eta=0$ & +0.25 & +0.04 & $-$0.14 & $-$0.07 & $-$0.42 & +0.13 & $-$0.12 & $-$0.29 & $-$0.08 & $-$0.12 \\
QP, dispersion counted once & +2.98 & +0.31 & +0.15 & $-$0.32 & $-$1.73 & +0.16 & $-$0.18 & $-$0.24 & +0.14 & $-$0.26 \\
QP, constant ($a{=}10$, $\eta{=}0.5$) & \textbf{+3.11} & \textbf{+0.41} & \textbf{+0.22} & $-$0.21 & $-$1.56 & +0.11 & $-$0.05 & $-$0.08 & +0.24 & $-$0.17 \\
cut, $L{=}1$ & +3.02 & +0.40 & +0.19 & +0.27 & $-$0.10 & \textbf{+0.24} & \textbf{$-$0.02} & \textbf{$-$0.02} & \textbf{+0.50} & +0.14 \\
cut, $L{=}2$ & +1.34 & +0.17 & $-$0.07 & \textbf{+0.67} & \textbf{+0.43} & +0.06 & $-$0.05 & $-$0.11 & +0.31 & \textbf{+0.16} \\
\bottomrule
\end{tabular}
}
\end{table}

\paragraph{Where steepness pays.}
\label{app:perprocess}
\Cref{tab:perprocess} gives each steep rule's gain over \sfgl{} by outage process.

\emph{Outside i.i.d.\ outages}, all rules with the three properties, including the oracle, are on average within about 0.3\,pp of
\sfgl{}. Per process, they are within 0.8\,pp, except the steepest rules under heterogeneous rates. The placebo cut loses
0.5--7\,pp.

\emph{Under i.i.d.\ outages}, most aggregations involve a client returning after one or two rounds. There, the steeper rules gain
1--3\,pp more, and so does the oracle of \cref{prop:horizon}. However, four rules exceed outage-free DFedAvg there: \sfgcut{} with
$L=1$, copying the least-lagged candidate, the oracle QP and the oracle GLS at $H\rho=10$. They exceed it by 1.23, 1.39, 1.51 and
0.78\,pp. No rule can do this under \cref{ass:dynamic}, where any outage adds persistent error. Part of this gain therefore lies
outside our staleness model.

The excess is concentrated on CIFAR-100, where \sfgcut{} is $+2.95$\,pp above outage-free DFedAvg, against $+0.27$ on CIFAR-10 and
$+0.48$ on FEMNIST. On CIFAR-100, sparser mixing helps even without outages. Copying a random candidate gains $+2.97$\,pp over
outage-free DFedAvg on CIFAR-100 and loses 0.2\,pp on the other two datasets. \Cref{tab:pilot} gives the seed-0 pilot for the
margin of \sfgcut{}.

\begin{table}[h]
\centering
\caption{\textbf{Gain over \sfgl{} by outage process} (pp, 3 datasets $\times$ 5 seeds). $\pm$: s.e.\ over the 15 dataset--seed
units. b.: bursty. h.: heterogeneous. F.: FLASH. Bold: best rule in each column.}
\label{tab:perprocess}
\scriptsize
\setlength{\tabcolsep}{2pt}
\adjustbox{max width=\linewidth}{%
\begin{tabular}{lcccccccccc}
\toprule
Gain over \sfg-L (pp) & i.i.d. & b.\,0.6 & b.\,0.8 & h.\,0.6 & h.\,0.77 & F.\,rel. & F.\,str. & STUN. & all & w/o i.i.d. \\
\midrule
\sfgcut{}, $L{=}1$ & +2.88\,{\tiny$\pm$0.20} & \textbf{+0.43}\,{\tiny$\pm$0.05} & +0.21\,{\tiny$\pm$0.08} & +0.28\,{\tiny$\pm$0.14} & $-$0.23\,{\tiny$\pm$0.11} & \textbf{+0.31}\,{\tiny$\pm$0.09} & $-$0.01\,{\tiny$\pm$0.06} & +0.01\,{\tiny$\pm$0.08} & \textbf{+0.48}\,{\tiny$\pm$0.04} & +0.14\,{\tiny$\pm$0.03} \\
\sfgcut{}, $L{=}2$ & +1.36\,{\tiny$\pm$0.07} & +0.16\,{\tiny$\pm$0.05} & $-$0.08\,{\tiny$\pm$0.07} & +0.73\,{\tiny$\pm$0.13} & \textbf{+0.35}\,{\tiny$\pm$0.12} & +0.15\,{\tiny$\pm$0.08} & $-$0.03\,{\tiny$\pm$0.06} & $-$0.05\,{\tiny$\pm$0.06} & +0.33\,{\tiny$\pm$0.03} & \textbf{+0.18}\,{\tiny$\pm$0.04} \\
placebo cut (same size, random) & $-$2.03\,{\tiny$\pm$0.09} & $-$2.89\,{\tiny$\pm$0.10} & $-$6.91\,{\tiny$\pm$0.17} & $-$0.45\,{\tiny$\pm$0.25} & $-$2.19\,{\tiny$\pm$0.29} & $-$1.95\,{\tiny$\pm$0.13} & $-$5.74\,{\tiny$\pm$0.27} & $-$5.69\,{\tiny$\pm$0.30} & $-$3.48\,{\tiny$\pm$0.09} & $-$3.69\,{\tiny$\pm$0.11} \\
least-lagged only ($\gamma=0$) & +3.03\,{\tiny$\pm$0.22} & +0.38\,{\tiny$\pm$0.04} & +0.11\,{\tiny$\pm$0.11} & $-$0.12\,{\tiny$\pm$0.16} & $-$1.77\,{\tiny$\pm$0.26} & +0.16\,{\tiny$\pm$0.06} & $-$0.03\,{\tiny$\pm$0.07} & $-$0.02\,{\tiny$\pm$0.08} & +0.22\,{\tiny$\pm$0.06} & $-$0.18\,{\tiny$\pm$0.06} \\
limited merging, counts, $L{=}2$ & +0.44\,{\tiny$\pm$0.08} & +0.03\,{\tiny$\pm$0.05} & +0.02\,{\tiny$\pm$0.07} & \textbf{+0.75}\,{\tiny$\pm$0.18} & +0.29\,{\tiny$\pm$0.10} & +0.09\,{\tiny$\pm$0.07} & \textbf{+0.07}\,{\tiny$\pm$0.05} & $-$0.01\,{\tiny$\pm$0.05} & +0.21\,{\tiny$\pm$0.03} & \textbf{+0.18}\,{\tiny$\pm$0.03} \\
limited merging, counts, $L{=}1$ & +2.06\,{\tiny$\pm$0.14} & +0.23\,{\tiny$\pm$0.05} & +0.13\,{\tiny$\pm$0.08} & +0.49\,{\tiny$\pm$0.18} & $-$0.11\,{\tiny$\pm$0.13} & +0.08\,{\tiny$\pm$0.05} & +0.03\,{\tiny$\pm$0.07} & $-$0.04\,{\tiny$\pm$0.07} & +0.36\,{\tiny$\pm$0.04} & +0.12\,{\tiny$\pm$0.04} \\
\sfg{} on counts & +0.98\,{\tiny$\pm$0.06} & $-$0.03\,{\tiny$\pm$0.05} & +0.03\,{\tiny$\pm$0.08} & +0.35\,{\tiny$\pm$0.18} & $-$0.02\,{\tiny$\pm$0.10} & $-$0.01\,{\tiny$\pm$0.08} & $-$0.00\,{\tiny$\pm$0.05} & \textbf{+0.08}\,{\tiny$\pm$0.06} & +0.17\,{\tiny$\pm$0.02} & +0.06\,{\tiny$\pm$0.03} \\
PWG-L, $H\rho=10$ & +2.33\,{\tiny$\pm$0.11} & +0.21\,{\tiny$\pm$0.10} & +0.06\,{\tiny$\pm$0.09} & $-$0.66\,{\tiny$\pm$0.12} & $-$0.95\,{\tiny$\pm$0.10} & $-$0.04\,{\tiny$\pm$0.07} & $-$0.16\,{\tiny$\pm$0.06} & $-$0.22\,{\tiny$\pm$0.08} & +0.07\,{\tiny$\pm$0.04} & $-$0.25\,{\tiny$\pm$0.04} \\
oracle GLS, $H\rho=3$ & +2.01\,{\tiny$\pm$0.10} & +0.19\,{\tiny$\pm$0.03} & $-$0.44\,{\tiny$\pm$0.12} & +0.09\,{\tiny$\pm$0.09} & $-$0.14\,{\tiny$\pm$0.13} & +0.23\,{\tiny$\pm$0.08} & $-$0.10\,{\tiny$\pm$0.09} & $-$0.05\,{\tiny$\pm$0.10} & +0.22\,{\tiny$\pm$0.04} & $-$0.03\,{\tiny$\pm$0.04} \\
oracle GLS, $H\rho=10$ & +2.43\,{\tiny$\pm$0.10} & +0.38\,{\tiny$\pm$0.04} & $-$0.37\,{\tiny$\pm$0.11} & $-$0.04\,{\tiny$\pm$0.11} & $-$0.38\,{\tiny$\pm$0.16} & +0.27\,{\tiny$\pm$0.08} & $-$0.15\,{\tiny$\pm$0.08} & $-$0.03\,{\tiny$\pm$0.07} & +0.26\,{\tiny$\pm$0.04} & $-$0.05\,{\tiny$\pm$0.04} \\
oracle QP, $H\rho=10$ & \textbf{+3.16}\,{\tiny$\pm$0.12} & +0.34\,{\tiny$\pm$0.04} & +0.04\,{\tiny$\pm$0.10} & $-$0.08\,{\tiny$\pm$0.17} & $-$0.48\,{\tiny$\pm$0.14} & +0.23\,{\tiny$\pm$0.05} & $-$0.09\,{\tiny$\pm$0.07} & $-$0.04\,{\tiny$\pm$0.08} & +0.38\,{\tiny$\pm$0.04} & $-$0.01\,{\tiny$\pm$0.04} \\
sparsemax on lags, $\eta=0.5$ & +2.85\,{\tiny$\pm$0.15} & +0.35\,{\tiny$\pm$0.06} & \textbf{+0.27}\,{\tiny$\pm$0.09} & $-$0.34\,{\tiny$\pm$0.08} & $-$1.07\,{\tiny$\pm$0.16} & +0.13\,{\tiny$\pm$0.08} & +0.02\,{\tiny$\pm$0.06} & $-$0.01\,{\tiny$\pm$0.07} & +0.28\,{\tiny$\pm$0.04} & $-$0.09\,{\tiny$\pm$0.04} \\
\bottomrule
\end{tabular}
}
\end{table}

\begin{table}[h]
\centering
\caption{\textbf{Pilot for the margin of \sfgcut{}} (seed 0, 3 datasets, one run per cell, gain over DFedAvg in pp). The pilot was
run after the test-seed results for $L\in\{1,2\}$ were known. Bold: best rule in each column.}
\label{tab:pilot}
\scriptsize
\setlength{\tabcolsep}{3pt}
\adjustbox{max width=\linewidth}{%
\begin{tabular}{lcccccccccc}
\toprule
Gain over DFedAvg (pp) & i.i.d. & b.\,0.6 & b.\,0.8 & h.\,0.6 & h.\,0.77 & F.\,rel. & F.\,str. & STUN. & all & w/o i.i.d. \\
\midrule
\sfgcut{}, $L{=}0.5$ & \textbf{+5.54} & +3.98 & +7.06 & +0.18 & +0.78 & +3.27 & +4.03 & \textbf{+5.05} & +3.74 & +3.48 \\
\sfgcut{}, $L{=}1$ & +5.07 & +4.00 & \textbf{+7.09} & +1.26 & +1.96 & \textbf{+3.32} & \textbf{+4.11} & +5.02 & \textbf{+3.98} & +3.82 \\
\sfgcut{}, $L{=}2$ & +4.04 & \textbf{+4.40} & +6.61 & +1.92 & +2.13 & +3.06 & +4.09 & +4.86 & +3.89 & \textbf{+3.87} \\
\sfgcut{}, $L{=}3$ & +2.10 & +4.03 & +6.04 & \textbf{+2.09} & \textbf{+2.30} & +2.75 & +4.08 & +4.67 & +3.51 & +3.71 \\
\sfg-L ($\gamma=0.5$) & +2.53 & +3.88 & +6.77 & +1.27 & +1.94 & +3.04 & +4.08 & +4.76 & +3.53 & +3.68 \\
\bottomrule
\end{tabular}
}
\end{table}

\paragraph{Controls for the i.i.d.\ effect.}
\label{app:iid}
\Cref{tab:iidrate,tab:iidlr} test whether the steep rules' excess over outage-free DFedAvg under i.i.d.\ outages is a staleness
effect.

\emph{Across outage rates} (\cref{tab:iidrate}). \sfgcut{} with $L=1$ is above the outage-free reference at $u=0.3$ ($+0.48$\,pp,
11/15 units) and at $u=0.6$ ($+1.23$\,pp). It is far below it at $u=0.8$ ($-3.43$\,pp), where every rule loses 3.4--8.5\,pp. Its
lead over \sfgl{} is largest at $u=0.6$, at $+2.88$\,pp against $+0.83$ at $u=0.3$ and $+0.87$ at $u=0.8$.

\emph{Not a learning-rate artefact} (\cref{tab:iidlr}). We put every rule and the outage-free reference at their best learning rate
per unit among $\{0.02,0.05,0.1\}$. \sfgcut{} still exceeds outage-free DFedAvg by $+1.20$\,pp (12/15 units), with $+0.27$,
$+2.80$ and $+0.53$\,pp on CIFAR-10, CIFAR-100 and FEMNIST. \sfgl{} stays 1.50\,pp below it.

\emph{A mixing effect.} Two further controls identify a mixing effect (\cref{tab:iidmix}).
\emph{Sparse mixing alone gains as much.} Under moderate i.i.d.\ outages, the steep rules average fewer models per round.
The effective number $1/\sum_jw_j^2$ is 1.8 for \sfgcut{}, against 3.5 for DFedAvg and about 8 candidates without outages. Sparse
mixing alone, without any outages and without regard to staleness, gains as much. Averaging two random candidates beats
outage-free DFedAvg by $+1.36$\,pp (14/15 units), with $+1.26$, $+2.51$ and $+0.33$\,pp on the three datasets. The excess of
\sfgcut{} is $+1.23$\,pp.
\emph{A better-personalised reference removes the excess.} With Ditto personal models, which make the reference itself
better personalised, no rule exceeds outage-free Ditto under i.i.d.\ outages. \sfgcut{} is $1.31$\,pp below it (2/15 units), while
it still leads \sfgl{} by 2.3\,pp and DFedAvg by 5.0\,pp. Ditto halves the benefit of sparse mixing without removing it. Two random
candidates with Ditto still beat outage-free Ditto by $+0.64$\,pp (15/15).
These controls support a personalisation effect of sparser mixing as the explanation for the excess over outage-free DFedAvg,
outside our staleness model. The steep rules' lead over other rules under i.i.d.\ outages persists with Ditto, at 2.30\,pp over
\sfgl{} against 2.88\,pp without. It is explained neither by the model nor by the mixing control.

\emph{Gap fractions depend on the reference.} Staleness-free sparse mixing beats outage-free DFedAvg. Relative to outage-free
random-2 mixing, \sfgcut{} closes 46--53\% of the gap, against 56--60\% relative to DFedAvg. For comparison, \sfgl{} closes
38--49\%, limited merging at $L=1$ 43--52\%, \sfgcut{} + IPW 52--56\%, and AoI + IPW 20--24\%.

\begin{table}[h]
\centering
\caption{\textbf{i.i.d.\ outage rate.} Accuracy minus outage-free DFedAvg (pp, 3 datasets $\times$ 5 seeds). Units above the
reference are in parentheses. $u=0.6$ is the main grid. Bold: best rule in each column.}
\label{tab:iidrate}
\small
\adjustbox{max width=\linewidth}{%
\begin{tabular}{lccc}
\toprule
Accuracy minus outage-free DFedAvg (pp) & i.i.d.\ $u{=}0.3$ & $u{=}0.6$ & $u{=}0.8$ \\
\midrule
DFedAvg & $-$1.28 \tiny{(1/15)} & $-$4.22 \tiny{(0/15)} & $-$8.21 \tiny{(0/15)} \\
\sfg-L & $-$0.35 \tiny{(4/15)} & $-$1.65 \tiny{(0/15)} & $-$4.31 \tiny{(0/15)} \\
\sfgcut{}, $L{=}2$ & $-$0.84 \tiny{(2/15)} & $-$0.29 \tiny{(5/15)} & $-$4.03 \tiny{(0/15)} \\
\sfgcut{}, $L{=}1$ & \textbf{+0.48} \tiny{(11/15)} & \textbf{+1.23} \tiny{(11/15)} & \textbf{$-$3.43} \tiny{(0/15)} \\
oracle GLS, $H\rho{=}3$ & +0.45 \tiny{(11/15)} & +0.37 \tiny{(8/15)} & $-$4.30 \tiny{(0/15)} \\
placebo cut & $-$1.00 \tiny{(0/15)} & $-$3.68 \tiny{(0/15)} & $-$8.46 \tiny{(0/15)} \\
\bottomrule
\end{tabular}
}
\end{table}

\begin{table}[h]
\centering
\caption{\textbf{Learning rate under i.i.d.\ outages} ($u=0.6$). Left: accuracy minus outage-free DFedAvg at the same learning
rate. Right: each rule at its best learning rate per unit, against the outage-free reference at its own best (pp, 3 datasets
$\times$ 5 seeds). Units above the reference are in parentheses. Outage-free DFedAvg diverged on one CIFAR-10 seed at learning
rate 0.1. Bold: best rule in each column.}
\label{tab:iidlr}
\small
\setlength{\tabcolsep}{3pt}
\adjustbox{max width=\linewidth}{%
\begin{tabular}{lcccc}
\toprule
i.i.d.\ $u{=}0.6$: accuracy minus outage-free DFedAvg (pp) & lr 0.02 & lr 0.05 & lr 0.1 & best lr vs best lr \\
\midrule
DFedAvg & $-$5.36 \tiny{(0/15)} & $-$4.22 \tiny{(0/15)} & +0.57 \tiny{(1/15)} & $-$3.99 \tiny{(0/15)} \\
\sfg-L & $-$2.62 \tiny{(0/15)} & $-$1.65 \tiny{(0/15)} & +3.05 \tiny{(2/15)} & $-$1.50 \tiny{(0/15)} \\
\sfgcut{}, $L{=}2$ & $-$1.68 \tiny{(1/15)} & $-$0.29 \tiny{(5/15)} & +4.33 \tiny{(10/15)} & $-$0.04 \tiny{(7/15)} \\
\sfgcut{}, $L{=}1$ & \textbf{$-$0.00} \tiny{(8/15)} & \textbf{+1.23} \tiny{(11/15)} & \textbf{+5.36} \tiny{(11/15)} & \textbf{+1.20} \tiny{(12/15)} \\
\midrule
outage-free DFedAvg, accuracy (\%) & 63.5 & 64.5 & 60.0 & \\
\bottomrule
\end{tabular}
}
\end{table}

\begin{table}[h]
\centering
\caption{\textbf{Mixing controls for the i.i.d.\ effect} (3 datasets $\times$ 5 seeds, pp). Units above the reference are in
parentheses. Top: staleness-free random mixing without outages, the last of these rows with Ditto. Bottom: i.i.d.\ $u{=}0.6$ with
and without Ditto personal models ($\lambda=0.2$), each against the outage-free reference with the same backbone.}
\label{tab:iidmix}
\small
\setlength{\tabcolsep}{3pt}
\adjustbox{max width=\linewidth}{%
\begin{tabular}{llcccc}
\toprule
Rule & reference & CIFAR-10 & CIFAR-100 & FEMNIST & mean (units $>0$) \\
\midrule
copy one random candidate (no outages) & outage-free DFedAvg & $-$0.17 & +2.97 & $-$0.23 & +0.86 (10/15) \\
average 2 random candidates (no outages) & outage-free DFedAvg & +1.26 & +2.51 & +0.33 & +1.36 (14/15) \\
average 3 random candidates (no outages) & outage-free DFedAvg & +0.79 & +1.33 & +0.12 & +0.75 (13/15) \\
average 2 random candidates + Ditto (no outages) & outage-free DFedAvg + Ditto & +0.44 & +1.02 & +0.47 & +0.64 (15/15) \\
DFedAvg + Ditto, i.i.d.\ $u{=}0.6$ & outage-free DFedAvg + Ditto & $-$6.41 & $-$9.77 & $-$2.83 & $-$6.34 (0/15) \\
DFedAvg, i.i.d.\ $u{=}0.6$ & outage-free DFedAvg & $-$4.66 & $-$5.08 & $-$2.93 & $-$4.22 (0/15) \\
\sfg-L + Ditto, i.i.d.\ $u{=}0.6$ & outage-free DFedAvg + Ditto & $-$3.73 & $-$5.60 & $-$1.48 & $-$3.61 (0/15) \\
\sfg-L, i.i.d.\ $u{=}0.6$ & outage-free DFedAvg & $-$2.16 & $-$1.66 & $-$1.12 & $-$1.65 (0/15) \\
cut, $L{=}1$ + Ditto, i.i.d.\ $u{=}0.6$ & outage-free DFedAvg + Ditto & $-$1.66 & $-$2.09 & $-$0.17 & $-$1.31 (2/15) \\
cut, $L{=}1$, i.i.d.\ $u{=}0.6$ & outage-free DFedAvg & +0.27 & +2.95 & +0.48 & +1.23 (11/15) \\
cut, $L{=}2$ + Ditto, i.i.d.\ $u{=}0.6$ & outage-free DFedAvg + Ditto & $-$3.40 & $-$4.36 & $-$0.88 & $-$2.88 (1/15) \\
cut, $L{=}2$, i.i.d.\ $u{=}0.6$ & outage-free DFedAvg & $-$1.36 & +0.93 & $-$0.43 & $-$0.29 (5/15) \\
\bottomrule
\end{tabular}
}
\end{table}

\paragraph{The realised objective.}
\label{app:jh}
To evaluate the realised objective, 20 rules log two terms for every aggregation of an available receiver with a live
neighbour: the dispersion $\sum_jw_j^2$, and the exact persistent error $w^\top Gw$ from the tracked matrix $A$. We use seeds 1--2,
3 datasets and all eight processes. \Cref{tab:jh} correlates the per-process mean of each term, relative to DFedAvg, with the
accuracy gain.

\emph{The persistent error separates the staleness-aware rules from the rest.} Over all 20 rules, its Spearman correlation with
the gain is 0.75--0.89 on six processes, and 0.31--0.40 under heterogeneous rates. Dispersion alone does not separate them.

\emph{It does not order the top group.} Among the nine staleness-aware rules without IPW, the correlation is positive only under
i.i.d.\ and short bursty outages (0.82 and 0.60). It is near zero on two processes and negative on four ($-0.92$ to $-0.48$). The
rule with the least persistent error never has the largest gain.

\emph{Why the objective cannot identify $H$.} The persistent differences between rules (median $44\,\delta^2$) dwarf the
dispersion differences (median $0.09\,\sigma^2$). With $\hat\rho\approx0.4$--1.7, the ranking is the same for every
$H\in[1,100]$. The realised objective therefore cannot identify $H$, which is why we measure it on the traces
(\cref{tab:hmeasured}). The largest gains come from rules with IPW, whose link correction lies outside the model.

\begin{table}[h]
\centering
\caption{\textbf{The realised objective separates rules but does not order the top group.} Spearman correlation across rules
between the accuracy gain over DFedAvg and minus the realised persistent error or dispersion, both relative to DFedAvg. All 20
rules, and the nine staleness-aware rules without IPW.}
\label{tab:jh}
\small
\adjustbox{max width=\linewidth}{%
\begin{tabular}{lccccc}
\toprule
 & \multicolumn{2}{c}{all 20 rules} & top 9 & & \\
\cmidrule(lr){2-3}\cmidrule(lr){4-4}
Process & persistent & dispersion & persistent & least persistent error & largest gain \\
\midrule
i.i.d.\ $u{=}0.6$ & +0.89 & $-$0.38 & +0.82 & least-lagged & cut + IPW \\
bursty $u{=}0.6$ & +0.86 & $-$0.22 & +0.60 & least-lagged & \sfg-L + IPW \\
bursty $u{=}0.8$ & +0.79 & $-$0.29 & +0.03 & \sfg{} counts & cut + IPW \\
hetero $u{=}0.6$ & +0.40 & +0.05 & $-$0.70 & least-lagged & \sfg-L + IPW \\
hetero $u{\approx}0.77$ & +0.31 & +0.18 & $-$0.92 & least-lagged & \sfg-L + IPW \\
FLASH relaxed & +0.77 & $-$0.10 & +0.05 & least-lagged & cut + IPW \\
FLASH strict & +0.75 & $-$0.26 & $-$0.57 & oracle & cut + IPW \\
STUNner & +0.79 & $-$0.20 & $-$0.48 & oracle & \sfg-L + IPW \\
\bottomrule
\end{tabular}
}
\end{table}

\paragraph{Self-inclusion on every staleness measure.}
\label{app:selfincl}
\Cref{tab:selfincl} applies the same geometric discount ($\gamma=0.5$) in two ways. Once to neighbours only, relative to the
least-stale live neighbour, with the receiver's weight fixed at 1. Once to every candidate, the receiver included.
Including the receiver adds 1.5--2.5\,pp on ages, lags and counts alike.
Pinning the receiver's weight to 1 under \sfgl{}'s own reference, which changes nothing else, costs 1.76\,pp.
Inheriting staleness helps self-inclusive rules far more ($+1.0$ to $+1.2$\,pp over ages) than neighbour-only ones ($+0.2$
to $+0.5$\,pp). This is expected when the receiver's own inherited lag is what goes unpriced.

\begin{table}[h]
\centering
\caption{\textbf{Self-inclusion $\times$ staleness measure} on the main grid (gain over DFedAvg in pp). The last column gives the
paired difference with stratified $t$-interval and wins over 120 runs. Restricted: the discount applied to neighbours only (first
three rows), the receiver's weight pinned at 1, or the receiver never dropped by the cut. The self-inclusive column is the
unrestricted rule. Bold: the better variant in each row.}
\label{tab:selfincl}
\small
\adjustbox{max width=\linewidth}{%
\begin{tabular}{lccc}
\toprule
Staleness measure & restricted & self-inclusive (\sfg{}) & difference [95\% CI], wins \\
\midrule
ages & +0.94 & \textbf{+2.44} & +1.50 [1.42, 1.59], 118/120 \\
effective lags (mean-inherited) & +1.48 & \textbf{+3.49} & +2.01 [1.91, 2.11], 117/120 \\
update counts (max-inherited) & +1.14 & \textbf{+3.66} & +2.52 [2.38, 2.66], 119/120 \\
effective lags, receiver pinned at 1 (same reference) & +1.73 & \textbf{+3.49} & +1.76 [1.65, 1.86], 118/120 \\
\sfgcut{} ($L{=}1$), receiver never dropped & +1.90 & \textbf{+3.97} & +2.07 [1.93, 2.20], 119/120 \\
\bottomrule
\end{tabular}
}
\end{table}

\paragraph{Learning rate.}
\label{app:lr}
\Cref{tab:lr} re-tunes the learning rate of DFedAvg, AoI + IPW, \sfg{} and \sfgl{} separately, over $\{0.02,0.05,0.1,0.2\}$ on four
processes. It compares each method at its own best learning rate.

\begin{table}[h]
\centering
\caption{\textbf{Per-method learning-rate tuning} (bursty $u{=}0.8$, heterogeneous $u{=}0.6$, FLASH strict, STUNner, seeds 1--3).
Best lr: the learning rate with the highest mean accuracy over the four processes, chosen per method and dataset. Gain: paired
difference to DFedAvg at its own best learning rate (pp). Bold: best gain in each row.}
\label{tab:lr}
\small
\setlength{\tabcolsep}{4pt}
\adjustbox{max width=\linewidth}{%
\begin{tabular}{l cc cc cc cc}
\toprule
 & \multicolumn{2}{c}{DFedAvg} & \multicolumn{2}{c}{AoI + IPW} & \multicolumn{2}{c}{\sfg{}} & \multicolumn{2}{c}{\sfg-L} \\
\cmidrule(lr){2-3}\cmidrule(lr){4-5}\cmidrule(lr){6-7}\cmidrule(lr){8-9}
Dataset & best lr & acc. & best lr & gain & best lr & gain & best lr & gain \\
\midrule
CIFAR-10 & 0.05 & 65.2 & 0.05 & +1.64 & 0.05 & +2.35 & 0.05 & \textbf{+4.31} \\
CIFAR-100 & 0.1 & 24.2 & 0.1 & +1.89 & 0.1 & +2.47 & 0.05 & \textbf{+4.24} \\
FEMNIST & 0.1 & 79.8 & 0.1 & +1.50 & 0.1 & +1.98 & 0.1 & \textbf{+3.69} \\
\midrule
Mean gain (wins) & & & & +1.68 (36/36) & & +2.27 (36/36) & & \textbf{+4.08} (36/36) \\
\bottomrule
\end{tabular}
}
\end{table}

\paragraph{Accuracy over time and offline clients.}
\label{app:time}
The main results evaluate every client, available or not, after the last round. On the real traces, this snapshot falls at about
02:00 local time. Depending on the process, 57--81\% of the clients are offline at that moment (\cref{tab:time}), so they are
evaluated with frozen models. \Cref{tab:time} therefore also averages accuracy over the last 100 rounds and over the whole run.

The ranking is stable. The steep rules on inherited staleness (limited merging with $L=2$, \sfg{} on counts, \sfgl{}) gain
3.5--3.8\,pp under every summary, ahead of every rule on ages. With IPW, they gain 4.1--4.3\,pp. DFL-AA and limited merging with
$L=10$ lose a little over time. \sfg{}, which trails the latter at the final round, therefore overtakes it.

\begin{table}[h]
\centering
\caption{\textbf{Accuracy over time.} Gain over DFedAvg (pp) on the main grid at the final round, averaged over the evaluations in
the last 100 rounds (rounds 210--300), and averaged over all evaluations (every 10 rounds). At the final evaluation, 57--60\% of
clients are offline under the $u=0.6$ processes and FLASH relaxed, and 75--81\% under the others. Bold: best value in each column, separately for rules without and with catch-up training (extra computation).}
\label{tab:time}
\small
\adjustbox{max width=\linewidth}{%
\begin{tabular}{lccc}
\toprule
Rule & final round & mean over rounds 210--300 & mean over all rounds \\
\midrule
AoI + IPW & +1.77 & +1.76 & +1.83 \\
DFL-AA & +1.51 & +1.47 & +1.39 \\
Limited merging, $L{=}10$ & +2.55 & +2.46 & +2.41 \\
Limited merging, $L{=}2$ & +3.70 & +3.68 & +3.81 \\
\sfg{} weights on counts & +3.66 & +3.65 & +3.84 \\
\sfg{} weights on counts + IPW & +4.27 & +4.19 & +4.15 \\
\sfg{} & +2.44 & +2.51 & +2.83 \\
\sfg{} + IPW & +2.97 & +2.97 & +3.08 \\
\sfg-L & +3.49 & +3.52 & +3.71 \\
\sfg-L + IPW & +4.21 & +4.15 & +4.06 \\
\sfgcut{}, $L{=}1$ & +3.97 & +3.92 & +4.15 \\
\sfgcut{} + IPW & \textbf{+4.39} & \textbf{+4.34} & \textbf{+4.43} \\
\sfg-L + catch-up & +4.65 & +4.80 & +5.69 \\
\sfgcut{} + catch-up & \textbf{+5.06} & \textbf{+5.18} & \textbf{+6.07} \\
\bottomrule
\end{tabular}
}
\end{table}

\paragraph{Who benefits.}
\label{app:quartiles}
\Cref{tab:quartiles} splits the gain over DFedAvg by the client's availability quartile within each run. It covers all processes
with heterogeneous availability.

\begin{table}[h]
\centering
\caption{\textbf{Gain by availability quartile} (pp against DFedAvg). Quartiles are of per-client availability within each run.
Heterogeneous rates: 2 processes $\times$ 3 datasets $\times$ 5 seeds. Real traces: 3 $\times$ 3 $\times$ 5. Bold: best rule in each column.}
\label{tab:quartiles}
\small
\setlength{\tabcolsep}{4pt}
\adjustbox{max width=\linewidth}{%
\begin{tabular}{l cccc cccc}
\toprule
 & \multicolumn{4}{c}{Heterogeneous rates ($u{=}0.6$, $0.77$)} & \multicolumn{4}{c}{Real traces (FLASH, STUNner)} \\
\cmidrule(lr){2-5}\cmidrule(lr){6-9}
Rule & least & Q2 & Q3 & most & least & Q2 & Q3 & most \\
\midrule
AoI & $-$0.18 & $-$0.01 & +0.23 & +1.12 & +0.96 & +1.07 & +1.30 & +1.67 \\
AoI + IPW & +1.33 & +1.38 & +1.51 & +1.83 & +1.18 & +1.27 & +1.35 & +1.52 \\
DFL-AA & +1.46 & +1.60 & +1.75 & +1.65 & +0.84 & +0.75 & +0.74 & +0.99 \\
Limited, $L{=}2$ & +1.83 & +1.92 & +2.12 & +2.47 & +3.79 & +3.92 & +3.85 & +3.98 \\
\sfg{} on counts & +1.25 & +1.46 & +1.88 & +2.32 & +3.74 & +3.96 & +3.83 & +3.90 \\
\sfg{} on counts + IPW & +2.80 & +2.55 & \textbf{+3.01} & +3.12 & +4.29 & +4.36 & \textbf{+4.20} & \textbf{+4.33} \\
\sfg{} & +1.27 & +1.53 & +1.75 & +2.48 & +1.73 & +2.00 & +2.17 & +2.35 \\
\sfg{} + IPW & +2.50 & +2.64 & +2.90 & \textbf{+3.21} & +2.11 & +2.28 & +2.41 & +2.48 \\
\sfg-L & +1.35 & +1.30 & +1.61 & +1.99 & +3.74 & +3.93 & +3.77 & +3.91 \\
\sfg-L + IPW & \textbf{+3.19} & \textbf{+2.78} & +2.99 & +3.15 & +4.28 & \textbf{+4.42} & +4.13 & +4.15 \\
\sfgcut{} & +1.21 & +1.34 & +1.71 & +2.08 & +3.92 & +3.92 & +3.92 & +3.98 \\
\sfgcut{} + IPW & +2.19 & +2.17 & +2.40 & +2.61 & \textbf{+4.45} & +4.34 & +4.17 & +4.21 \\
\bottomrule
\end{tabular}
}
\end{table}

\paragraph{Six days of STUNner.}
\label{app:week}
\Cref{tab:week} repeats the comparison on a six-day STUNner window with 30-minute rounds (Appendix~\ref{app:details}). Outages
there last up to 143 hours. The self-inclusive rules on inherited staleness keep, and extend, their lead over age-based rules.

\begin{table}[h]
\centering
\caption{\textbf{STUNner, six days} (300 rounds of 30 minutes, 3 datasets $\times$ 5 seeds). Mean accuracy (\%), gain over DFedAvg
(pp) and paired wins. Bold: best value in each column, separately for rules without and with catch-up training (extra computation).}
\label{tab:week}
\small
\adjustbox{max width=\linewidth}{%
\begin{tabular}{lccccc}
\toprule
Rule & CIFAR-10 & CIFAR-100 & FEMNIST & Gain (pp) & Wins \\
\midrule
Local only & 62.5 & 21.9 & 72.3 & $-$2.31 & 1/15 \\
DFedAvg & 64.3 & 22.3 & 77.1 & -- & -- \\
Model memory & 49.7 & 10.9 & 24.9 & $-$26.09 & 0/15 \\
Update amplification & 64.1 & 22.5 & 76.7 & $-$0.12 & 8/15 \\
Gossip-learning model age & 66.0 & 24.1 & 79.2 & +1.90 & 15/15 \\
Limited merging, $L{=}2$ & 69.2 & 28.2 & 82.5 & +5.41 & 15/15 \\
Limited merging, $L{=}5$ & 69.0 & 28.0 & 82.2 & +5.19 & 15/15 \\
Limited merging, $L{=}10$ & 68.9 & 27.6 & 82.2 & +5.02 & 15/15 \\
Limited merging, $L{=}50$ & 66.9 & 25.3 & 80.0 & +2.83 & 15/15 \\
\sfg{} weights on counts & 69.2 & 28.2 & 82.4 & +5.38 & 15/15 \\
\sfg{} weights on counts + IPW & \textbf{69.6} & 28.5 & \textbf{82.8} & +5.76 & 15/15 \\
Relative-variance combiner & 62.5 & 21.2 & 73.5 & $-$2.13 & 0/15 \\
AoI & 65.3 & 23.5 & 78.0 & +1.01 & 15/15 \\
DFL-AA & 64.8 & 23.1 & 77.5 & +0.59 & 14/15 \\
AoI + IPW & 65.3 & 23.8 & 78.3 & +1.26 & 15/15 \\
PWG (relative ages) & 66.1 & 24.3 & 79.1 & +1.96 & 15/15 \\
Freshest only ($\gamma{=}0$) & 66.6 & 24.7 & 79.1 & +2.23 & 15/15 \\
Reset on return & 65.1 & 23.2 & 78.0 & +0.90 & 15/15 \\
\sfg{} & 66.6 & 24.9 & 79.2 & +2.34 & 15/15 \\
PWG-L & 68.5 & 26.8 & 81.6 & +4.41 & 15/15 \\
Least-lagged only & 69.1 & 28.1 & 82.3 & +5.28 & 15/15 \\
\sfg-L & 69.2 & 28.0 & 82.4 & +5.30 & 15/15 \\
\sfg-L + IPW & 69.5 & 28.6 & 82.6 & +5.67 & 15/15 \\
\sfgcut{}, $L{=}1$ & 69.2 & 28.1 & 82.3 & +5.31 & 15/15 \\
\sfgcut{}, $L{=}2$ & 69.1 & 28.0 & 82.3 & +5.26 & 15/15 \\
\sfgcut{} + IPW & \textbf{69.6} & \textbf{28.7} & 82.7 & \textbf{+5.79} & 15/15 \\
Catch-up & 65.4 & 23.7 & 78.4 & +1.28 & 15/15 \\
\sfg-L + catch-up & \textbf{69.9} & \textbf{28.8} & \textbf{83.1} & \textbf{+6.02} & 15/15 \\
\bottomrule
\end{tabular}
}
\end{table}

\end{document}